\documentclass[11pt]{article}
\usepackage[letterpaper, margin=1in]{geometry}

\usepackage{tikz}
\usetikzlibrary{positioning, fit, calc, shapes.geometric, arrows.meta, backgrounds,
  shadows, decorations.pathreplacing}
\usetikzlibrary{automata}
\usepackage{pgfplots}
\pgfplotsset{compat=1.18} % Or compat=newest

\usepackage[utf8]{inputenc} % allow utf-8 input
\usepackage[T1]{fontenc}    % use 8-bit T1 fonts
\usepackage{hyperref}       % hyperlinks
\usepackage{url}            % simple URL typesetting
\usepackage{booktabs}       % professional-quality tables
\usepackage{amsfonts}       % blackboard math symbols
\usepackage{nicefrac}       % compact symbols for 1/2, etc.
\usepackage{microtype}      % microtypography
\usepackage{xcolor}         % colors
\usepackage{graphicx}

\usepackage{subcaption}
\usepackage{stmaryrd}

\usepackage{algorithm}
\usepackage{algorithmic}

\usepackage{amsmath}
\usepackage{amssymb}
\usepackage{mathtools}
\usepackage{amsthm}

\usepackage{times}
\usepackage[capitalize,noabbrev]{cleveref}
\usepackage{natbib}

\usepackage{dsfont}
\usepackage[mathscr]{euscript}
\usepackage{expl3}
\usepackage{colortbl}
\usepackage{thmtools}
\usepackage{thm-restate}
\usepackage{multirow}
\usepackage{wrapfig}
\usepackage{lipsum}
\usepackage{accents}

\usepackage{mathpazo}
\usepackage{pifont}% http://ctan.org/pkg/pifont
\newcommand{\cmark}{\ding{51}}%
\newcommand{\xmark}{\ding{55}}%

\usepackage{MnSymbol}
\DeclareMathAlphabet\mathbb{U}{msb}{m}{n}
\usepackage{xpatch}

\def\Kset{\mathbb{K}}

\def\Nset{\mathbb{N}}
\def\Zset{\mathbb{Z}}
\def\Rset{\mathbb{R}}

\def\Hset{\mathbb{H}}

\let\P\undefined

\DeclareMathOperator*{\P}{\mathbb{P}}
\DeclareMathOperator*{\E}{\mathbb E}
\DeclareMathOperator*{\argmax}{argmax}

\DeclareMathOperator{\Ind}{\mathbb{I}}

\DeclarePairedDelimiter{\bracket}{[}{]}

\DeclarePairedDelimiter{\paren}{(}{)}
\DeclarePairedDelimiter{\norm}{\|}{\|}

\ExplSyntaxOn

\tl_const:Nn \c_my_uc_alphabet_tl { ABCDEFGHIJKLMNOPQRSTUVWXYZ }
\tl_const:Nn \c_my_full_alphabet_tl { ABCDEFGHIJKLMNOPQRSTUVWXYZ
  abcdefghijklmnopqrstuvwxyz }

\tl_map_inline:Nn \c_my_uc_alphabet_tl
 { \cs_gset:cpn { c #1 } { \mathcal{#1} } }

\tl_map_inline:Nn \c_my_uc_alphabet_tl
 { \cs_gset:cpn { s #1 } { \mathscr{#1} } }

\tl_map_inline:Nn \c_my_full_alphabet_tl
 {
  \cs_gset:cpn { b #1 } { \mathbf{#1} }
  \cs_gset:cpn { sf #1 } { \mathsf{#1} }
 }

\clist_const:Nn \c_my_greek_alphabet_clist
  {
    alpha, beta, gamma, delta, epsilon, zeta, eta, theta, iota, kappa,
    lambda, mu, nu, xi, pi, rho, sigma, tau, upsilon, phi, chi, psi, omega,
    varepsilon, vartheta, varpi, varrho, varsigma, varphi,
    Gamma, Delta, Theta, Lambda, Xi, Pi, Sigma, Upsilon, Phi, Psi, Omega
  }

\clist_map_inline:Nn \c_my_greek_alphabet_clist
  {
    \str_if_eq:nnTF { #1 } { eta }
      {
        \cs_gset:cpx { boldeta } 
          { \exp_not:N \boldsymbol { \exp_not:c { #1 } } }
      }
      {
        \cs_gset:cpx { b #1 } 
          { \exp_not:N \boldsymbol { \exp_not:c { #1 } } }
      }
  }

\ExplSyntaxOff

\newcommand{\Rad}{\mathfrak R}
\newcommand{\h}{\widehat}

\newcommand{\wt}{\widetilde}
\newcommand{\e}{\epsilon}
\newcommand{\ignore}[1]{}

\newcommand{\AC}{\mathsf{AC}}
\newcommand{\NC}{\mathsf{NC}}
\newcommand{\PNC}{\mathsf{PNC}}
\newcommand{\TC}{\mathsf{TC}}
\newcommand{\FO}{\mathsf{FO}}
\newcommand{\MSO}{\mathsf{MSO}}

\hypersetup{
  breaklinks   = true, %splits links across lines
  colorlinks   = true, %Colours links instead of ugly boxes
  urlcolor     = blue, %Colour 
  linkcolor    = blue, %Colour of internal links
  citecolor   = blue %Colour of citations
}

\usepackage[toc, page, header]{appendix}
\declaretheorem{theorem}
\newtheorem{lemma}[theorem]{Lemma} 
\newtheorem{proposition}[theorem]{Proposition} 
\newtheorem{corollary}[theorem]{Corollary}
\newtheorem{definition}[theorem]{Definition}

\title{Rational Transductors}
\author{Mehryar Mohri\\
  \small{mohri@google.com}\\
  \small{Google Research}}
\date{}

\allowdisplaybreaks

\begin{document}

\maketitle

\tableofcontents
\clearpage

\begin{abstract}
  Standard Transformers excel at semantic modeling but struggle with
  rigid sequential logic and state tracking. Theoretical work
  establishes that self-attention is limited to $\AC^0$ (under hard
  attention) or $\TC^0$ (under soft attention), complexity classes
  that often fail to support robust length generalization on
  sequential problems without intermediate chain-of-thought.  In this
  work, we introduce \emph{Rational Transductors}, a dual-stream
  architecture that augments the Transformer with a matrix-valued
  recurrence derived from Weighted Finite Automata (WFA).  By
  injecting rational state information into the attention mechanism
  via a \emph{Deep Rational Injection} scheme, our framework strictly
  generalizes the expressive power of Transformers to capture all
  Regular Languages, $\NC^1$-complete problems (such as Boolean
  Formula Evaluation), and fundamental separations like Parity and
  Modular Counting, while preserving $O(L + \log T)$ parallel time
  complexity.  We ground the architecture in a rigorous learning
  theory: we prove that \emph{Random Rational Features} act as a
  universal basis for sequential dependencies, justifying our
  initialization strategy, while establishing that the
  \emph{Differentiable Rational Feature} regime is necessary to close
  the representational compactness gap.  Theoretical analysis and
  empirical results demonstrate that Rational Transductors solve the
  "Regular Gap," enabling robust length generalization on algorithmic
  tasks where standard Transformers fail, without the sequential
  computational bottlenecks of traditional RNNs.
\end{abstract}

\section{Introduction}

The Transformer architecture \citep{vaswani2017attention} has
revolutionized sequence modeling, establishing itself as the de facto
standard for natural language processing, code generation, and beyond.
Its success is largely attributed to the self-attention mechanism
\citep{schmidhuber1992learning,graves2013generating,bahdanau2014neural,
  luong2015effective}, which models long-range semantic dependencies
by allowing every token to interact directly with every other token.
However, this semantic power comes with a well-documented blind spot:
standard Transformers struggle with rigid \emph{sequential logic} and
\emph{state tracking}. Theoretical analyses have shown that
self-attention---without intermediate recurrence or
chain-of-thought---is limited to $\AC^0$ (under hard attention)
\citep{hahn2020theoretical} or $\TC^0$ (under soft attention)
\citep{merrill2024transformers}, complexity classes that struggle to
represent unbounded sequential dependencies uniformly
\citep{huang2025formal,merrill2022saturated}.  While $\TC^0$ models
can theoretically approximate tasks like parity, they lack the
inductive bias to learn state-tracking solutions that generalize to
unseen lengths. Specifically, standard Transformers often fail to
\emph{learn} robust solutions for tasks outside the C-RASP fragment
\citep{yang2025kneedeep,huang2025formal}, such as modular counting. In
practice, this manifests as brittleness: models trained on short
contexts frequently fail to maintain consistent state (e.g., tracking
variable values or nested brackets) when deployed on longer sequences.

To address these limitations, a resurgence of interest in Recurrent
Neural Networks (RNNs) and State Space Models (SSMs) has emerged
\citep{gu2021efficiently, gu2023mamba, smith2022simplified}.  These
architectures reintroduce a latent state that evolves over time,
theoretically enabling infinite context tracking.  However, they often
face a structural trade-off: simple time-invariant recurrences (like
S4) are efficient but lack expressivity. While recent selective state
space models (like Mamba) introduce token-dependence to bridge this
gap, they typically do so by interleaving recurrence into the deep
backbone, which reintroduces layer-wise sequential dependencies during
inference or training. In contrast, simple gated RNNs (like LSTM)
remain fundamentally sequential.  Unlike interleaved architectures
that incur an $O(L \log T)$ training bottleneck due to sequential
layer-wise recurrences, Rational Transductors decouple state-tracking
from feature mixing. This allows the entire rational state history to
be pre-computed via a single parallel scan, reducing the total
parallel depth to $O(L + \log T)$.

In this work, we argue that the dichotomy between "Attention"
(semantics) and "Recurrence" (syntax) is a false one.  We introduce
\emph{Rational Transductors}, a dual-stream architecture that unifies
the semantic flexibility of Transformers with the rigorous
state-tracking capabilities of Weighted Finite Automata (WFA).  Our
approach is grounded in the formal theory of Rational Power Series
\citep{Schutzenberger1961}, which provides the mathematical foundation
for regular languages and their quantitative generalizations.

We argue that the failure of Transformers to generalize on algorithmic
tasks is not due to a lack of capacity, but a lack of \emph{syntactic
  inductive bias}.  To correct this, we augment the Transformer with a
\emph{Rational Feature Head}, a matrix-valued recurrence that acts as
a dedicated co-processor for sequential logic.  Crucially, unlike
standard RNNs that use non-linear activations ($\tanh$, sigmoid), our
rational states evolve via linear matrix multiplication.
This design choice yields two decisive advantages:
\begin{enumerate}

\item Parallel Scalability: The linear recurrence
  $h_t = \sfM_{x_t} h_{t-1}$ can be computed via parallel associative
  scans (prefix sums) in $O(\log T)$ time \citep{blelloch1990prefix},
  bypassing the sequential bottleneck that plagues traditional RNNs.

\item Theoretical Transparency: The state dynamics correspond exactly
  to Weighted Finite Automata \citep{mohri2009weighted}, allowing us
  to leverage decades of formal language theory, such as the
  Schützenberger representation theorem \citep{Schutzenberger1961} and
  Fliess' Theorem \citep{Fliess1974}, to prove guarantees on
  expressivity and generalization that are impossible for black-box
  RNNs.

\end{enumerate}

We ground this architecture in a rigorous learning theory
(Section~\ref{sec:learning_theory}).  First, analyzing the
\emph{Random Rational Feature} limit
(Section~\ref{subsec:random_universality}), we prove that a
sufficiently wide, randomly initialized rational head acts as a
universal basis for sequential dependencies.  This theoretical result
justifies our use of \emph{near-identity initialization} to capture
long-term context from the start of training.  However, we show that
random features are exponentially inefficient for precise algorithmic
tasks.  Therefore, our primary contribution is the
\emph{Differentiable Rational Feature} regime, where the transition
matrices are learned end-to-end.  We prove that this learned regime
strictly expands the expressivity of Transformers to capture all
Regular Languages, $\NC^1$-complete problems (such as Boolean Formula
Evaluation), and fundamental separations like Parity and Modular
Counting, while maintaining numerical stability through novel spectral
parameterizations (Section~\ref{subsec:optimization_dynamics}).

\textbf{Related Work.} Our work lies at the intersection of three
active research streams:
\begin{itemize}

\item \textbf{State Space Models (SSMs) and Linear RNNs}: Recent
  advances in efficient sequence modeling, including S4
  \citep{gu2021efficiently}, Mamba \citep{gu2023mamba}, RWKV
  \citep{peng2023rwkv}, DeltaNet \citep{schlag2021linear}, and Kimi
  Linear \citep{kimilinear2025}, rely on linearizing state updates to
  enable parallel training. While these architectures are often
  grounded in signal processing, our framework provides a
  complementary automata-theoretic characterization. Although existing
  literature has established complexity bounds and formal language
  mappings for certain linear recurrences
  \citep{merrill2022saturated}, the Rational Transductor framework
  formalizes this class by providing a rigorous algebraic completion
  via the Krohn-Rhodes decomposition. This allows us to prove that
  such linear recurrences are sufficient to solve the "Regular Gap"
  and capture $\NC^1$-complete reasoning while maintaining a decoupled
  sidecar design. Furthermore, unlike "Deep SSMs" (e.g., Mamba, H3)
  which interleave recurrence and mixing layers—thereby reintroducing
  layer-wise sequential dependencies during training—Rational
  Transductors adopt a "Sidecar" design. By keeping the recurrent
  state evolution strictly input-driven, we decouple state tracking
  (WFA) from feature mixing (Transformer), guaranteeing an optimal
  $O(L + \log T)$ parallel depth without the iterative approximations
  or serialization inherent to stacked SSMs.

\item \textbf{Expressivity of Transformers:}
  \citet{hahn2020theoretical} and \citet{merrill2024transformers}
  established the $\AC^0$ and $\TC^0$ upper bounds for Transformers
  (depending on attention hardness), highlighting their inability to
  robustly model sequential state.  Our work provides a constructive
  proof that augmenting attention with linear recurrence is sufficient
  to break this barrier and capture $\NC^1$.

\item \textbf{Spectral Learning of Automata:} We draw inspiration from
  spectral learning algorithms for WFAs \citep{BalleMohri2015},
  effectively embedding a spectral extraction mechanism directly into
  the deep learning optimization loop.
\end{itemize}

By bridging the gap between deep learning and automata theory,
Rational Transductors offer a principled path toward foundation models
that are not only semantically fluent but also syntactically robust.

\textbf{Paper Organization.}  The remainder of this paper is organized
as follows.  Section~\ref{sec:rational_features} formally defines the
Rational Transductor architecture, detailing the WFA formalism, the
structured parameterization of transitions, and the Deep Rational
Injection method.  Section~\ref{sec:motivation} motivates the
architecture by analyzing the complementary strengths of attention and
rational recurrences.  Section~\ref{sec:expressivity} provides a
rigorous theoretical analysis of the model's expressivity in the
learnable regime, proving it solves the "Regular Gap."
Section~\ref{sec:learning_theory} analyzes the learning theory of the
model, establishing results on the universality of random features,
optimization stability, and generalization bounds.
Section~\ref{sec:training_recipe} outlines the concrete training
recipe, including the parallel scan algorithm and stability-enforcing
normalizations.  Section~\ref{sec:experiments} presents empirical
validation, and Section~\ref{app:theoretical_background} reviews
foundational theory.

\section{Rational Features Framework}
\label{sec:rational_features}

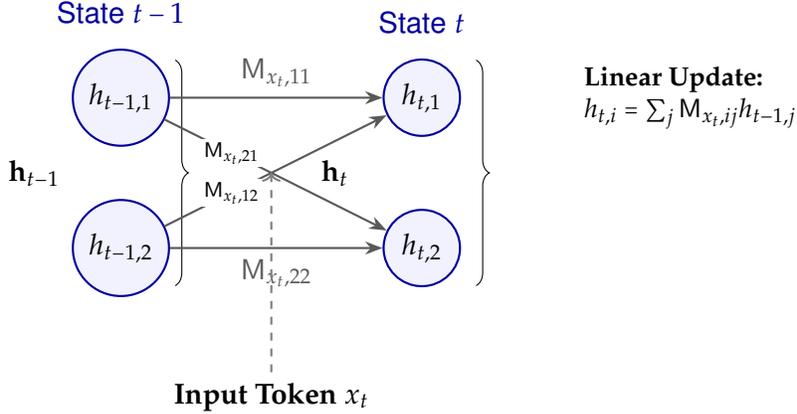
\begin{figure}[t]
    \centering
        \begin{tikzpicture}[
        scale=1.0, transform shape,
        font=\sffamily,
        state_node/.style={circle, draw=blue!60!black, fill=blue!5, thick, minimum size=1.0cm},
        transition/.style={->, >=Stealth, thick, color=gray!70!black},
        weight_label/.style={font=\scriptsize, color=black, fill=white, inner sep=1pt}
    ]

    % --- TIME T-1 ---
    \node (h_prev_1) [state_node] at (0, 2) {$h_{t-1, 1}$};
    \node (h_prev_2) [state_node] at (0, 0) {$h_{t-1, 2}$};
    \node [above=0.2cm of h_prev_1, color=blue!60!black] {State $t-1$};
    % Vector Brace
    \draw [decorate,decoration={brace,amplitude=5pt,mirror,raise=4ex}]
      (0,-0.5) -- (0,2.5) node[midway,xshift=-3em] {$\bh_{t-1}$};
    % --- TIME T ---
    \node (h_curr_1) [state_node] at (4, 2) {$h_{t, 1}$};
    \node (h_curr_2) [state_node] at (4, 0) {$h_{t, 2}$};
    \node [above=0.2cm of h_curr_1, color=blue!60!black] {State $t$};
    % Vector Brace
    \draw [decorate,decoration={brace,amplitude=5pt,mirror,raise=4ex}]
      (4,-0.5) -- (4,2.5) node[midway,xshift=-3em] {$\bh_{t}$};
    % --- TRANSITIONS ---
    % \sfM_{x_t, 11}
    \draw[transition] (h_prev_1) -- (h_curr_1) 
        node[midway, above] {$\sfM_{x_t, 11}$};
    % \sfM_{x_t, 22}
    \draw[transition] (h_prev_2) -- (h_curr_2) 
        node[midway, below] {$\sfM_{x_t, 22}$};
    % \sfM_{x_t, 21} (Cross)
    \draw[transition] (h_prev_2) -- (h_curr_1) 
        node[midway, pos=0.3, weight_label] {$\sfM_{x_t, 12}$};
    % \sfM_{x_t, 12} (Cross)
    \draw[transition] (h_prev_1) -- (h_curr_2) 
        node[midway, pos=0.3, weight_label] {$\sfM_{x_t, 21}$};
    % --- ANNOTATION ---
    \node [right=1.5cm of h_curr_1, align=left, font=\small] (formula) {
        \textbf{Linear Update:}\\
        $h_{t,i} = \sum_{j} \sfM_{x_t, ij} h_{t-1,j}$
    };
    % Input Token Label
    \node [below=1.0cm of h_prev_2, xshift=2cm, font=\bfseries] (input) {Input Token $x_t$};
    \draw[->, dashed, thick, color=gray] (input) -- (2, 1);

    \end{tikzpicture}
    \caption{Visualizing the Rational State Update.  The hidden state
      vector $\bh_t$ (right) is computed as a linear transformation of
      the previous state $\bh_{t-1}$ (left).  Each component $h_{t,i}$
      aggregates the weighted paths from the previous step,
      illustrating the "sum of paths" definition.}
    \label{fig:wfa_hidden_states}
\end{figure}

\subsection{Weighted Automata}

We view state tracking through the lens of Weighted Finite Automata
(WFA). Formally, a WFA over the field of real numbers $\Rset$ is
defined as a tuple 
$\sA = (\Sigma, d, \balpha, \{\sfM_\sigma\}_{\sigma \in \Sigma})$, where:
\begin{itemize}
\item $\Sigma$ is the finite alphabet of tokens.
\item $d \in \Nset$ is the dimension of the state space (number of states).
\item $\balpha \in \Rset^d$ is the initial state vector.
\item $\sfM_\sigma \in \Rset^{d \times d}$ is the transition matrix
  associated with token $\sigma$.
\end{itemize}
Given a sequence of input tokens $x = (x_1, \dots, x_T)$, a standard
WFA computes a scalar value. However, for the purpose of feature
extraction, we are interested in the sequence of vectors
$\bh_t \in \Rset^d$ (hidden states) produced by the automaton, which
is defined as follows:
\begin{equation}
  \label{eq:linear-recurrence}
  \bh_t = \sfM_{x_t} \bh_{t - 1}, \quad \text{with } \bh_0 = \balpha.
\end{equation}
The $i$-th component $h_{t, i}$ represents the sum of the weights of
all paths labeled with the prefix $x_{1:t}$ that end at state $i$,
weighted by the initial values in $\balpha$ (see
Figure~\ref{fig:wfa_hidden_states}).
We extend the definition of the matrices $\sfM_\sigma$ to sequences
using the shorthand $\sfM_x = \sfM_{x_T} \cdots \sfM_{x_1}$, which
allows us to write the state after processing the full sequence as
$\bh_T = \sfM_x \balpha$.

\textbf{Omission of $\bbeta$.} In classical automata theory, a WFA
also includes a final weight vector $\bbeta \in \Rset^d$ to map the
final state to a scalar output. We omit $\bbeta$ in our definition as
we are only interested in the sequence of intermediate vectors in
$\Rset^d$ (hidden states).

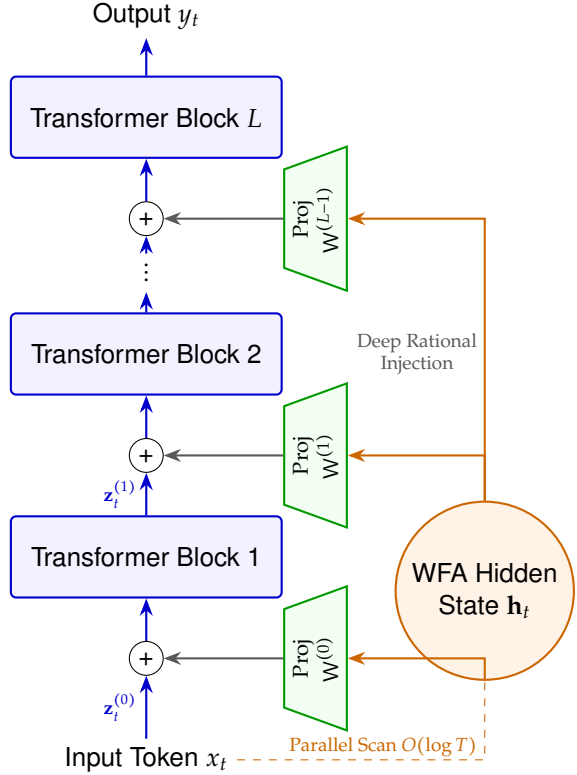
\begin{figure}[t]
    \centering
        \begin{tikzpicture}[
        scale=0.9, transform shape,
        font=\sffamily,
        % Style for the Transformer Blocks
        tf_block/.style={
            rectangle, 
            draw=blue!80!black, 
            fill=blue!5, 
            thick, 
            minimum width=4cm, 
            minimum height=1.2cm, 
            rounded corners=2pt,
            align=center
        },
        % Style for the WFA Head (Circle)
        wfa_node/.style={
            circle, 
            draw=orange!80!black, 
            fill=orange!10, 
            thick, 
            minimum size=2.0cm, 
            align=center
        },
        % Style for Projections (Trapezoids or Boxes)
        proj_node/.style={
            trapezium, 
            trapezium angle=70, 
            draw=green!60!black, 
            fill=green!5, 
            thick, 
            minimum height=0.8cm, 
            text width=1.2cm, 
            align=center,
            font=\footnotesize
        },
        % Style for Summation
        sum_node/.style={
            circle, 
            draw, 
            fill=white, 
            inner sep=1pt, 
            minimum size=0.5cm
        },
        % Arrow Styles
        main_stream/.style={->, >=Stealth, thick, blue!80!black},
        rational_stream/.style={->, >=Stealth, thick, orange!80!black},
        injection_arrow/.style={->, >=Stealth, thick, gray!70!black}
    ]

    % --- NODES ---

    % Input
    \node (input) at (0, 0) {Input Token $x_t$};
    % Layer 1 Input Sum
    \node[sum_node] (sum0) at (0, 1.5) {+};
    \node[tf_block] (block1) at (0, 3) {Transformer Block 1};
    
    % Layer 2 Input Sum
    \node[sum_node] (sum1) at (0, 4.5) {+};
    \node[tf_block] (block2) at (0, 6) {Transformer Block 2};
    
    % Dots for depth
    \node (dots) at (0, 7.2) {$\vdots$};
    % Layer L Input Sum
    \node[sum_node] (sumL_minus_1) at (0, 8.0) {+};
    \node[tf_block] (blockL) at (0, 9.5) {Transformer Block $L$};
    
    % Output
    \node (output) at (0, 11) {Output $y_t$};
    % WFA Head
    \node[wfa_node] (wfa) at (5, 2.5) {WFA Hidden\\ State $\bh_t$};
    % Projections (Corrected Indices)
    \node[proj_node, rotate=90] (proj0) at (2.5, 1.5) {Proj\\$\sfW^{(0)}$};
    \node[proj_node, rotate=90] (proj1) at (2.5, 4.5) {Proj\\$\sfW^{(1)}$};
    \node[proj_node, rotate=90] (projL_minus_1) at (2.5, 8.0) {Proj\\$\sfW^{(L-1)}$};
    % --- CONNECTIONS ---

    % Main Stream
    \draw[main_stream] (input) -- node[left, font=\scriptsize] {$\bz_t^{(0)}$} (sum0);
    \draw[main_stream] (sum0) -- (block1.south);
    
    \draw[main_stream] (block1.north) -- node[left, font=\scriptsize] {$\bz_t^{(1)}$} (sum1);
    \draw[main_stream] (sum1) -- (block2.south);
    
    \draw[main_stream] (block2.north) -- (dots.south);
    \draw[main_stream] (dots.north) -- (sumL_minus_1);
    \draw[main_stream] (sumL_minus_1) -- (blockL.south);
    
    \draw[main_stream] (blockL.north) -- (output);
    % Rational Stream (Injection)
    \draw[rational_stream] (wfa) |- (proj0.south);
    \draw[rational_stream] (wfa) |- (proj1.south);
    \draw[rational_stream] (wfa) |- (projL_minus_1.south);
    % Connecting Projections to Sums
    \draw[injection_arrow] (proj0.north) -- (sum0);
    \draw[injection_arrow] (proj1.north) -- (sum1);
    \draw[injection_arrow] (projL_minus_1.north) -- (sumL_minus_1);
    % Parallel Scan Line
    \draw[dashed, orange!80!black] (input.east) -- (5, 0) -- (wfa.south);
    \node[text=orange!80!black, font=\scriptsize] at (3.5, 0.2) {Parallel Scan $O(\log T)$};

    % Labels
    \node[text=gray!70!black, font=\scriptsize, align=center] at (4.0, 6) {Deep Rational\\Injection};
    \end{tikzpicture}
    \caption{The Rational Transductor Architecture.  The Rational Head
      extracts state variables $\bh_t$. These states are injected into
      the Attention Stream via layer-specific projections
      $\sfW^{(l)}$, augmenting the semantic hidden states
      $\bz_t^{(l)}$.}
    \label{fig:rt_arch}
  \end{figure}

\subsection{Rational Feature Layers}
\label{sec:rational_feature_layers}

We define the \emph{rational feature vector} $\bh_t$ at time step $t$
as the forward state of the automaton after processing the prefix
$x_{1:t}$, that is $\bh_t = \sfM_{x_{1:t}} \balpha$. The state evolves
according to the linear recurrence \eqref{eq:linear-recurrence}.

Unlike standard Recurrent Neural Networks (RNNs) which use non-linear
activation functions (e.g., $\tanh$ or $\sigma$), the update in
\eqref{eq:linear-recurrence} is linear. This linearity is the
defining characteristic of rational series and provides two distinct
advantages:
\begin{enumerate}

\item Theoretical Clarity: It guarantees that the features
  capture Regular Languages (and their weighted generalizations)
  exactly.

\item Parallel Scalability: The recurrence
  $\bh_t = \sfM_{x_t} \dots \sfM_{x_1} \balpha$ can be computed
  efficiently on modern hardware using parallel associative scans
  (prefix sums), avoiding the sequential bottleneck of standard RNNs.
  While the constant factor depends on matrix multiplication costs
  (potentially $O(d^3)$ or $O(d^2)$ depending on structure), we assume
  moderate $d$ such that this overhead is negligible compared to the
  quadratic cost of attention.
\end{enumerate}

\subsection{Parameterization of Transition Matrices}
\label{subsec:parameterization}

The Rational Transductor framework is agnostic to the specific
internal structure of the transition matrices $\sfM_\sigma$.
Examples include diagonal, permutation, orthogonal via Cayley
parametrization, or diagonal-plus-low-rank matrices with fixed rank.
While our experiments focus on two primary regimes (decay and
conservation), the architecture supports any structured matrix family
$\cS \subset \Rset^{d \times d}$:

\paragraph{Diagonal plus Low-Rank (DPLR).} 
For general sequence modeling tasks requiring efficient mixing and
fading memory, we adopt the structure:
\begin{equation}
    \sfM_\sigma = \sfD_\sigma + \sfU_\sigma \sfV_\sigma^\top,
\end{equation}
where $\sfD_\sigma$ is diagonal and
$\sfU_\sigma, \sfV_\sigma \in \Rset^{d \times r}$ are low-rank
factors. While the DPLR structure reduces the cost of serial unrolling
and inference to $O(dr)$, parallel training via associative scans
necessitates dense matrix-matrix multiplication. Consequently, the
training recurrence scales as $O(d^3)$. This remains computationally
efficient for the moderate state dimensions ($d \in [4, 32]$) used
in this work, where the overhead is negligible compared to the $O(T)$
costs of surrounding layers. Similar low-rank updates have been
successfully deployed in other linear RNN architectures, such as
DeltaNet, to approximate gradients efficiently.

\paragraph{Orthogonal Parameterization (Cayley).} 
For algorithmic tasks requiring infinite memory conservation (e.g.,
modular counting), we parameterize $\sfM_\sigma$ to be strictly
orthogonal ($\sfM_\sigma^\top \sfM_\sigma = \sfI$). In practice, this
is achieved via the \emph{Cayley transform}. Let
$\sfW_\sigma \in \Rset^{d \times d}$ be a learnable parameter
matrix. We construct a skew-symmetric matrix
$\sfA_\sigma = \sfW_\sigma - \sfW_\sigma^\top$ and define the
transition as:
\begin{equation}
    \sfM_\sigma = (\sfI + \sfA_\sigma)(\sfI - \sfA_\sigma)^{-1}.
\end{equation}
Since the Cayley transform maps skew-symmetric matrices to the special
orthogonal group $SO(d)$, this guarantees that the state norm
$\|\bh_t\|_2$ is preserved exactly over infinite horizons, regardless
of the input sequence. We discuss a generalized version of this, the
\emph{Unified Scaled Cayley Parameterization}, in
Section~\ref{subsec:spectral_control}, which augments this form to
support both exact conservation and learnable decay within a single
differentiable framework.

\paragraph{Other Structured Families.}
We note that other structured parameterizations such as Butterfly
matrices (for efficient FFT-like mixing), Toeplitz or Hankel matrices
(for convolutional structures), or Block-Sparse matrices can be
seamlessly substituted into the Rational Head to induce different
inductive biases without altering the fundamental architecture or
training recipe.

\paragraph{Shared Basis Parameterization.}
To further reduce parameters and improve generalization, we can
express transitions as
\[
    \sfM_\sigma = \sum_{k=1}^K a_{\sigma,k} \sfB_k,
\]
where $\{\sfB_k\}_{k=1}^K$ are shared basis matrices and
$a_{\sigma,k}$ are token-dependent coefficients. This reduces
parameters, improves generalization, and connects to tensor
factorization.  This parameterization induces a low-rank tensor
factorization over $(\sigma, i, j)$ and allows the model to learn
token-conditioned dynamics in a shared latent basis. Empirically,
this sharing acts as a strong inductive bias that stabilizes training
when $|\Sigma|$ is large.

\paragraph{Parallel Combination (Sum).}
We can construct a \emph{Mixed Rational Head} by running multiple
independent automata in parallel. Mathematically, this corresponds to
the \emph{Direct Sum} of the transition matrices:
\begin{equation}
    \sfM_\sigma = \sfM_\sigma^{(1)} \oplus \sfM_\sigma^{(2)} = 
    \begin{pmatrix} 
        \sfM_\sigma^{(1)} & 0 \\ 
        0 & \sfM_\sigma^{(2)} 
    \end{pmatrix}.
\end{equation}
This formulation allows the model to instantiate parallel heads with
distinct dynamical biases (e.g., mixing a DPLR head for fading context
with an Orthogonal head for exact counting). The resulting state
dimension is additive ($d = d_1 + d_2$), maintaining efficiency.

\begin{figure}[t]
    \centering
        \begin{tikzpicture}[
        font=\sffamily,
        head/.style={rectangle, draw=black, thick, minimum width=2.5cm, minimum height=1.5cm, rounded corners, align=center},
        concat/.style={rectangle, draw=black, fill=gray!20, minimum width=0.8cm, minimum height=3.5cm, rounded corners},
        label_node/.style={font=\scriptsize\bfseries, align=center}
    ]

    % Input
    \node (input) at (0, 0) {Input $x_t$};

    % --- HEAD 1: ORTHOGONAL (Top) ---
    \node[head, fill=blue!10] (head1) at (4, 1.8) {};
    \node[label_node, above] at (head1.north) {Head 1: Orthogonal};
    % Visual
    \draw[->, thick, blue] (3.6, 1.95) arc (180:-135:0.45cm);
    \node at (4, 1.8) {$\gamma=1$};
    \node[font=\tiny] at (4, 1.4) {(Counting/Memory)};

    % --- HEAD 2: STOCHASTIC (Bottom) ---
    \node[head, fill=green!10] (head2) at (4, -1.8) {};
    \node[label_node, above] at (head2.north) {Head 2: Stochastic};
    % Visual
    \node[circle, fill=black, inner sep=1pt] (s1) at (3.5, -1.8) {};
    \node[circle, fill=black, inner sep=1pt] (s2) at (4.5, -1.8) {};
    \draw[->] (s1) to[bend left] (s2);
    \draw[->] (s2) to[bend left] (s1);
    \node[font=\tiny] at (4, -2.2) {(Switching/Logic)};

    % --- CONCAT BLOCK ---
    \node[concat] (concat) at (7.5, 0) {};
    \node[rotate=90] at (concat.center) {\textbf{Concat}};

    % --- ROUTING ARROWS ---
    % Split Input
    \draw[->, thick] (input) -- (1.5, 0) |- (head1.west);
    \draw[->, thick] (input) -- (1.5, 0) |- (head2.west);

    % Outputs to Concat (Entering cleanly from the left)
    \draw[->, thick] (head1.east) -- (concat.west |- head1.east);
    \draw[->, thick] (head2.east) -- (concat.west |- head2.east);

    % Final Output
    \node (output) at (9.5, 0) {$\mathbf{h}_t$};
    \draw[->, thick] (concat.east) -- (output);

    % --- ANNOTATION ---
    \node[align=center, font=\small, color=gray!80!black] at (4.5, -3.5) {
        \textbf{Algebraic Interpretation:}\\
        $h_t = h^{(1)}_t \oplus h^{(2)}_t$ (Direct Sum)
    };

    \end{tikzpicture}
    \caption{The Universal Rational Transductor.
    The architecture instantiates parallel heads with distinct dynamical biases:
    \emph{Orthogonal} (top) for infinite memory and
    \emph{Stochastic} (bottom) for discrete switching.
    These independent features are concatenated, corresponding to the
    direct sum ($\oplus$) of the underlying automata.}
    \label{fig:universal_transductor}
\end{figure}
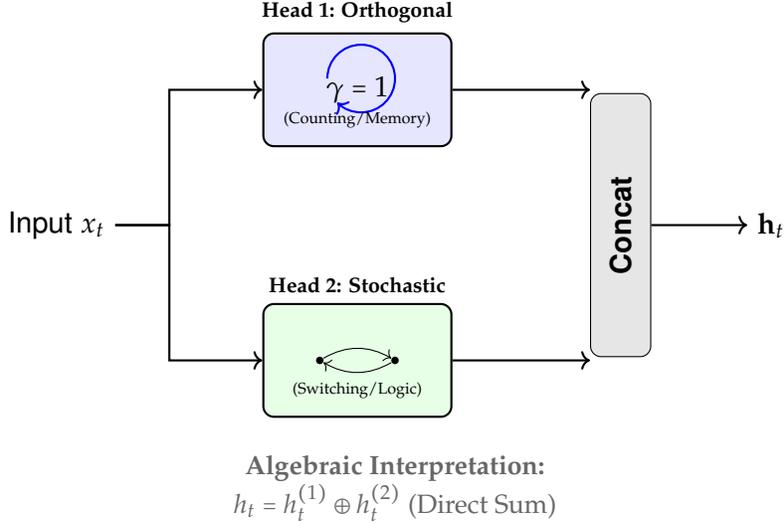

\paragraph{Mixture of Dynamics (Universal Transductor).}
To enable a general-purpose architecture that requires no manual
tuning, we propose the \emph{Universal Rational Transductor}
(Figure~\ref{fig:universal_transductor}). Inspired by Multi-Head
Attention, we instantiate parallel rational heads (via the Direct Sum
above) with distinct dynamical properties:
\begin{itemize}
    \item \textbf{Orthogonal Heads (Cayley):} Learn strictly norm-preserving 
    rotations, ideal for continuous group operations and infinite memory 
    tasks (e.g., counting, parity).
    
    \item \textbf{Stochastic Heads (Simplex):} Parameterized as 
    column-stochastic matrices, these heads function as differentiable 
    finite state automata, ideal for discrete logic, state switching, 
    and "hard resets" (e.g., addition, regular languages).
\end{itemize}
The outputs of these diverse kernels are concatenated and projected, 
allowing the network to autonomously route signals to the dynamical 
system best suited for the underlying algorithm.
While recent works like PD-SSM \citep{pdssm2025} have empirically
adopted diagonal/permutation structures, our Universal Transductor
derives this architecture from \emph{first principles} via the
Krohn-Rhodes decomposition (Theorem \ref{th:krohn_rhodes}).  This
proves that the mixture of orthogonal (group) and stochastic
(aperiodic) heads is not merely an efficient heuristic, but the
\emph{algebraically necessary} structure for capturing the full
hierarchy of regular languages.

\paragraph{Note on Cross-Products.}
While the parallel combination implements the \emph{sum} of rational
series, one can also implement the \emph{Cross-Product} (or Tensor
Product) $\sfM_\sigma = \sfM_\sigma^{(1)} \otimes \sfM_\sigma^{(2)}$
to compute the \emph{Hadamard product} (intersection) of two
series. However, as this scales quadratically ($d_1 \times d_2$), we
typically rely on the downstream attention layers to compute these
multiplicative interactions via the "Virtual Tensorization" mechanism
(Section~\ref{prop:virtual_tensorization}).

\paragraph{Computational Efficiency and Associativity.}  To use the
parallel prefix scan algorithm
(Section~\ref{sec:rational_feature_layers}), the family of matrices
$\cM$ must be closed under multiplication.  We explicitly note that
the scan operation requires the associative aggregation of the
\emph{dense} operators $\sfM_t$. We do not apply low-rank projections
\emph{during} the scan, as this would violate associativity.  Instead,
we maintain computational efficiency by restricting the rational state
dimension $d$ to be small (typically $d \in [4, 32]$). Under this
constraint, the $O(d^3)$ cost of matrix-matrix multiplication at each
step is negligible compared to the $O(T)$ costs of the surrounding
layers.

\subsection{Deep Integration of Rational Features}
\label{sec:deep_injection}

A naive approach to integration would be to simply concatenate the
rational feature vector $\bh_t$ to the input token embedding (Input
Injection). While standard residual connections theoretically allow
information to propagate to depth $L$, relying on them forces the
Transformer to preserve the exact state $\bh_t$ within the semantic
backbone $\bz_t$, competing with feature extraction and suffering from
signal attenuation due to repeated layer normalizations.  To address
this, we propose \emph{Deep Rational Injection}. Instead of augmenting
only the input, we inject the rational features directly into the
hidden state of every Transformer block via an independent
pathway. This ensures that a fresh, uncorrupted view of the precise
state tracking information is available at all levels of abstraction.

Let $\bz_t^{(l)} \in \Rset^{d_{\text{model}}}$ denote the
Transformer's hidden representation at time step $t$ immediately
before layer $l$ (where $l=0$ represents the initial token
embeddings). We modify the input to each layer $l$ by adding a
projected view of the rational state:
\begin{equation}
  \wt \bz_t^{(l)}
  = \bz_t^{(l)} + \sfW_{\text{proj}}^{(l)} \bh_t
\end{equation}
where $\sfW_{\text{proj}}^{(l)} \in \Rset^{d_{\text{model}} \times d}$
is a learnable linear projection unique to layer $l$.  By using a
layer-specific projection $\sfW_{\text{proj}}^{(l)}$, the model can
extract different aspects of the state history relevant to different
depths of processing. For instance, early layers might use $\bh_t$ for
local syntactic parsing (e.g., matching parentheses), while deeper
layers might use the same $\bh_t$ to resolve long-term dependencies
(e.g., tracking subject-verb agreement across long clauses). The
modified stream $\wt \bz_t^{(l)}$ is processed by the standard
Self-Attention and Feed-Forward sub-layers, and the result is added
residually to the original backbone:
\begin{equation}
\bz_{t}^{(l+1)} = \text{TransformerLayer}_l(\wt \bz_t^{(l)}).
\end{equation}
We use the standard Pre-Norm configuration, where
$\text{TransformerLayer}(\bx) = \bx + \text{Attn}(\text{LN}(\bx)) +
\text{FFN}(\text{LN}(\bx))$, ensuring that the injected rational state
gradients propagate directly without identifying vanishing through
normalization layers.
Crucially, because $\bh_t$ is computed via a parallel scan, this deep
injection adds no sequential dependency to the Transformer, preserving
the $O(L + \log T)$ parallel time complexity. This defines our models,
\emph{rational transductors} (RTs). Figure~\ref{fig:rt_arch}
illustrates their architecture.

\subsection{Architectural Extensions}
\label{subsec:architectural_extensions}

\paragraph{Stacked Rational Transductors.}
While the canonical Rational Transductor architecture
(Figure~\ref{fig:rt_arch}) uses a single rational head broadcast to
all layers, the framework naturally admits a stacked generalization.
A \emph{Stacked Rational Transductor} of depth $K$ consists of a
sequence of $K$ Rational Transductor blocks, where the output stream
of the $k$-th block serves as the input to the $(k + 1)$-th block.
Formally, let $\bu_t^{(k-1)}$ be the input vector to block $k$
at time $t$. The block computes a local rational state $\bh_t^{(k)}$
and updates the residual stream:
\begin{align}
  \bh_t^{(k)} & = \sfM_{x_t}^{(k)} \bh_{t-1}^{(k)} + \sfV^{(k)} \bu_t^{(k-1)}
                \label{eq:stack_recurrence} \\
  \bu_t^{(k)} & = \text{TransformerBlock}_k\paren*{ \bu_t^{(k-1)}
                + \sfW_{\text{proj}}^{(k)} \bh_t^{(k)} }
\end{align}
where $\sfM^{(k)}$ is the transition logic specific to layer $k$.

\paragraph{The Linear Collapse Property.}
A crucial theoretical observation guides our preference for the
single-head design over the naive stack. In the regime where the
inter-layer dependence is mediated by a linear map applied to the
previous block’s output (and no non-linearity is applied to the
recurrent state), the cascade of WFAs is reducible.
\begin{proposition}[Reducibility of Cascaded WFAs]
  A cascade of $K$ linear Weighted Finite Automata, where the state of
  automaton $k$ depends linearly on the state of automaton $k-1$, is
  algebraically equivalent to a single WFA with a larger state space
  dimension $d_{\text{total}} = \sum_{k=1}^K d_k$.
\end{proposition}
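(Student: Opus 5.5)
The approach is to stack the states of all $K$ automata into one vector and check that this vector obeys a single linear recurrence of the form \eqref{eq:linear-recurrence}, with a block lower-triangular transition matrix.

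\textbf{Setup.} I read the cascade as in \eqref{eq:stack_recurrence} with the Transformer block removed. The linear inter-layer map is absorbed into the coupling matrix, giving
\[
\bh_t^{(k)} = \sfM^{(k)}_{x_t}\bh_{t-1}^{(k)} + \sfC^{(k)}_{x_t}\bh_{t}^{(k-1)}, \qquad \bh_0^{(k)} = \balpha^{(k)}.
\]
The coupling $\sfC^{(k)}_{x_t}$ may depend on the token. For $k=1$ there is no upstream term, although an input term $\sfV^{(1)} e_{x_t}$ could appear there.

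\textbf{Remove the same-time dependence.} The coupling uses $\bh_t^{(k-1)}$ at the \emph{same} time step, so the joint update is not yet of the form $\bH_t = \sfM_{x_t}\bH_{t-1}$. I unroll it by induction on $k$. Substituting the recurrence for $\bh_t^{(k-1)}$ and iterating downward gives
\[
\bh_t^{(k)} = \sum_{j\le k} \sfP^{(k,j)}_{x_t}\bh_{t-1}^{(j)},
\]
with
\[
\sfP^{(k,k)}_{\sigma} = \sfM^{(k)}_\sigma, \qquad \sfP^{(k,j)}_{\sigma} = \sfC^{(k)}_\sigma \sfC^{(k-1)}_\sigma\cdots \sfC^{(j+1)}_\sigma \sfM^{(j)}_\sigma \quad (j<k).
\]
This is well defined because the cascade is acyclic, with automaton $k$ depending only on automata $j<k$. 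That acyclicity is what makes the unrolling terminate.

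\textbf{Assemble the single WFA.} Let $\bH_t = (\bh_t^{(1)},\dots,\bh_t^{(K)}) \in \Rset^{d_{\text{total}}}$, where $d_{\text{total}}=\sum_k d_k$. Define $\sfM_\sigma$ as the block lower-triangular matrix with blocks $\sfP^{(k,j)}_\sigma$, and take the initial vector to be the concatenation of the $\balpha^{(k)}$. Then $\bH_t = \sfM_{x_t}\bH_{t-1}$, so $\bH_t = \sfM_{x_{1:t}}\balpha$. This is exactly a WFA of dimension $d_{\text{total}}$, and every layer's state is recovered as a coordinate projection of $\bH_t$.

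\textbf{Input-driven terms.} If the first layer has an affine input term $\sfV e_{x_t}$, the recurrence is affine rather than linear. I handle this with the standard homogenization trick: append one constant coordinate fixed at $1$. This adds one dimension, so $d_{\text{total}}+1$. Either the statement's count should be read up to this constant coordinate, or the input term should be assumed absent or absorbed, as in the purely "state depends linearly on state" reading the proposition uses. I would state this caveat explicitly.

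\textbf{Main obstacle.} The only delicate step is the same-time coupling. If it were written naively, the joint matrix would have $\bh_t$ on both sides. The recursive substitution resolves this, since the block-triangular system $(\sfI-\sfN_\sigma)\bH_t = \sfD_\sigma \bH_{t-1}$ has a strictly lower-triangular and hence nilpotent $\sfN_\sigma$. Therefore $\sfI-\sfN_\sigma$ is invertible with inverse $\sum_{i<K}\sfN_\sigma^i$, and one can set
\[
\sfM_\sigma = \Big(\sum_{i<K}\sfN_\sigma^i\Big)\sfD_\sigma.
\]
I would present it this way because it is compact and clearly gives the claimed dimension.
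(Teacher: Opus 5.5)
Your proof is correct and follows the paper's route: stack the states into one vector of dimension $\sum_k d_k$ and exhibit a block lower-triangular joint transition matrix, which the paper writes only for $K=2$ as $\begin{pmatrix}\sfM^{(1)} & 0\\ \sfW_{\text{inter}} & \sfM^{(2)}\end{pmatrix}$. The paper tacitly lets the coupling act on $\bh^{(1)}_{t-1}$, whereas in \eqref{eq:stack_recurrence} it acts on the same-time state, so your nilpotent inversion $(\sfI-\sfN_\sigma)^{-1}=\sum_{i<K}\sfN_\sigma^i$ and your homogenization caveat for the affine input term are refinements that make the argument match the stacked architecture rather than a different approach.
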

\begin{proof}
  Consider two stacked states $\bh^{(1)}$ and $\bh^{(2)}$. The joint
  system update can be written as a block triangular matrix:
    \begin{equation}
        \begin{pmatrix} \bh_t^{(1)} \\ \bh_t^{(2)} \end{pmatrix}
        =
        \begin{pmatrix}
            \sfM^{(1)} & 0 \\
            \sfW_{\text{inter}} & \sfM^{(2)}
        \end{pmatrix}
        \begin{pmatrix} \bh_{t-1}^{(1)} 
\\ \bh_{t-1}^{(2)} \end{pmatrix}
    \end{equation}
    This block matrix defines a valid transition matrix
    $\sfM_{\text{joint}}$ for a single WFA. Thus, stacking linear
    recurrences does not strictly expand the class of representable
    functions beyond $\sT_{\text{Rat}}$; it merely structures the
    transition matrix.
\end{proof}

Despite this algebraic reducibility, there is a
representation-theoretic benefit to the stacked parameterization. By
enforcing the block-triangular structure inherent in the cascade, the
model uses significantly fewer parameters to represent the same total
state dimension, acting as a strong inductive bias for decomposable
processes.
\begin{theorem}[Cascaded Parameter Efficiency]
\label{th:linear_stacking_efficiency}
Let $\cC$ be a cascade of $K$ linear WFAs with state
dimensions $d_1, \dots, d_K$. The number of parameters required to
specify the transitions of the cascade is
$O(|\Sigma| \sum_{k=1}^K d_k^2)$. In contrast, a generic
(unconstrained) single WFA with the equivalent state dimension
$D = \sum_{k=1}^K d_k$ requires $O(|\Sigma| (\sum_{k=1}^K d_k)^2)$
parameters. Thus, the stacked architecture enforces a sparsity
constraint that reduces the sample complexity of learning by a factor
of $O(K)$ when $d_k \approx d$.
\end{theorem}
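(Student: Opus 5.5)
The plan is to prove the statement by direct parameter counting, and then to turn the count into a sample-complexity comparison through a parameter-dependent capacity bound. First I fix what ``the transitions of the cascade'' consist of, following the block-triangular form in the proof of the Reducibility of Cascaded WFAs proposition. For each token $\sigma \in \Sigma$, block $k$ has a diagonal block $\sfM^{(k)}_\sigma \in \Rset^{d_k \times d_k}$ and a coupling block $\sfW^{(k)}_{\text{inter}} \in \Rset^{d_k \times d_{k-1}}$ for $k \ge 2$; the coupling may or may not depend on $\sigma$. In the joint matrix $\sfM_{\text{joint},\sigma}$, all blocks above the diagonal and all sub-diagonal blocks other than the first are structurally zero and carry no parameters.

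The count then proceeds in three steps.
\begin{enumerate}
\item The diagonal blocks contribute $|\Sigma|\sum_k d_k^2$ parameters.
\item The coupling blocks contribute at most $|\Sigma|\sum_{k\ge2} d_k d_{k-1}$ parameters, taking the worst case where they are token-dependent. By $d_k d_{k-1} \le \tfrac12(d_k^2 + d_{k-1}^2)$, this is at most $|\Sigma|\sum_k d_k^2$.
\item The total is therefore $O(|\Sigma|\sum_k d_k^2)$. A generic WFA of dimension $D=\sum_k d_k$ has $|\Sigma| D^2 = |\Sigma|(\sum_k d_k)^2$ free entries.
\end{enumerate}
When $d_k \approx d$, the cascade has $\Theta(|\Sigma| K d^2)$ parameters and the generic WFA has $\Theta(|\Sigma| K^2 d^2)$. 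The ratio is $\Theta(K)$, and in general the ratio is $(\sum_k d_k)^2/\sum_k d_k^2 \le K$ by Cauchy--Schwarz.

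For the sample-complexity claim, I would use a capacity bound that scales linearly with the number of real parameters $p$. The hidden states $\bh_t = \sfM_x \balpha$ are polynomials of degree at most $T$ in the entries of the transition matrices. A thresholded or bounded readout of them therefore belongs to a class computed by polynomial (or piecewise-polynomial) arithmetic in $p$ parameters. The Goldberg--Jerrum / Warren-type bound then gives VC or pseudo-dimension $O(p \log(T\cdot \text{const}))$. The sample size needed for a fixed accuracy scales with this dimension, so substituting the two parameter counts yields the claimed factor-$O(K)$ reduction. The cascade class is a subset of the generic class; restricting to a lower-dimensional parameter manifold can only shrink the bound, so no sign issues arise.

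The main obstacle is interpreting ``reduces the sample complexity'' rigorously, since parameter counts yield only upper bounds. I would state the result as a reduction in the standard parameter-dependent upper bound on the sample complexity, not as a lower-bound separation. I would note that a matching lower bound would require showing the generic class actually shatters $\Omega(|\Sigma| D^2)$ points, which I would attempt only if needed, via linear independence of monomials in the entries of $\sfM_\sigma$ over short words.
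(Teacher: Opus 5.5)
Your proposal is correct. There is no paper proof to compare it against: the theorem is stated right after the Reducibility proposition with no proof. The count is implicitly read off the diagonal blocks of the block-triangular joint matrix, and the sample-complexity sentence is left unargued. Your count is the natural way to make the first two claims precise, and it is more careful than the statement itself: you bound the coupling blocks, which the statement silently drops, and you give the general ratio $(\sum_k d_k)^2/\sum_k d_k^2\le K$.

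One caution on the count. The structural zeros you posit for non-adjacent sub-diagonal blocks are doing real work. The paper's proof only exhibits $K=2$, and its stacked recurrence routes every earlier $\bh^{(j)}_t$ into $\bu^{(k-1)}_t$ through the residual stream. If every earlier block fed every later one through free token-dependent weights, the count would be $\Theta(|\Sigma|K^2d^2)$ and the savings would vanish. The claim therefore needs the coupling to be nearest-neighbour, or to factor through token-independent maps.

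Reading the last sentence only as a comparison of parameter-dependent upper bounds is the right choice, and the stronger reading (a genuine $\Theta(K)$ gap in sample complexity) is false in general. Take $|\Sigma|=1$:
\begin{itemize}
\item Every $n\mapsto\bw^\top\sfM^n\balpha$ with $\sfM\in\Rset^{D\times D}$ is a linear recurrence sequence of order at most $D$. It is a polynomial of degree at most $n+1$ in $2D$ numbers: the characteristic coefficients and $D$ initial values.
\item The same Goldberg--Jerrum bound then gives the \emph{generic} class VC dimension $O(D\log T)$.
\item The cascade contains all diagonal transitions, so its VC dimension is at least $D$ once $T\ge D$.
\end{itemize}
The true gap is therefore $O(\log T)$, not $\Theta(K)$.

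This also bears on your optional lower bound. The class is invariant under $\sfM_\sigma\mapsto P\sfM_\sigma P^{-1}$, $\balpha\mapsto P\balpha$, $\bw\mapsto P^{-\top}\bw$, which removes $D^2$ of the $|\Sigma|D^2+2D$ parameter directions. Linear independence of monomials alone does not give shattering; you would need something like a full-rank Jacobian of the parameter-to-output map on a set of short words. Such a Jacobian argument can yield at most $(|\Sigma|-1)D^2+2D$ shattered points, so it gives nothing of order $D^2$ when $|\Sigma|=1$.
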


\paragraph{Non-Linear Stacking and Deep Recurrence.}
\label{subsec:nonlinear_stacking}

A natural question arises regarding the interaction between the
rational features and the non-linear components of the Transformer.
While our proposed architecture (Figure~\ref{fig:rt_arch}) uses a
"sidecar" design where the rational states are strictly input-driven,
one could alternatively construct a \emph{Non-Linear Stacked
  Transductor}. In this variant, the input to the $k$-th rational
head is not the original token embedding, but the \emph{non-linear
  output} of the $(k-1)$-th Transformer block.

\begin{figure}[t]
    \centering
        \begin{tikzpicture}[
        font=\sffamily\footnotesize,
        block/.style={draw=blue!50!black, fill=blue!5, rounded corners, minimum width=2.5cm, minimum height=0.8cm, align=center},
        wfa/.style={draw=orange!60!black, fill=orange!10, circle, minimum size=0.8cm, inner sep=1pt, align=center},
        arrow/.style={->, >=Stealth, thick, color=gray!80!black},
        inj/.style={->, >=Stealth, thick, color=green!60!black}
    ]

    % --- PANEL A: RATIONAL TRANSDUCTOR (WIDE) ---
    \node (label_a) at (0, 4.9) {\textbf{(a) Rational Transductor (Wide)}};
    
    \node (in_a) at (0, 0) {Input $x$};
    % The WFA "Sidecar"
    \node[wfa] (wfa_a) at (2.35, 2.45) {WFA\\$h_t$};
    \draw[arrow, dashed, color=orange!80!black] (in_a) -| (wfa_a) node[pos=.6, left, font=\tiny] {Parallel Scan};

    % Transformer Stack
    \node[block] (tf1_a) at (0, 1.2) {TF Block 1};
    \node[block] (tf2_a) at (0, 2.5) {TF Block 2};
    \node[block] (tf3_a) at (0, 3.8) {TF Block 3};

    % Flow
    \draw[arrow] (in_a) -- (tf1_a);
    \draw[arrow] (tf1_a) -- (tf2_a);
    \draw[arrow] (tf2_a) -- (tf3_a);

    % Injections
    \draw[inj] (wfa_a) |- (tf1_a);
    \draw[inj] (wfa_a) -- (tf2_a);
    \draw[inj] (wfa_a) |- (tf3_a);

    % --- PANEL B: NON-LINEAR STACK (DEEP) ---
    \node (label_b) at (5.5, 4.9) {\textbf{(b) Non-Linear Stack (Deep)}};
    
    \node (in_b) at (5.5, 0) {Input $x$};
    
    % Layer 1
    \node[block] (tf1_b) at (5.5, 1.0) {TF Block 1};
    % Recurrence 2 (Interleaved)
    \node[wfa] (wfa2_b) at (5.5, 2.0) {WFA\\$h^{(2)}$};
    % Layer 2
    \node[block] (tf2_b) at (5.5, 3.0) {TF Block 2};
    % Recurrence 3
    \node[wfa] (wfa3_b) at (5.5, 4.2) {WFA\\$h^{(3)}$};

    % Flow (Sequential)
    \draw[arrow] (in_b) -- (tf1_b);
    \draw[arrow] (tf1_b) -- (wfa2_b);
    \draw[arrow] (wfa2_b) -- (tf2_b);
    \draw[arrow] (tf2_b) -- (wfa3_b);

    % Annotations
    \node[align=center, color=green!60!black, font=\tiny] at (1.75, 1.5) {Input-Driven\\State};
    \node[align=center, color=red!60!black, font=\tiny] at (6.8, 2.0) {State depends\\on Layer 1};

    \end{tikzpicture}
    
    \caption{Architectural Comparison.  (a) Wide Recurrence: The
      Rational Transductor computes a single high-dimensional state
      $h_t$ directly from the input via a parallel scan, injecting it
      into all layers.  (b) Deep Recurrence: Stacked architectures
      (e.g., H3, Mamba) interleave recurrence, where Layer $k$ depends
      on the output of Layer $k-1$, reintroducing a sequential
      bottleneck during training.}
    \label{fig:wide_vs_deep}
\end{figure}
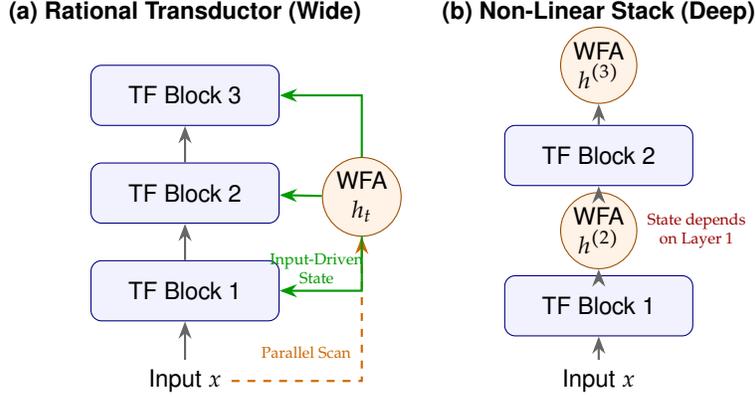

Unlike the linear case, non-linear stacking strictly increases the
expressive capacity of the architecture per unit of state dimension.
Because the transition dynamics of layer $k$ depend on the non-linear
transformation of layer $k-1$, the system can realize functions that
are not realizable by any single linear WFA of the aggregate
dimension.
\begin{theorem}[Non-Linear Irreducibility]
\label{th:nonlinear_irreducibility}
There exist cascades of rational transductors with inter-layer
non-linearities such that any single linear WFA capable of realizing
the same transduction requires a state dimension
$D \gg \sum_{k=1}^K d_k$.
\end{theorem}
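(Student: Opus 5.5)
We prove the statement with an explicit two-layer cascade ($K=2$), using the Hankel-rank characterization of rational series (Fliess' Theorem). A single linear WFA of dimension $D$ realizing a scalar transduction $f:\Sigma^*\to\Rset$, with $f(x)=\bbeta^\top\sfM_x\balpha$, forces the Hankel matrix $\sfH_f(u,v)=f(uv)$ to have rank at most $D$. The goal is therefore a small cascade with one inter-layer non-linearity whose output series has Hankel rank far exceeding $d_1+d_2$, possibly infinite.

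\textbf{Construction.} Take $\Sigma=\{a,b\}$.
\begin{itemize}
\item Layer 1 is a $1$-dimensional WFA ($d_1=1$) with $\sfM_a=\sfM_b=1$ and additive input $+1$ on $a$. It computes the running count $c_t=|x_{1:t}|_a$, a rational feature.
\item Between layers we apply a point-wise non-linearity $\phi$, as realized by the Transformer block's FFN. The second head's transition is then selected by $\phi(c_t)$, for example a $2\times2$ rotation by angle $\theta\,\phi(c_t)$.
\item The simplest choice is $\phi(c)=c^2$, implementable by the FFN on bounded ranges, or a threshold such as $\phi(c)=\Ind[c\ge 1]$.
\end{itemize}
With $\phi(c)=c^2$, the second-layer state after reading $a^n$ is a rotation by $\theta\sum_{k\le n} k^2$. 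Its first coordinate is therefore $f(a^n)=\cos(\theta\, p(n))$ with $p$ a cubic polynomial. If the dependence on the count is made exponential (e.g. $\phi(c)=2^c$ via a saturating FFN construction), $f(a^n)$ becomes a sequence whose Hankel rank is unbounded.

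\textbf{Key steps.}
\begin{enumerate}
\item Compute $f$ explicitly on unary words $a^n$.
\item Recall that the restriction of a rational series to a one-letter alphabet is a linear recurrence sequence of order at most $D$. Equivalently, the scalar Hankel matrix $[f(a^{i+j})]_{i,j}$ has rank at most $D$.
\item Show that $\cos(\theta p(n))$ with $\deg p\ge 2$ and $\theta/\pi$ irrational is \emph{not} a linear recurrence sequence. Such sequences are exponential polynomials $\sum_i q_i(n)\lambda_i^n$, and a phase quadratic or higher in $n$ cannot be matched.
\end{enumerate}
This gives infinite Hankel rank, so no finite $D$ suffices, which is far stronger than $D\gg d_1+d_2$.

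\textbf{Main obstacle.} The delicate step is the non-recurrence claim in step 3. I would handle it by taking differences in the Skolem–Mahler–Lech/exponential-polynomial representation: write $\cos(\theta p(n))=\tfrac12(e^{i\theta p(n)}+e^{-i\theta p(n)})$. Then use that the functions $n\mapsto e^{i\theta p(n)}$ with $\deg p\ge2$ are linearly independent from all exponential polynomials over $\Rset$. This follows from equidistribution (Weyl) or from the Hankel determinants failing to vanish.

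If that route proves fiddly, there is a fallback that only delivers the quantitative statement asked for. Restrict to lengths $n\le N$ and choose $\phi$ as the FFN-implementable indicator of $c\equiv 0 \bmod m$ combined with a second layer of dimension $2$. The finite Hankel block then has rank growing with $m$ (or $N$) while $d_1+d_2=3$ is fixed, which yields the $D\gg\sum_k d_k$ separation.
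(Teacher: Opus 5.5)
Your route differs from the paper's. The paper couples the heads multiplicatively ($u_t^{(2)}=u_t^{(1)}\otimes h_t^{(1)}$) and argues that linearizing this needs the product state $h^{(1)}\otimes h^{(2)}$, so $D=d_1d_2>d_1+d_2$. That is a dimension count for one particular simulation, with no argument that a smaller WFA could not realize the same map. Your use of Fliess' theorem is what actually yields a lower bound: by Cayley--Hamilton, a dimension-$D$ WFA restricted to $a^*$ gives a linear recurrence sequence of order at most $D$. Your step 3 is also sound. Weyl's theorem makes every coefficient $\lim_N\frac1N\sum_{n<N}\cos(\theta p(n))\bar\lambda^n$ with $|\lambda|=1$ vanish, while the mean square tends to $\tfrac12$, and no bounded linear recurrence sequence permits both.

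The gap is in realizing the non-linearity on unbounded counts. Infinite Hankel rank needs the cascade to output $\cos(\theta p(n))$ for \emph{every} $n$. So the inter-layer map must produce the transition entries $\cos(\theta c^2)$ and $\sin(\theta c^2)$ for all integers $c$. A fixed ReLU FFN of a scalar input is piecewise linear with finitely many pieces, hence eventually affine in $c$, so it cannot do this. A saturating FFN is bounded, so $\phi(c)=2^c$ is also unavailable. Retreating to ``bounded ranges'' kills the main argument, because on words of bounded length every function is realized by some finite WFA. Your threshold $\Ind[c\ge 1]$ gives no separation at all: the modulation is then finite-state, and the cascade is rational of dimension at most $2d_2$.

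What survives is the fallback, and it is correct once made explicit. Take $u_n=\Ind[n\equiv 0 \bmod m]$. The $m\times m$ Hankel block $[u_{i+j+1}]_{0\le i,j<m}$ is the anti-diagonal permutation matrix, so any WFA agreeing on lengths up to $2m-1$ has $D\ge m$, against $d_1+d_2=3$. You should say, though, that the FFN implementing the mod-$m$ lookup must grow with $m$. The complexity has moved into the FFN, and this meets the theorem only because it counts state dimension alone.

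A cleaner repair keeps your infinite-rank conclusion with a fixed architecture:
\begin{itemize}
\item Let layer 1 be the $d_1=2$ WFA with $\sfM_a$ the rotation by $\theta$ and $\sfM_b=I$. Its state is $(\cos\theta c_t,\sin\theta c_t)$.
\item Let layer 2's transition be the \emph{linear} image $\begin{pmatrix}h^{(1)}_{t,1}&-h^{(1)}_{t,2}\\ h^{(1)}_{t,2}&h^{(1)}_{t,1}\end{pmatrix}$ of that state, i.e.\ the rotation by $\theta c_t$. This is exactly the multiplicative gating the paper uses as its example of an inter-layer non-linearity.
\end{itemize}
Then $f(a^n)=\cos(\theta\, n(n+1)/2)$. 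For $\theta/\pi$ irrational your step 3 applies with $\deg p=2$. So no finite $D$ suffices, while $d_1+d_2=4$ and no FFN is needed.
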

\begin{proof}
  Consider a cascade of two WFAs ($K=2$) with state dimensions
  $d_1, d_2 \ge 3$. Let the first WFA compute a state
  $h_t^{(1)} \in \Rset^{d_1}$ and the second compute
  $h_t^{(2)} \in \Rset^{d_2}$, where the input to the second WFA is
  modulated by the state of the first via a multiplicative interaction
  (e.g., attention or gating
  $u_t^{(2)} = u_t^{(1)} \otimes h_t^{(1)}$). To simulate this
  non-linear interaction with a single linear system, one must
  linearize the product state space, requiring a state vector
  isomorphic to the tensor product $h_t^{(1)} \otimes h_t^{(2)}$. The
  dimension of this linearized system is $D = d_1 \times d_2$. For
  all $d_1, d_2 \ge 2$ (and strictly for $d > 2$), the tensor product
  dimension exceeds the sum of dimensions ($d_1 d_2 > d_1 + d_2$).
  Thus, the non-linear cascade represents a function class that is
  more compact (linear vs quadratic in $d$) than any equivalent single
  linear WFA.
\end{proof}

However, this expressivity comes at a steep computational cost. In our
"Single Head" design, the states $\bh_t$ are rational functions of the
\emph{input} $x$.  Consequently, the entire state history for all
layers can be pre-computed in parallel using a single pass of
associative scans ($O(\log T)$ depth).  In contrast, in the Non-Linear
Stack, the input to the $k$-th recurrence $\bu_t^{(k-1)}$ depends on
the attention output of the previous layer.  This introduces a
\emph{layer-wise sequential dependency}: the parallel scan for layer
$k$ cannot commence until layer $k-1$ is fully computed.  For a model
with $L$ layers, the parallel complexity scales as
$O(L \log T)$.\footnote{While theoretical circuit constructions can
  potentially parallelize deep non-linear networks to
  $O(\text{polylog } T)$ depth (e.g., via divide-and-conquer on the
  composition of layers), standard layer-wise implementations in deep
  learning frameworks enforce a sequential dependency of depth $L$,
  making $O(L \log T)$ the practical latency lower bound.} For deep
foundation models ($L \approx 96$) \citep{brown2020language}, this
serialization reintroduces a significant training bottleneck.  Our
architecture thus prioritizes parallel efficiency, opting to capture
complex dependencies via a wider, input-driven rational state rather
than a deep, serial one.

\section{Motivation: Why Rational Features?}
\label{sec:motivation}

To understand why augmenting transformers with weighted automata is
effective, we must consider the complementary strengths of the two
architectures. We argue that Rational Features provide the
\textit{sequential rigidity} that attention mechanisms lack, without
sacrificing the \textit{semantic flexibility} or \textit{computational
  parallelizability} that makes transformers successful.

\textbf{Complementarity of State Tracking and Recall.}  To avoid
ambiguity, we clarify our terminology: we use `Syntax' to refer to
rigid, rule-based state tracking (e.g., maintaining variable values,
counting modulo $k$, or balancing nested brackets), and `Semantics' to
refer to soft, associative recall (e.g., retrieving related concepts
across long distances).  Standard Transformers excel at modeling
\textit{semantic} dependencies via the attention mechanism, which
allows tokens to interact directly regardless of distance. However,
attention is inherently set-based; it treats the context as a ``bag of
tokens'' enriched only by weak positional heuristics. This makes
precise tasks---such as counting modulo $k$, tracking nested
parenthesis depth, or maintaining the status of a variable in code
generation---surprisingly difficult for standard Transformers. In
contrast, Weighted Finite Automata are inherently designed for this
type of \textit{syntactic state tracking}. By computing a rational
feature vector $\bh_t$, the model maintains a compact summary of the
past that is mathematically guaranteed to capture regular languages
exactly. The Rational Transductor thus operates with a
\emph{bicameral} design: we use this term metaphorically to reflect a
division of labor akin to biological cognitive systems, where the WFA
head handles sequential logic and \emph{state tracking} (maintaining
context over time), while the Attention head handles long-range
correlations and \emph{associative recall} (retrieving content from
history).

\textbf{Inductive Bias for Length Generalization.}
The \emph{failure to generalize} in Transformers is largely a failure
of inductive bias. Positional encodings (whether absolute or relative)
are often brittle: a Transformer trained on sequences of length $N$
frequently fails on length $2N$ because the positional representations
drift out of distribution. Rational features solve this by injecting a
\textit{recurrent inductive bias}. The state update
$\bh_t = \sfM \bh_{t-1}$ is time-invariant; the logic used to
transition from step 10 to 11 is identical to the logic used for step
1000 to 1001. If the WFA learns to count or track state correctly on
short sequences, it mathematically must continue to do so for
sequences of arbitrary length. This provides the Transformer with a
stable \emph{anchor} of state information that does not degrade as
sequence length increases.

\textbf{Efficiency via Linear Recurrence.}
Historically, adding recurrence to Transformers (as in Transformer-XL
or hybrids) introduced a sequential bottleneck, preventing parallel
training. Rational features circumvent this trade-off. Because the
transition function is linear ($\bh_t = \sfM_t \bh_{t-1}$), we avoid
the need for sequential backpropagation through time. Instead, the
sequence of states can be computed using \emph{parallel associative
  scans} (or prefix sums) in $O(\log T)$ time on GPUs/TPUs. This
ensures that adding rational features incurs negligible wall-clock
overhead compared to the quadratic cost of attention, maintaining the
scalability required for foundation models.

\section{Expressivity and Complexity}
\label{sec:expressivity}

In this section, we analyze the expressivity and generalization
capabilities of Rational Transductors. We summarize our main findings
in Table~\ref{tab:expressivity_comparison}, which highlights that
Transductors strictly generalize standard Transformers to capture all
regular languages and $\NC^1$ complexity.

\paragraph{Theoretical Setup.}
Unless otherwise stated, all expressivity results assume exact or
bounded-precision arithmetic with error tolerance independent of
sequence length.  For lower bounds, we define ``Standard
Transformers'' as \emph{uniform families of fixed-depth architectures
  without auxiliary memory} (no Chain-of-Thought).  Under these
constraints, Transformers are limited to $\AC^0$ (hard attention)
\citep{hahn2020theoretical} or $\TC^0$ (soft attention)
\citep{merrill2024transformers}, preventing them from solving
sequential problems like Parity which require depth scaling with input
length.  Note that while this section focuses on representational
capacity, we explicitly address numerical stability and optimization
dynamics in Section~\ref{sec:learning_theory}.

\begin{table}[t]
\centering
\caption{Theoretical comparison of capabilities between finite-depth
  Transformers and Rational Transductors.
  Transductors strictly expand expressivity to include all regular languages
  while sharing the fundamental limitation on context-free grammars.}
\label{tab:expressivity_comparison}
\vskip .15in
\resizebox{\textwidth}{!}{
\begin{tabular}{@{\hspace{0cm}}l@{\hspace{-.25cm}}cc@{\hspace{.2cm}}l@{\hspace{0cm}}}
\toprule
  \textbf{Task / Property} & \textbf{Transformer} & \textbf{Transductor}
  & \textbf{Theoretical Reason} \\
\midrule
  Parity ($L_{\text{parity}}$) & \hspace{.2cm}\xmark$^{\dagger}$ & \cmark & Limited to $\AC^0$/$\TC^0$ \\
  Modular Counting & \xmark & \cmark & Lack of cyclic state vs. WFA exactness \\
  All Regular Languages & \xmark & \cmark & Star-free limitation vs. Rational completeness \\
  Length Generalization & \xmark & \cmark & Positional drift vs. Time-invariant recurrence \\
  All Context-Free Languages & \xmark & \xmark & Lack of unbounded memory stack \\
\bottomrule
\end{tabular}
}
\vskip .05in
\begin{minipage}{\linewidth}
\footnotesize
$^{\dagger}$ While soft-attention Transformers ($\TC^0$) can
theoretically approximate Parity for fixed lengths via averaging, they
cannot represent the solution \emph{uniformly} for unbounded lengths
without precision scaling or Chain-of-Thought, and empirically fail to
generalize.
\end{minipage}
\end{table}

\paragraph{Interpretation of Results.}
Table~\ref{tab:expressivity_comparison} illustrates that Transductors
occupy just the \emph{right} zone of expressivity. By integrating the
WFA head, they overcome the fundamental inability of standard
Transformers (restricted to $\AC^0$ under hard attention, or $\TC^0$
under soft attention) to \emph{uniformly} handle periodic and
sequential logic (such as Parity and $MOD_k$), effectively solving the
\emph{Regular Gap}. Crucially, however, Transductors do not attempt to
solve the \emph{Context-Free Gap}; like standard Transformers, they
lack the unbounded stack required for tasks like hierarchical bracket
matching. This deliberate design choice preserves the $O(L + \log T)$
parallel time complexity that general RNNs or stack-augmented models
sacrifice.

\subsection{Positional Encodings and Generalization}

We first show that our framework strictly generalizes modern
positional encoding schemes.
\begin{lemma}[Positional Encodings are Rational]
\label{lemma:rope}
Over the field of real numbers $\Rset$, let
$\sfP \in \Rset^{T \times d}$ be the matrix of Rotary Positional
Embeddings (RoPE) or sinusoidal encodings. There exists a Weighted
Finite Automaton $\sA = (\Sigma, d, \balpha, \{\sfM_\sigma\})$ such
that for any position $t$, the rational feature vector $\bh_t$ is
exactly the positional encoding at index $t$, independent of the input
tokens.
\end{lemma}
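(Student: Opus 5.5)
The plan is to take a single, token-independent transition matrix $\sfM_\sigma = \sfR$ for every $\sigma \in \Sigma$. Then $\bh_t = \sfR^t \balpha$ depends only on $t$. The task reduces to choosing $\sfR$ and $\balpha$ so that $\sfR^t\balpha$ reproduces the encoding row $\sfP_{t,:}$.

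For sinusoidal encodings with frequencies $\omega_1,\dots,\omega_{d/2}$, the coordinates come in pairs $(\sin(\omega_k t), \cos(\omega_k t))$. Each pair is obtained by the $2\times 2$ rotation
\[
\sfR_k = \begin{pmatrix} \cos\omega_k & \sin\omega_k \\ -\sin\omega_k & \cos\omega_k \end{pmatrix}
\]
applied $t$ times to $(0,1)^\top$. By the angle-addition formulas,
\[
\sfR_k \bigl(\sin(\omega_k (t-1)), \cos(\omega_k (t-1))\bigr)^\top = \bigl(\sin(\omega_k t),\cos(\omega_k t)\bigr)^\top .
\]
Induction on $t$ then gives $\bh_t = \sfR^t\balpha = \sfP_{t,:}^\top$, where $\sfR = \bigoplus_k \sfR_k$ is block diagonal (a direct sum as in the Parallel Combination paragraph) and $\balpha = \bigoplus_k (0,1)^\top$. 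If the paper's interleaving convention orders the coordinates differently, a fixed permutation matrix conjugates $\sfR$ and permutes $\balpha$, and the argument is unchanged. If positions are indexed from $0$ rather than $1$, I would either shift $\balpha$ by one step, taking $\balpha = \sfR^{-1}\bigoplus_k(0,1)^\top$, or simply note the indexing convention. Each block is orthogonal, which matches the Cayley family of Section~\ref{subsec:parameterization}; this is worth remarking but is not needed for the proof.

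For RoPE, the "encoding" at position $t$ is the block-diagonal rotation $\bigoplus_k \sfR_k^{t}$ itself, not a vector. I would handle this in one of two ways. The first is to take the rational feature to be the vector of cosines and sines $(\cos\omega_k t, \sin\omega_k t)_k$, which determines the rotation entirely; the construction above then produces it with the same $\sfR$. The second is to run the WFA on a state of dimension $d$ in the basis $\balpha = \bigoplus_k(1,0)^\top$, so that $\bh_t$ lists the first columns of each block, which again determine $\sfR_k^t$.

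The main obstacle is this interpretive step for RoPE, since the statement treats $\sfP$ as a $T\times d$ matrix. I would first state explicitly that we identify RoPE at index $t$ with its parameter vector $(\cos\omega_k t,\sin\omega_k t)_{k}$ of dimension $d$, which is exactly the sinusoidal vector up to ordering, so the single construction covers both. Everything else is the routine induction.
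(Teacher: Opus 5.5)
Your proposal is correct and follows essentially the same route as the paper. Both use a single input-independent block-diagonal rotation $\sfM_\sigma = \sfR$ for every token and take $\balpha$ to be a stack of unit vectors, so that $\bh_t = \sfR^t\balpha$ reproduces each sine/cosine pair via the angle-addition formula. The remaining differences are cosmetic. You obtain the $(\sin,\cos)$ ordering by using the transposed rotation with start vector $(0,1)^\top$, whereas the paper starts from $(1,0)^\top$ and gets $(\cos t\theta_i,\sin t\theta_i)$. You also spell out index shifts and how to read RoPE as a vector, which the paper only touches on in a closing remark.
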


\begin{proof}
  Standard sinusoidal and rotary embeddings are defined by frequencies
  $\Theta = \{\theta_i\}_{i=1}^{d/2}$. The encoding at position $t$
  is typically constructed by rotating pairs of dimensions. Consider
  the block-diagonal rotation matrix $\sfR$:
\begin{equation}
    \sfR = \begin{pmatrix}
    \sfR_{\theta_1} & 0 & \dots & 0 \\
    0 & \sfR_{\theta_2} & \dots & 0 \\
    \vdots & \vdots & \ddots & \vdots \\
    0 & 0 & \dots & \sfR_{\theta_{d/2}}
    \end{pmatrix}, \quad \text{where } \sfR_{\theta_i}
    = \begin{pmatrix}
      \cos \theta_i & -\sin \theta_i \\
      \sin \theta_i & \cos \theta_i
    \end{pmatrix}
\end{equation}
We define a WFA $\sA$ as follows:
\begin{itemize}
\item Let the initial state
  $\balpha = (1, 0, 1, 0, \dots, 1, 0)^\top \in \Rset^d$.
\item Let the transition matrix $\sfM_\sigma = \sfR$ for all
  $\sigma \in \Sigma$. (The transition is input-independent).
\end{itemize}
The rational feature vector at time $t$ evolves as
$\bh_t = \sfM_{x_t} \dots \sfM_{x_1} \balpha = \sfR^t \balpha$. Since
$\sfR$ is block-diagonal, we can analyze the evolution of the $i$-th
pair of dimensions independently:
\begin{equation}
  \begin{pmatrix}
    h_{t, 2i-1} \\
    h_{t, 2i}
  \end{pmatrix}
  = \begin{pmatrix}
    \cos \theta_i & -\sin \theta_i \\
    \sin \theta_i & \cos \theta_i
  \end{pmatrix}^t
  \begin{pmatrix} 1
    \\ 0 \end{pmatrix} =
  \begin{pmatrix}
    \cos (t\theta_i)\\
    \sin (t\theta_i)
  \end{pmatrix}.
\end{equation}
This matches exactly the definition of sinusoidal positional
encodings.  Thus, standard positional information is a special case of
rational features where the transition matrices are unitary and
input-independent. While RoPE is typically applied as a modulation
within the attention mechanism, this lemma demonstrates that the
underlying state tracking mechanism is representable by a WFA.
\end{proof}

This lemma implies that Transductors start with the full capability of
standard Transformers (via PE) but extend it by allowing the
transition matrices $\sfM_\sigma$ to be \textit{input-dependent},
enabling the tracking of semantic states rather than just wall-clock
time.

\begin{figure}[t]
    \centering
    % Define common styles for consistency
    \begin{tikzpicture}[
        >=stealth, 
        auto, 
        thick,
        state_node/.style={circle, draw=blue!60!black, fill=blue!5, thick,
          minimum size=1.0cm}
    ]
    \end{tikzpicture}
    \begin{minipage}{0.48\textwidth}
        \centering
                \begin{tikzpicture}[
            >=stealth, 
            node distance=2.5cm, 
            on grid, 
            auto, 
            thick,
            state/.style={circle, draw=blue!60!black, fill=blue!5, thick, minimum size=1.2cm}
        ]
            % Nodes
            \node[state, initial, initial text=$h_0$] (even) {Even};
            \node[state] (odd) [right=of even] {Odd};

            % Transitions
            \path[->] 
                (even) edge [loop above] node {0} (even)
                (even) edge [bend left] node {1} (odd)
                (odd) edge [loop above] node {0} (odd)
                (odd) edge [bend left] node {1} (even);
            
            % Label
            \node[below=1.5cm of even, xshift=1.25cm, font=\small] {\textbf{(a) Parity WFA}};
        \end{tikzpicture}
    \end{minipage}
    %\hfill
    \begin{minipage}{0.48\textwidth}
        \centering
                \begin{tikzpicture}[
            >=stealth, 
            auto, 
            thick,
            state/.style={circle, draw=orange!80!black, fill=orange!10, thick, minimum size=1.0cm}
        ]
            % Nodes in triangle
            \node[state, initial, initial text=$h_0$] (q0) at (90:1.5cm) {0};
            \node[state] (q1) at (330:1.5cm) {1};
            \node[state] (q2) at (210:1.5cm) {2};
            % Transitions
            \path[->] 
                (q0) edge [loop above] node {0} (q0)
                (q1) edge [loop right] node {0} (q1)
                (q2) edge [loop left] node {0} (q2)
                (q0) edge [bend left=15] node {1} (q1)
                (q1) edge [bend left=15] node {1} (q2)
                (q2) edge [bend left=15] node {1} (q0);
            % Label
            \node[below=2.5cm of q0, font=\small] {\textbf{(b) Modulo-3 Counter}};
        \end{tikzpicture}
    \end{minipage}

    \caption{State tracking mechanisms for exact regular languages.
    \textbf{(a)} The Parity WFA uses a 2-state flip mechanism to
      track $L_{\text{parity}}$.
    \textbf{(b)} The Modulo-3 WFA
      generalizes this to a cyclic group structure to solve $L_k$ for
      $k=3$.
    Input `0` acts as the Identity $\sfI$ (self-loop), while
      input `1` acts as a permutation.}
  \label{fig:automata_examples}
\end{figure}
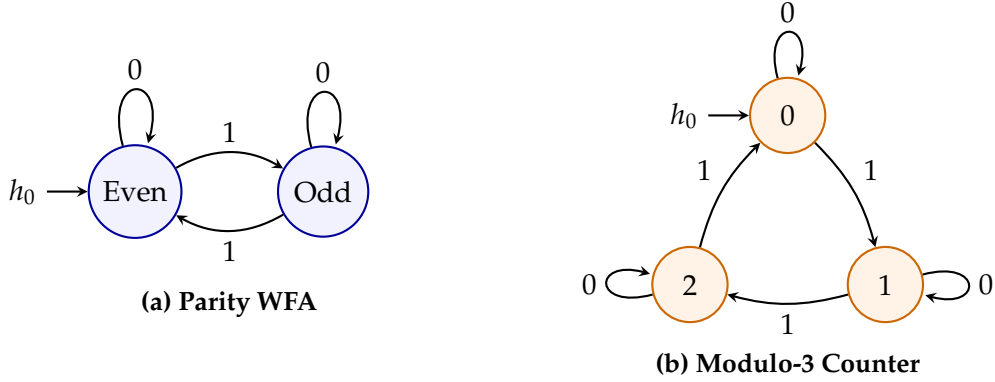

The construction in Lemma 4 formalizes the ``RoPE Trick'' used in
recent linear attention and SSM architectures \citep{dao2025mamba3,
  yang2025path}.  However, while RoPE has traditionally been viewed as
a positional modulation, this result provides the first rigorous proof
that it constitutes a specific \emph{Rational Inductive Bias}.  This
formalization explains precisely why RoPE fails to generalize length
on algorithmic tasks: it constrains the recurrence to track
\emph{input-independent} relative position, rather than the
\emph{input-dependent} state required for semantic generalization.

\subsection{Expressive Separations: Parity and Counting}

\begin{theorem}[The Parity Gap]
\label{th:parity}
Let $L_{\text{parity}}$ be the language of binary strings
$x \in \{0, 1\}^*$ containing an odd number of $1$s.
\begin{enumerate}

\item A standard Transformer with fixed depth, hard attention, and
  \emph{bounded-precision arithmetic} cannot uniformly recognize
  $L_{\text{parity}}$ for input sequences of unbounded length.

\item There exists a Rational Transductor with state dimension $d=2$
  that recognizes $L_{\text{parity}}$ with $100\%$ accuracy for any
  length $T$.

\end{enumerate}
\end{theorem}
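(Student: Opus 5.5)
The proof has two independent parts. Part~1 is a lower bound, and I would not reprove it from scratch. I would reduce it to the known circuit-complexity results cited in the Theoretical Setup. A fixed-depth, hard-attention Transformer with bounded-precision arithmetic can be simulated, for each input length $T$, by a constant-depth, polynomial-size, unbounded fan-in Boolean circuit. This is the $\AC^0$ upper bound of \citet{hahn2020theoretical}. Concretely, each attention head at bounded precision selects a maximizing position, and that selection can be computed by constant-depth comparisons over $O(\log T)$-bit quantities. The feed-forward layers act on finitely many bounded-precision values, so they are constant-size lookup tables. Parity, however, is not in $\AC^0$, by the Furst--Saxe--Sipser and H\aastad switching-lemma lower bounds. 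So no single fixed architecture can recognize $L_{\text{parity}}$ correctly at every length $T$. Most of the care in this part goes into stating the reduction precisely: the simulation must be uniform in $T$ while depth stays constant, and positional encodings must be absorbed as hardwired constants into the circuit family. Making that step airtight is the main obstacle. Given the setup, I would lean on the citation and only sketch the simulation.

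Part~2 is constructive, following Figure~\ref{fig:automata_examples}(a). Take $d=2$ and $\balpha = (1,0)^\top$. Set $\sfM_0 = \sfI$ and let $\sfM_1$ be the swap permutation matrix with rows $(0,1)$ and $(1,0)$. Since $\sfM_1^2=\sfI$ and all the matrices commute, induction on $t$ gives
\[
\bh_t = \sfM_1^{\,n_t}\balpha, \qquad n_t = \#\{s \le t : x_s = 1\}.
\]
Hence $\bh_t = e_1$ if $n_t$ is even and $\bh_t = e_2$ if $n_t$ is odd. Equivalently, $\bh_t$ is the one-hot encoding of $n_t \bmod 2$. The entries stay in $\{0,1\}$ for every $t$, so bounded-precision arithmetic introduces no error, independent of $T$.

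The remaining step is readout. Choose $\sfW_{\text{proj}}^{(0)}$ to write $h_{t,2}$ into a dedicated coordinate of $\wt\bz_t^{(0)}$, and set all attention and feed-forward weights that would disturb this coordinate to zero. The residual stream then carries the coordinate unchanged to the top, where a linear classifier thresholding at $1/2$ outputs acceptance exactly when $n_T$ is odd. Layer normalization could rescale the coordinate, but this does no harm: the two possible states are distinct fixed vectors, so a fixed affine readout of the normalized stream still separates them. Alternatively one can use the pre-norm residual path, which bypasses LN entirely. This gives $100\%$ accuracy for every length $T$.
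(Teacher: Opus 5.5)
Your proposal is correct and follows the paper's proof: Part~1 cites the hard-attention $\AC^0$ bound together with the fact that parity lies outside $\AC^0$, and Part~2 uses the same two-state automaton ($\balpha=(1,0)^\top$, $\sfM_0=\sfI$, $\sfM_1$ the swap) followed by a linear readout of the odd coordinate. Your readout, which thresholds $h_{T,2}$ at $1/2$ through the pre-norm residual path, is slightly more careful than the paper's $\mathrm{sign}(\bw^\top\bh_T)$; and the uniformity you flag as the main obstacle in Part~1 is not actually needed, since parity is not even in non-uniform $\AC^0$.
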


\begin{proof}
  Part 1 (Transformer Limitation): Theoretical lower bounds establish
  that uniform $\AC^0$ circuits (Hard Attention) cannot compute Parity
  \citep{hahn2020theoretical}.  While soft-attention Transformers are
  strictly more expressive, recent analysis proves they remain bounded
  within $\TC^0$ (Threshold Circuits) even with arbitrary precision,
  provided they operate in fixed depth
  \citep{merrill2024transformers}.  Turing completeness is only
  achievable when the model is permitted to generate intermediate
  \emph{Chain-of-Thought} tokens, effectively using the context as a
  read-write tape \citep{perez2021attention,merrill2023expressive}.
  Finite-depth models without such scratchpads remain bounded in
  computational power and, crucially, lack the inductive bias to learn
  robust algorithmic solutions \citep{merrill2022saturated}.

  Part 2 (Rational Feature Solution): We construct a WFA $\sA$ with
  $d=2$ states that tracks the parity of the number of $1$s (see
  Figure~\ref{fig:automata_examples}(a)). Let the state vector be
  $\bh_t \in \Rset^2$, where $\bh_t = (1, 0)^\top$ represents an
  ``Even'' state and $\bh_t = (0, 1)^\top$ represents an ``Odd''
  state.
\begin{itemize}
\item Initial State: $\balpha = (1, 0)^\top$ (0 is an even number).

\item Transitions: For input token `0', the count does not change. We
  set $\sfM_0 = I = \begin{pmatrix} 1 & 0 \\ 0 & 1 \end{pmatrix}$.
  For input token `1', the parity flips. We set
  $\sfM_1 = \begin{pmatrix} 0 & 1 \\ 1 & 0 \end{pmatrix}$.  (Note:
  While a standard Cayley parameterization in $d=2$ is restricted to
  $SO(2)$ and cannot represent the reflection matrix $M_1$, this can
  be realized in the Rational Transductor by augmenting the state
  space to $d \ge 3$. As detailed in Section 6.2, a rotation in three
  dimensions can embed the required reflection logic, ensuring
  algebraic completeness.)
\end{itemize}
The state update $\bh_t = \sfM_{x_t} \bh_{t-1}$ performs exact modular
arithmetic. If the number of ones is even, $\bh_T = (1, 0)^\top$; if
odd, $\bh_T = (0, 1)^\top$. The Rational Transductor injects this
$\bh_T$ into the final layer.  A simple linear classifier (readout
head) $\bw = (0, 1)^\top$ can then perfectly classify the
sequence as $y = \text{sign}(\bw^\top \bh_T)$. This holds for
any sequence length $T$, proving the claim.
\end{proof}

Note that the construction in Part 2 explicitly relies on the learned
parameterization (Regime II), where we can set $\sfM_\sigma$ to
specific permutation matrices.

\begin{theorem}[Exact Modular Counting]
\label{th:mod_counting}
For any fixed integer $k \ge 2$, there exists a Rational Transductor
with state dimension $d = k$ that exactly recognizes the language
\[
L_k = \{ x \in \{0,1\}^* : \#_1(x) \equiv 0 \pmod{k} \}
\]
for sequences of arbitrary length, whereas finite-depth Transformers
cannot uniformly recognize $L_k$.
\end{theorem}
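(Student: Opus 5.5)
The plan mirrors the proof of Theorem~\ref{th:parity}, replacing the $2$-cycle by a $k$-cycle, i.e.\ the cyclic group $\Zset/k\Zset$ acting on its regular representation.

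For the positive part (the construction), take $d=k$ and index the standard basis $\be_0,\dots,\be_{k-1}$ by residues mod $k$. Set $\balpha=\be_0$, $\sfM_0=\sfI$, and let $\sfM_1=\sfP$ be the cyclic shift permutation matrix $\sfP\be_j=\be_{(j+1)\bmod k}$, as in Figure~\ref{fig:automata_examples}(b).
\begin{itemize}
\item By induction on $t$ we get $\bh_t=\sfP^{\#_1(x_{1:t})}\balpha=\be_{\#_1(x_{1:t})\bmod k}$. The key facts are that $\sfM_0$ and $\sfM_1$ commute, and that $\sfP^k=\sfI$.
\item Hence $\bh_T$ is always a standard basis vector, and it equals $\be_0$ exactly when $x\in L_k$.
\item The readout $\bw=\be_0$ gives $\bw^\top\bh_T\in\{0,1\}$, and we classify by thresholding at $1/2$.
\end{itemize}
The Deep Rational Injection passes $\sfW^{(L-1)}_{\text{proj}}\bh_T$ to the last layer, and a linear head recovers this coordinate. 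All entries of every $\bh_t$ lie in $\{0,1\}$, so the correctness holds for every $T$ even under bounded precision.

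One technical point concerns the Cayley parameterization. If $k$ is even, $\sfP$ is an odd permutation and has determinant $-1$, so it lies outside $SO(k)$. I would handle this in one of two ways:
\begin{itemize}
\item invoke the general statement that the framework allows arbitrary $\sfM_\sigma$ (Regime II, permutation or stochastic heads); or
\item use the alternative real realization: embed the rotation by $2\pi/k$ in $\Rset^2$ inside a state of size $k$ (for $k\ge 2$), with $\balpha=(1,0,\dots)$.
\end{itemize}
In the second option, the state at time $T$ is $(\cos(2\pi m/k),\sin(2\pi m/k))$ with $m=\#_1(x)$. Membership is then detected by the first coordinate equalling $1$, and the threshold is $\cos(2\pi/k)<1$, a gap that is independent of $T$. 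I would present the permutation construction as the main one and note the rotation variant as a remark, mirroring the parenthetical note in Theorem~\ref{th:parity}.

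For the negative part, reduce from parity. For hard attention, $\AC^0$ cannot compute $MOD_k$ for any fixed $k\ge2$ by Razborov--Smolensky, which extends the Parity bound cited in Theorem~\ref{th:parity} via \citet{hahn2020theoretical}. For soft attention, I would appeal to the same uniformity and inductive-bias caveat used in Part 1 of Theorem~\ref{th:parity} \citep{merrill2024transformers,merrill2022saturated}, since $L_k$ lies outside the C-RASP fragment.

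This negative direction is the main obstacle. $MOD_k$ is in $\TC^0$, so a genuine impossibility claim holds only under the paper's stated setup of bounded precision, uniform fixed depth and hard attention. I would state explicitly that the lower bound inherits exactly the assumptions of Theorem~\ref{th:parity} and of the ``Theoretical Setup'' paragraph, rather than claiming an unconditional $\TC^0$ separation.
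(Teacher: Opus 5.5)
Your proof is correct and follows the paper's: the same $k$-state WFA with $\balpha=\be_0$, $\sfM_0=\sfI$, $\sfM_1$ the cyclic shift and readout $\be_0$, together with Smolensky's theorem for $MOD_k\notin\AC^0$. Your caveat that the lower bound relies on the hard-attention, bounded-precision setup (since $MOD_k\in\TC^0$) is more careful than the paper's unqualified claim. Two side remarks need fixing, though neither affects the main proof. First, the rotation variant does not resolve the Cayley issue for $k=2$: rotation by $\pi$ is $-\sfI_2$, which has eigenvalue $-1$ and so is never of the form $(\sfI+\sfA)(\sfI-\sfA)^{-1}$; membership in $SO(k)$ is not sufficient, and the variant only works for $k\ge 3$. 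Second, ``reduce from parity'' is the wrong framing when $k$ is odd: for $k=3$, Smolensky shows parity is not even in $\AC^0$ with $MOD_3$ gates, so no such reduction exists. Cite Razborov--Smolensky for $MOD_k$ directly, as you in fact do.
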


\begin{proof}
  The negative result for standard Transformers follows from the fact
  that $MOD_k$ gates are not realizable in $\AC^0$ for any
  $k \ge 2$ \citep{smolensky1987algebraic}. For the Rational
  Transductor construction, we generalize the parity mechanism to the
  cyclic group $\Zset_k$. We define a WFA with dimension $d=k$ where
  the basis vector $\be_i$ represents the current count being
  $i \pmod k$ (see Figure~\ref{fig:automata_examples}(b)).
\begin{itemize}

\item Initial State: $\balpha = \be_0 = (1, 0, \dots, 0)^\top$.

\item Transitions: Let $\sfM_0 = I_k$ (identity). Let $\sfM_1$ be
  the cyclic permutation matrix where $(\sfM_1)_{ij} = 1$ if
  $i \equiv (j+1) \pmod k$ and 0 otherwise.

\end{itemize}
The state update $\bh_t = \sfM_{x_t} \bh_{t-1}$ ensures that if the
current count is $j$, reading a `1' moves the probability mass to
$(j+1) \pmod k$. Thus, $\bh_T = \be_r$ where
$r = \#_1(x) \pmod k$. A linear readout $\bw = \be_0$
correctly identifies if $x \in L_k$.
\end{proof}

\emph{Note:} As with Parity, this capability assumes the learned
regime, where the model can discover and maintain orthogonal
transition matrices.

\begin{theorem}[Exact Arithmetic Evaluation]
\label{th:arithmetic_eval}
Let $val_b\colon \Sigma^* \to \Nset$ be the function that interprets a
string $x \in \{0, \dots, b-1\}^*$ as a number in base $b$ (e.g., for
binary $b=2$, "101" $\mapsto 5$).  There exists a Rational Transductor
with state dimension $d=2$ that computes $val_b(x)$ exactly for
sequences of arbitrary length (assuming computation over $\Rset$).
\end{theorem}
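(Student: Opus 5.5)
The plan is to realize Horner's rule as a $2$-dimensional linear recurrence, using an affine augmentation of the state. If the prefix $x_{1:t}$ has value $v_t = val_b(x_{1:t})$, then $v_0 = 0$ and
\[
v_t = b\, v_{t-1} + x_t .
\]
This update is affine in $v_{t-1}$ rather than linear. To absorb the additive term, I will carry a constant coordinate equal to $1$, so the state is $\bh_t = (v_t, 1)^\top \in \Rset^2$ with initial vector $\balpha = (0, 1)^\top$.

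The first step is to define, for each digit $\sigma \in \{0, \dots, b-1\}$, the matrix
\[
\sfM_\sigma = \begin{pmatrix} b & \sigma \\ 0 & 1 \end{pmatrix}.
\]
Then $\sfM_\sigma (v, 1)^\top = (b v + \sigma, 1)^\top$. This is exactly one Horner step, and the second coordinate is preserved.

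The second step is an induction on $t$ using the recurrence \eqref{eq:linear-recurrence}. The base case is $\bh_0 = \balpha = (0,1)^\top = (val_b(\varepsilon), 1)^\top$. For the inductive step, $\bh_t = \sfM_{x_t}\bh_{t-1} = (b\, val_b(x_{1:t-1}) + x_t, 1)^\top = (val_b(x_{1:t}), 1)^\top$, by the positional definition of base-$b$ numerals.

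The third step is the readout. The linear functional $\bw = (1,0)^\top$ gives $\bw^\top \bh_T = val_b(x)$ exactly, for every $T$. As in Theorem~\ref{th:parity}, the Deep Rational Injection passes $\bh_T$ to the final layer through $\sfW_{\text{proj}}^{(L-1)}$, so the Transductor outputs this value. I will also note that the $\sfM_\sigma$ are upper triangular, a family closed under multiplication, so the parallel scan of Section~\ref{sec:rational_feature_layers} applies. Their product has the closed form
\[
\sfM_x = \begin{pmatrix} b^T & val_b(x) \\ 0 & 1\end{pmatrix},
\]
which gives a second, direct verification.

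There is no real mathematical obstacle here. The one point needing care is the caveat already in the statement: $\sfM_\sigma$ has spectral radius $b > 1$, so $v_t$ grows like $b^T$. The result therefore relies on exact arithmetic over $\Rset$ and is not covered by the bounded-precision assumption of the Theoretical Setup. The matrices also lie outside the orthogonal and stochastic families, so they must be realized in the unconstrained or DPLR parameterization. Indeed $\sfM_\sigma = \mathrm{diag}(b,1) + \sigma\, \be_1 \be_2^\top$ is DPLR with rank $r=1$. I will state these remarks explicitly rather than try to remove them.
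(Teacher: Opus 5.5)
Your proposal is correct and matches the paper's proof: the same augmented state $\bh_t = (v_t, 1)^\top$, initial vector $\balpha = (0,1)^\top$, affine Horner matrices $\sfM_\sigma = \begin{pmatrix} b & \sigma \\ 0 & 1 \end{pmatrix}$, and linear readout $\bw = (1,0)^\top$. The explicit induction, the closed form for $\sfM_x$, and your remarks on spectral radius $b>1$ and the rank-one DPLR realization are all accurate, and they make the argument somewhat more complete than the paper's version.
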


\begin{proof}
  This function corresponds to the Horner scheme for polynomial
  evaluation.  We construct a WFA with state vector
  $\bh_t = [v_t, 1]^\top$, where $v_t$ is the current accumulated
  value.
  \begin{itemize}

  \item \textbf{Initial State:} $\balpha = [0, 1]^\top$.

  \item \textbf{Transitions:} For a digit
    $\sigma \in \{0, \dots, b-1\}$, the update rule is
    $v_t = b \cdot v_{t-1} + \sigma$. This is an affine transformation
    representable by the matrix:
      \begin{equation}
          \sfM_\sigma = \begin{pmatrix} b & \sigma \\ 0 & 1 \end{pmatrix}.
      \end{equation}
  \end{itemize}
  The recurrence yields $\bh_t = \sfM_{x_t} \bh_{t-1}$. The first
  component of $\bh_T$ exactly holds $\sum_{t=1}^T x_t \cdot b^{T-t}$.
  A linear readout $\bw = [1, 0]^\top$ retrieves the integer
  value.  Standard Transformers with bounded activations or attention
  scores cannot represent this unbounded growth function exactly, nor
  can standard RNNs with saturating non-linearities (e.g., $\tanh$).
\end{proof}

\paragraph{Generalization.}
This result illustrates a capability distinct from the finite-state
logic of Parity or Modulo-$k$.  As shown in prior work on the
expressivity of weighted automata \citep{CortesMohri2000}, WFAs can
represent broad families of functions mapping strings to numerical
values, including polynomial evaluations and probabilistic
distributions.  The Rational Transductor inherits this capacity to
model quantitative, unbounded dependencies that are inaccessible to
architectures constrained by saturation or bounded precision.

\paragraph{Recognition of Context-Free Languages.}
While we emphasized the distinction between Regular and Context-Free
capabilities in Table~\ref{tab:expressivity_comparison}, the ability
to perform exact arithmetic (Theorem~\ref{th:arithmetic_eval})
theoretically bridges this gap. As established in
\citep{CortesMohri2000}, the support of rational power series over
infinite fields can characterize certain Context-Free Languages,
including palindromes and Dyck paths, by leveraging arithmetic
"counting" mechanisms to enforce structural constraints (e.g., mapping
balanced nested structures to a zero-sum weight).  Thus, Rational
Transductors possess the latent capacity to "recognize" these
languages through quantitative embedding, though robust learning of
such unbounded arithmetic solutions in practice requires high
precision and specific inductive biases.

\subsection{The Expressive Hierarchy}

We can situate Rational Transductors within the broader hierarchy of
sequence modeling architectures. The Rational Transductor framework
occupies a precise theoretical niche, strictly generalizing both
Rational Power Series and standard Transformers.
\begin{proposition}[Expressive Hierarchy]
  Let $\sF_{\text{Rat}}$, $\sF_{\text{TF}}$, $\sF_{\text{RT}}$, and
  $\sF_{\text{RNN}}$ denote the classes of functions computable by
  Rational Power Series (linear WFAs), finite-depth Transformers,
  Rational Transductors, and general Recurrent Neural Networks,
  respectively. The following strict inclusions hold:
\begin{equation}
  (\sF_{\text{Rat}} \cup \sF_{\text{TF}})
  \subsetneq \sF_{\text{RT}} \subsetneq \sF_{\text{RNN}} .
\end{equation}
\end{proposition}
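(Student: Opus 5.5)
The proof splits into four parts: two inclusions and two strictness witnesses.

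\textbf{Inclusions.} First, $\sF_{\text{TF}} \subseteq \sF_{\text{RT}}$. Set every injection projection $\sfW_{\text{proj}}^{(l)} = 0$, or take $\sfM_\sigma = 0$. The Deep Rational Injection then adds nothing to $\bz_t^{(l)}$, so the Rational Transductor computes exactly the underlying Transformer.

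Second, $\sF_{\text{Rat}} \subseteq \sF_{\text{RT}}$. Given a rational series $f(x) = \bbeta^\top \sfM_x \balpha$, use $(\balpha, \{\sfM_\sigma\})$ as the rational head. Take $\sfW_{\text{proj}}^{(L-1)}$ to embed $\bh_T$ into a dedicated coordinate block of the residual stream. Zero out the attention and FFN contributions on that block; this is possible because the Pre-Norm layer is $\bx + \text{Attn}(\text{LN}(\bx)) + \text{FFN}(\text{LN}(\bx))$, so the residual path carries $\bh_T$ through unchanged. A linear readout $\bbeta$ then recovers $f(x)$. Since both classes embed, their union does too.

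Third, $\sF_{\text{RT}} \subseteq \sF_{\text{RNN}}$. I will construct a general RNN that simulates a Rational Transductor causally. Its hidden state carries $\bh_t$, updated linearly by $\sfM_{x_t}$, which is a legal RNN transition. It also carries the information needed to recompute the causal Transformer outputs at step $t$. Since general RNNs here are taken with unbounded-precision real states, the entire prefix can be encoded in the state (e.g.\ a base-$|\Sigma|$ encoding as in Theorem~\ref{th:arithmetic_eval}), and a sufficiently rich nonlinear readout computes any prefix function, in particular the Transformer layers applied to the injected streams.

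\textbf{Strictness.} For $(\sF_{\text{Rat}} \cup \sF_{\text{TF}}) \subsetneq \sF_{\text{RT}}$, I need one function outside each class.
\begin{itemize}
\item \emph{Outside $\sF_{\text{TF}}$:} parity. By Theorem~\ref{th:parity} (and Theorem~\ref{th:mod_counting}), parity lies in $\sF_{\text{RT}}$ but not in $\sF_{\text{TF}}$.
\item \emph{Outside $\sF_{\text{Rat}}$:} a nonlinear function of rational features, e.g.\ $f(x) = \text{sign}(\bw^\top \bh_T)$ for an appropriate feature, or the product $(\#_1(x))^2 \cdot \mathbb{1}[\ldots]$ composed through an FFN nonlinearity. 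Rational series have finite Hankel rank by Fliess' theorem, so I pick a function with infinite Hankel rank that an RT realizes. A clean candidate is $f(x) = \mathrm{ReLU}(\#_1(x) - \#_0(x))$, computed by a dimension-2 counter head followed by one FFN ReLU. The remaining step is to show that its Hankel matrix has infinite rank; this follows because the piecewise-linear kink cannot be a finite linear recurrence in the counts.
\item \emph{One witness for both:} a single function must lie outside the union, so take $g(x) = \text{parity}(x) \cdot \mathrm{ReLU}(\#_1(x) - \#_0(x))$, or simply argue with two separate witnesses plus the observation that the union is not closed under this combination. The combined witness is cleaner.
\end{itemize}

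For $\sF_{\text{RT}} \subsetneq \sF_{\text{RNN}}$, I use the paper's own remark that RTs cannot recognize Dyck-type context-free languages robustly. The argument combines the $O(L+\log T)$ parallel depth with bounded precision: a Rational Transductor is then an $\NC^1$ computation. General RNNs with unbounded precision are Turing complete and compute, for instance, $\mathsf{P}$-complete or even non-computable-by-$\NC^1$ functions.

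\textbf{Main obstacle.} This last separation is where I expect the difficulty. Over exact reals the rational head itself carries unbounded information, as the Cortes--Mohri remark on context-free support shows, so the separation must be stated in a fixed-precision regime. I would first try a counting argument instead: a fixed-size RT is a finite-dimensional parametric family whose outputs at time $t$ are semialgebraic in a bounded number of parameters, whereas an RNN with a single real state can encode arbitrary sequences (Siegelmann–Sontag). A uniformly bounded-dimension RT therefore cannot match the RNN's capacity to compute non-recursive languages.
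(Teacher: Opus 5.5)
\textbf{Where you differ from the paper, and improve on it.} For the first inclusion, the paper proves $\sF_{\text{Rat}}\subsetneq\sF_{\text{RT}}$ and $\sF_{\text{TF}}\subsetneq\sF_{\text{RT}}$ separately. It uses a remark about softmax and LayerNorm for the first and parity for the second. Two strict inclusions do not make the union strictly smaller. Moreover, parity is itself a rational series (the 2-state WFA with $\bbeta=(0,1)^\top$), so it lies in the union. Your single witness, certified by Fliess' theorem, is the right repair. It needs three details:
\begin{itemize}
\item \emph{Realizing $g$.} An FFN cannot multiply a $\{0,1\}$ parity flag by an unbounded count exactly. Use the tensor-product head (parity $\otimes$ counter, $d=4$), which carries $[\#_1(x)\text{ odd}]\,(\#_1(x)-\#_0(x))$ as a coordinate, followed by one ReLU.
\item \emph{$g\notin\sF_{\text{TF}}$.} This needs a reduction. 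On $y$ followed by $|y|+1$ ones, the count difference is positive, so the sign of $g$ is the parity of $y$ up to a known flip. $\AC^0$ is closed under fixing input bits.
\item \emph{Infinite Hankel rank.} Take $u_i$ with an even number of ones and $\#_1-\#_0=i$, and $v_j$ a single $1$ followed by $j+1$ zeros. The block $\max(0,i-j)$ has second difference in $i$ equal to the identity.
\end{itemize}
Under the same hard-attention convention, majority $[\#_1>\#_0]$ is a simpler witness: it is computed by a counter head plus $\mathrm{ReLU}(c)-\mathrm{ReLU}(c-1)$, is not in $\AC^0$, and has a triangular Hankel block.

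Your prefix-encoding proof of $\sF_{\text{RT}}\subseteq\sF_{\text{RNN}}$ is also more honest than the paper's claim that an RT ``is a specific instance'' of an RNN, which ignores that attention reads the whole prefix. But it presupposes an RNN whose per-step readout is arbitrary; a standard RNN with one update per token would have to decode the prefix and run $L$ Transformer layers within a single step. Fix that convention explicitly. Under it, $\sF_{\text{RNN}}$ contains every causal function, and strictness reduces to exhibiting one function outside $\sF_{\text{RT}}$.

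\textbf{The genuine gap is that last separation.} Your first attempt is exactly the paper's argument ($\PNC^1$ versus $\mathsf{P}$-completeness). It fails for the reasons you sense. It bounds RTs with finite precision while granting RNNs unbounded precision. Even with matched conventions, $\mathsf{P}$-hardness separates nothing unconditionally, since $\PNC^1\neq\mathsf{P}$ is open.

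Your counting fallback also fails as stated, for two reasons:
\begin{itemize}
\item A saturated-linear RNN with one real state is also a finite-parameter family whose output at each $T$ is semialgebraic in its weights. So ``bounded parameters, semialgebraic'' cannot be the separating invariant.
\item RTs with real weights do recognize non-recursive languages. A 2-dimensional rotation head with a sign readout recognizes $\{a^T:\cos(T\theta)>0\}$. This is non-recursive whenever $\theta$ is non-computable, since the sign sequence determines $\theta$ up to the obvious symmetries.
\end{itemize}

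What separates is non-linear depth. In an RT the head is one polynomial of degree at most $T$ in the weights, followed by $L$ non-linear layers regardless of $T$. An unrolled RNN has non-linear depth $T$. Concretely:
\begin{itemize}
\item Take unary inputs $a^1,\dots,a^n$, with ReLU, LayerNorm and hard attention (all semialgebraic) and no per-position parameters. A layer-by-layer Warren/Milnor--Thom cell count bounds the output patterns of a fixed architecture with $P$ weights by $n^{O(PL)}$.
\item Hence each architecture, and so the countable union over architectures, realizes only a null set of infinite binary sequences.
\item A two-unit saturated-linear RNN iterating the tent map realizes all but countably many sequences by choice of its initial state.
\end{itemize}
This gives an unconditional separation, but only under hard attention. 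Softmax puts $\Theta(T)$ exponentials into each output, and the known pattern-count bounds for such computations grow too fast in that number to beat $2^n$.
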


\begin{proof}
We establish each inclusion and its strictness separately.

1. Rational Series ($\sF_{\text{Rat}} \subsetneq \sF_{\text{RT}}$).
Any rational series is computable by a WFA
\citep{Schutzenberger1961}. A Rational Transductor recovers this
exactly by learning the corresponding transitions $\{\sfM_\sigma\}$
and setting the Transformer layers to perform a fixed linear
readout. The inclusion is strict because Transductors can apply
non-linear operations (e.g., softmax attention, layer normalization)
to the sequence of rational states, whereas $\sF_{\text{Rat}}$ is
limited to linear transductions.

2. Transformers ($\sF_{\text{TF}} \subsetneq \sF_{\text{RT}}$).  A
Rational Transductor recovers a standard Transformer if the rational
projection weights are set to zero ($\sfW_{\text{proj}} = 0$). The
strictness is proven by Theorem~\ref{th:parity} and
Theorem~\ref{th:mod_counting}: Transductors can uniformly recognize
Parity and Modular Counting languages, whereas standard Transformers
(limited to $\AC^0$) cannot.

3. Recurrent Neural Networks
($\sF_{\text{RT}} \subsetneq \sF_{\text{RNN}}$).  Rational
Transductors are a specific instance of RNNs where the state
transition is strictly linear ($h_t = \sfM_{x_t} h_{t-1}$).  General
RNNs allow for non-linear state dynamics (e.g., $\tanh$), which
encompass linear updates in the small-signal regime (linear region of
the activation).  The inclusion is strict because the linear state
dynamics of the Transductor admit a parallel-prefix decomposition,
limiting them to $\PNC^1$.  In contrast, general RNNs with non-linear
activations and unbounded precision are $\sfP$-complete
(sequentially strictly harder).  While linear Transductors can
represent specific bounded Context-Free Languages (such as Boolean
Formula Evaluation) via their $\NC^1$ capacity
\citep{huang2025formal}, they lack the unbounded stack required to
recognize \emph{all} Context-Free Languages (e.g., Dyck-$k$ of
arbitrary depth) uniformly.
\end{proof}

\textbf{Algebraic Completeness via Krohn-Rhodes.} From the
perspective of algebraic automata theory, the Krohn-Rhodes theorem
establishes that any finite state machine can be decomposed into a
cascade of simple groups (counters) and aperiodic monoids (threshold
logic). Standard self-attention is known to be limited to the
aperiodic component (star-free languages) \citep{hahn2020theoretical}.
By incorporating linear recurrence, which naturally implements cyclic
group operations (as seen in Theorem~\ref{th:mod_counting}),
Transductors effectively recover the group-theoretic component. Thus,
Transductors provide a structurally complete architecture capable of
modeling both the periodic and aperiodic sub-structures of all regular
languages.

\textbf{Descriptive Complexity and MSO.} In the framework of
descriptive complexity, standard Transformers are often associated
with First-Order Logic ($\FO[<]$), which describes star-free
regular languages but fails to capture modulo counting quantifiers.
In contrast, Weighted Finite Automata are intimately linked to Monadic
Second-Order Logic ($\MSO[<]$) over fields. By bridging these
architectures, Transductors effectively lift the expressivity of the
model from the limitations of first-order predicates to the fuller
expressive range of monadic second-order logic on sequences.

\begin{proposition}[Effective Capacity Increase]
  For any fixed horizon $L$, the pseudo-dimension of Rational
  Transductors strictly exceeds that of finite-depth Transformers with
  the same hidden dimension, due to the ability to linearly separate
  histories that are indistinguishable under attention-only
  architectures.
\end{proposition}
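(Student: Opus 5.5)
We will prove the proposition in two steps: first a weak inequality by inclusion, then strictness by exhibiting one point set that Rational Transductors pseudo-shatter and Transformers of the same hidden dimension do not. Throughout we fix a horizon $L$, the hidden dimension $d_{\text{model}}$, and a depth, and view both models as real-valued function classes on $\Sigma^{\le L}$.

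\textbf{Weak inequality.} Setting $\sfW_{\text{proj}}^{(l)} = 0$ for all $l$ recovers every finite-depth Transformer with the same hidden dimension, exactly as in the proof of the Expressive Hierarchy proposition. Hence $\sF_{\text{TF}} \subseteq \sF_{\text{RT}}$ as classes on $\Sigma^{\le L}$. Pseudo-dimension is monotone under inclusion, so $\mathrm{Pdim}(\sF_{\text{RT}}) \ge \mathrm{Pdim}(\sF_{\text{TF}})$.

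\textbf{Strictness.} Let $D = \mathrm{Pdim}(\sF_{\text{TF}})$, and let $S = \{x^{(1)},\dots,x^{(D)}\}$ with thresholds $r_i$ be a set pseudo-shattered by Transformers. We aim to add one string $x^{(0)}$ so that $S \cup \{x^{(0)}\}$ is pseudo-shattered by $\sF_{\text{RT}}$. The extra string exploits histories that attention cannot distinguish.

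\begin{itemize}
\item \emph{Choice of $x^{(0)}$.} Take $x^{(0)}$ to be a permutation of some $x^{(j)}$ with the same length, the same token multiset, and opposite parity of a chosen feature. If order is relevant, use instead a pair differing only in a position-sensitive $MOD_k$ statistic.
\item \emph{Indistinguishability under attention.} For a hard-attention model with no positional signal, the output at the final position is permutation invariant. Hence $f(x^{(0)}) = f(x^{(j)})$ for every $f \in \sF_{\text{TF}}$, so $x^{(0)}$ carries no new dichotomies for Transformers.
\item \emph{Separation by the rational head.} Add the parity or $MOD_k$ automaton of Theorems~\ref{th:parity} and~\ref{th:mod_counting} as a direct-sum head. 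Its state $\bh_T$ takes different values on $x^{(0)}$ and $x^{(j)}$.
\item \emph{Realizing all sign patterns.} Inject $\bh_T$ into the final layer with a projection scaled by a free parameter $\beta$. This adds $\beta\,\bw^\top\bh_T$ to the output, which is a bump that is zero on $x^{(j)}$ and nonzero on $x^{(0)}$. Since the head does not alter the outputs on $S$, any shattering configuration on $S$ can be combined with either sign above a threshold $r_0$ at $x^{(0)}$.
\end{itemize}

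This yields $\mathrm{Pdim}(\sF_{\text{RT}}) \ge D+1$.

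\textbf{Main obstacle.} The hard step is making ``indistinguishable under attention'' rigorous once positional encodings are present. By Lemma~\ref{lemma:rope}, positional encodings are themselves rational, so the Transformer baseline could separate $x^{(0)}$ from $x^{(j)}$ through them.

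My first approach is to state the proposition relative to a Transformer class with a fixed positional scheme. I would then pick $x^{(0)}$ and $x^{(j)}$ with identical token–position statistics up to the bounded precision used in the Theoretical Setup, and invoke the $\AC^0$ parity lower bound for all sufficiently long strings. This requires $L$ large enough that parity on the chosen block is not computable by the fixed-depth model.

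If that fails, the fallback is to interpret ``same hidden dimension'' as permitting the rational head's extra $d$ parameters. Strictness then follows from a parameter-counting or Warren-type argument only in the direction that the rational head adds a genuinely new algebraic coordinate. I would treat this fallback as a weaker, honest version of the claim.
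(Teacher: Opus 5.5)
Your strictness argument breaks at the bump step. You claim the added head ``does not alter the outputs on $S$.'' But a parity or $MOD_k$ readout $\bw^\top\bh_T$ is nonzero on every element of $S$ in the same residue class as $x^{(0)}$, so adding $\beta\,\bw^\top\bh_T$ moves those outputs as well and destroys the witnesses $f_b$ on $S$. Injection is also not additive at the output. The term $\sfW_{\text{proj}}^{(l)}\bh_t$ enters at every position $t$ and passes through LayerNorm, attention and the FFN. For the witnesses to survive you therefore need $\sfW_{\text{proj}}^{(l)}\bh_t=0$ at every position of every string in $S$, and a parity state, which is never zero, cannot give you that.

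What works is an exact-string detector rather than a counter:
\begin{enumerate}
\item Take $x^{(0)}\in\Sigma^L\setminus S$. Being of maximal length, it is not a prefix of any element of $S$.
\item Use the $(L+1)$-state WFA with $\bh_t=\be_t$ if the prefix read so far equals $x^{(0)}_{1:t}$, and $\bh_t=0$ otherwise. The proposition fixes only the hidden dimension, so a head of dimension $L+1$ is allowed.
\item Inject only $\beta\bu\,\be_L^\top\bh_t$, and only into the last block.
\end{enumerate}
Every output on $S$ is then unchanged. Through the pre-norm residual path, the output at $x^{(0)}$ equals $\beta\,\bw_{\text{out}}^\top\bu$ plus a term bounded uniformly in $\beta$, because attention and FFN only see layer-normalized inputs. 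Assuming a nonzero linear readout $\bw_{\text{out}}$ and choosing $\bw_{\text{out}}^\top\bu\neq 0$, this output crosses any fixed threshold $r_0$ as $\beta\to\pm\infty$.

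This repair never uses ``indistinguishability under attention,'' the step you flagged as the main obstacle. Since $|S|=\mathrm{Pdim}(\sF_{\text{TF}})$ is already maximal, you only need $\sF_{\text{RT}}$ to pseudo-shatter $S\cup\{x^{(0)}\}$, and the value $f_b(x^{(0)})$ is irrelevant once $\beta$ is free. That matters, because the step fails for the paper's Transformer class:
\begin{itemize}
\item With positional encodings that distinguish positions, any two distinct equal-length strings are separated by one head attending to a position where they differ.
\item Causal masking breaks permutation invariance even without encodings.
\item The $\AC^0$ parity bound cannot supply a pair that every Transformer confuses, since its hard input depends on the model.
\end{itemize}

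What the argument actually requires is that some length-$L$ string lies outside $S$, for instance $\mathrm{Pdim}(\sF_{\text{TF}})<|\Sigma|^L$. For fixed width and depth this should hold for large $L$, since standard parameter-counting bounds grow only polynomially in $L$. It fails for tiny horizons: for $L=1$, learned token embeddings already let a Transformer pseudo-shatter every single-token input, so no larger class can do better. The phrase ``for any fixed horizon $L$'' therefore has to become ``for all sufficiently large $L$.''

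The paper's own proof does not supply these ingredients either. It infers strictness from a single dichotomy (parity on $\{0,1\}^L$, for some large $L$) that Transductors realize and Transformers do not, and one extra labeling does not by itself yield a larger pseudo-shattered set. Your structure of extending a maximal shattered set by one point is the right one once the bump is fixed.
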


\begin{proof}
  Let $\cH_{\text{TF}}$ and $\cH_{\text{RT}}$ be the hypothesis
  classes of finite-depth Transformers and Transductors. Consider the
  set of all binary sequences of length $L$, $S_L = \{0, 1\}^L$, and
  the subset of labelings defined by $y(x) = \text{Parity}(x)$.
  A Rational Transductor with $d=2$ can realizably separate $S_L$
  according to parity labelings (Theorem~\ref{th:parity}). Conversely,
  for fixed model size, there exists a length $L$ such that a standard
  Transformer ($\AC^0$) cannot compute Parity \citep{furst1984parity},
  failing to shatter $S_L$. Since
  $\cH_{\text{TF}} \subseteq \cH_{\text{RT}}$ and Transductors can
  realize dichotomies (parity) that Transformers cannot,
  $\text{Pdim}(\cH_{\text{RT}}) > \text{Pdim}(\cH_{\text{TF}})$.
\end{proof}

\begin{theorem}[Representational Completeness via Krohn-Rhodes]
\label{th:krohn_rhodes}
Let $\cA$ be any deterministic finite automaton with state set $Q$ and
input alphabet $\Sigma$.  There exists a parameter setting $\theta$
for a \emph{Stacked} Rational Transductor (see
Section~\ref{subsec:architectural_extensions}) of finite depth $L$ and
width $d$ such that the model exactly simulates the state transitions
of $\cA$.  Specifically, the architecture admits a parameterization
that realizes the Krohn-Rhodes decomposition
$(S_L \wr \dots \wr S_1) \wr (R_L \wr \dots \wr R_1)$, where the
Rational Feature Heads implement the simple groups $S_i$ and the
Transformer layers implement the cascading feedback functions required
by the wreath product.
\end{theorem}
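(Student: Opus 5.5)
We apply the Krohn--Rhodes theorem to the transition monoid of $\cA$, then realize each factor of the resulting cascade with the components of a Stacked Rational Transductor. The Krohn--Rhodes theorem states that $\cA$ is simulated (homomorphic image of a subautomaton) by a finite cascade (wreath product) of automata. Each factor is either a simple group $G$ dividing the transition monoid of $\cA$, or a two-state reset ("flip-flop") automaton $U_2$. So it suffices to show three things: (i) a single group factor can be realized by a rational head; (ii) a single reset factor can be realized; and (iii) the cascade coupling, in which the input to factor $k$ depends on the input letter and the states of factors $1,\dots,k-1$, can be realized by the inter-block Transformer layers. The depth $L$ and width $d$ are then bounded by the number and sizes of the factors, which gives finiteness.

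For (i), we use the left-regular permutation representation of a finite group $G$ on $\Rset^{|G|}$. Each group element $g$ acts by a permutation matrix $\sfP_g$, and the state is the basis vector $\be_g$. This generalizes the cyclic construction of Theorem~\ref{th:mod_counting}. Since the input to level $k$ is a pair consisting of a letter and the previous levels' states, the rational head at level $k$ needs transitions indexed by that pair. We realize this with the stacked recurrence \eqref{eq:stack_recurrence}, treating the Transformer block's output $\bu_t^{(k-1)}$ as a one-hot encoding of the local input symbol. We then select $\sfM$ from it, either by taking the alphabet of block $k$ to be the finite set of one-hot codes, or through the Shared Basis Parameterization with coefficients $a_{\sigma,k}$ given by that one-hot vector. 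For (ii), the reset monoid is aperiodic. We realize it with a stochastic head: a reset to state $q$ is the rank-one column-stochastic matrix $\be_q \mathbf{1}^\top$, and the identity handles non-reset letters. Alternatively, because a reset depends only on the most recent reset letter, attention can locate the last reset position, consistent with star-freeness.

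For (iii), the cascade feedback is a function $\phi_k\colon \Sigma \times Q_1\times\cdots\times Q_{k-1} \to \Sigma_k$ on a finite domain, whose inputs arrive in one-hot form. We realize $\phi_k$ exactly with the position-wise FFN of block $k$. A one-hidden-layer ReLU network computes any Boolean function of one-hot inputs exactly: use one hidden unit per tuple, with threshold at arity minus one. The residual stream carries the earlier one-hot states forward, so block $k$ sees all previous levels. Finally, a linear readout maps the product state to the state of $\cA$ via the Krohn--Rhodes covering homomorphism.

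The main obstacle is making (i)--(iii) exact rather than approximate. The stacked recurrence as written is additive, $\sfM_{x_t}^{(k)}\bh + \sfV\bu$, but the wreath product needs transitions selected multiplicatively by the upstream output. I would first try to absorb this by defining level $k$'s transitions over the finite alphabet of encoded symbols. This is legitimate because that alphabet is finite, but it has to be reconciled with the recurrence's input-driven form. A second subtlety is keeping one-hot states exact under LayerNorm, which I would handle by placing the FFN readout before normalization or by exploiting LN's scale-invariance on fixed-norm codes. A last issue is that $\sfP_g$ may be orientation-reversing; it is permitted here because (i) uses unconstrained permutation matrices, not the Cayley $SO(d)$ parametrization. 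The theorem's factor indices (written $S_i$, $R_i$ with $L$) should be read loosely as the group and reset factors of the decomposition, so the final step is only counting layers.
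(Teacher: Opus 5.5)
Your route is the paper's: apply Krohn--Rhodes, put each simple group into a rational head as permutation matrices, and realize each finite feedback map $\phi_k$ exactly with an FFN lookup table. Three parts of your plan are already settled or supported by the paper.

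\textbf{The reconciliation you leave open.} The paper settles it by fiat. Its proof sets the level-$(i+1)$ transition to $\sfM^{(i+1)}_t=\mathrm{Proj}(\mathrm{FFN}(\cdot))$, i.e.\ it uses the non-linear stacked variant. Your shared-basis selection $\sfB(\bu_t)=\sum_\tau u_{t,\tau}\sfB_\tau$ is an exact instance of that variant, and you should commit to it. Under the literal recurrence \eqref{eq:stack_recurrence}, the linear part $\sfM^{(k)}_{x_t}$ is fixed by $x_t$, so a head whose state is just the group factor cannot apply a permutation chosen by upstream state. The bare simulation claim needs none of this: one head with the $0/1$ matrices of $\delta(\cdot,\sigma)$ and $\balpha=\be_{q_0}$ already gives $\bh_t=\be_{q_t}$.

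\textbf{The index pattern.} Your loose reading of $(S_L\wr\cdots\wr S_1)\wr(R_L\wr\cdots\wr R_1)$ is in fact forced. A product of the form (group)$\,\wr\,$(aperiodic) has Krohn--Rhodes group complexity at most $1$, whereas the full transformation monoid $T_3$ has complexity $2$.

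\textbf{Resets.} Your stochastic-head resets $\be_q\mathbf{1}^\top$ are exact. The paper only vaguely assigns the aperiodic parts to Transformer blocks.

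Two steps fail as written.

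\textbf{(1) The final linear readout.} The covering map $\psi$ is an arbitrary function of the tuple of component states. A linear map of the concatenated one-hot codes only yields sums $\sum_k\sfW_k\be_{q_k}$. Take two $\Zset_2$ levels with $\psi(q_1,q_2)=q_1\oplus q_2$. The codes $c_{ij}=[\be_i;\be_j]$ satisfy $c_{00}+c_{11}=c_{01}+c_{10}$, so any linear readout would force $2\be_0=2\be_1$. Use one more FFN instead, by your own lookup-table argument.

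\textbf{(2) Timing.} At position $t$ the residual stream holds the post-update states $q^{(j)}_t$. The wreath-product feedback $\phi_k$ needs the pre-update states $q^{(j)}_{t-1}$. For group levels you could invert the update, but a reset erases the earlier state. For example, a $\Zset_2$ level that toggles when the set letter $a$ is read while the level-1 flip-flop is already set cannot be driven from $(x_t,q^{(1)}_t)$. Fix this by letting each head carry a one-step delay. Use the transition $\begin{pmatrix}\sfB(\bu_t)&0\\ \sfI&0\end{pmatrix}$, which maps $[\mathbf{s}_{t-1};\mathbf{s}_{t-2}]$ to $[\mathbf{s}_t;\mathbf{s}_{t-1}]$. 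This is still a shared-basis selection, since $\sum_\tau u_{t,\tau}=1$. The paper's proof glosses over this too: it feeds $\mathrm{Embed}(q^{(1)}_{t-1})$ into the FFN as if it were available at position $t$.
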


\begin{proof}
  The Krohn-Rhodes theorem states that any finite automaton can be
  decomposed into a cascade (wreath product) of finite simple groups
  and aperiodic monoids (resets) \citep{krohn1965algebraic}.  We
  construct a parameterization of the Rational Transductor that
  physically instantiates this cascade.

  \textbf{1. The Wreath Product Structure.}  A wreath product
  $M_2 \wr M_1$ is driven by a feedback function
  $\phi: Q_1 \times \Sigma \to \text{End}(Q_2)$, where the update rule
  for the second machine depends on the state of the first:
  \begin{equation}
      q_t^{(2)} = q_{t-1}^{(2)} \cdot \phi(q_{t-1}^{(1)}, x_t).
  \end{equation}

  \textbf{2. Structural Mapping.}  We map the cascade layers
  $1 \dots L$ to the Transductor layers.
  \begin{itemize}

  \item \textbf{Group Components ($S_i$):} The Rational Feature Head
    at layer $i$ is parameterized to implement the group
    operations. For a simple group $G$, we set the transition matrices
    $\sfM_\sigma$ to be the permutation matrices representing the
    group elements.

  \item \textbf{Aperiodic/Feedback Components ($\phi$):} The
    Transformer block between layer $i$ and $i+1$ implements the
    feedback function $\phi$. The input to layer $i+1$'s head is
    $u_t = \text{FFN}(\text{LayerNorm}(z_t^{(i)}))$.
  \end{itemize}

  \textbf{3. Exact Implementation of Feedback Logic.}  The function
  $\phi$ is a map over a finite domain (discrete states $Q_1$ and
  alphabet $\Sigma$).  It is a standard result that a Feed-Forward
  Network (FFN) with ReLU activations and sufficient width can
  \emph{exactly} represent any function over a finite boolean domain
  (e.g., implementing arbitrary logic gates or look-up tables).  Thus,
  there exists a setting of the FFN weights such that:
  \begin{equation}
    \sfM_t^{(i+1)} = \text{Proj}( \text{FFN}( \text{Embed}(q_{t-1}^{(1)})
    + \text{Embed}(x_t) ) )
  \end{equation}
  exactly recovers the transition operator required by the wreath
  product logic $\phi(q_{t-1}^{(1)}, x_t)$.

  \textbf{Conclusion.}  Since the architecture contains components
  capable of representing both the algebraic primitives (via linear
  recurrence matrices) and the arbitrary Boolean glue logic (via
  FFNs), there exists a parameter configuration that exactly simulates
  the full Krohn-Rhodes cascade.
\end{proof}

\textbf{Algebraic Division of Labor.}
This structural alignment highlights a natural division of labor
within the architecture. The Rational Feature Head is uniquely suited
to implement the \emph{permutation} components (finite simple groups)
of the decomposition via orthogonal recurrence matrices—precisely the
structures that standard self-attention fails to represent
uniformly. Meanwhile, the Transformer backbone (via FFNs and attention
heads) efficiently implements the \emph{aperiodic} components (reset
logic, thresholds) and the \emph{cascading} dependencies required to
glue the decomposition together.

\textbf{Significance: A Constructive Algebraic Completion.}  This
result offers a fundamental algebraic justification for the Rational
Transductor architecture, moving beyond simple expressive capacity
arguments. It is established that standard self-attention is limited
to recognizing star-free languages, which algebraically correspond to
aperiodic monoids (threshold logic and resets). By augmenting this
aperiodic component with a linear recurrence capable of implementing
finite simple groups (permutations and cycles), the Rational
Transductor effectively physically instantiates the Krohn-Rhodes
decomposition of finite automata. In this view, the "Deep Rational
Injection" mechanism functions as the wreath product operator,
cascading the cyclic state information (from the group component) into
the aperiodic logic (of the attention component). Thus, the
architecture is not merely a heuristic ensemble, but a structurally
complete neural realization of the fundamental algebraic components
required to recognize \emph{any} regular language, explicitly
repairing the specific group-theoretic deficiency of the Transformer.

\begin{theorem}[Logical Characterization via Weighted MSO]
\label{th:logical_completeness}
Let $\text{W-MSO}[<]$ be the Weighted Monadic Second-Order Logic over
the commutative semiring $\Kset$ (in this work, the field
$\Kset = (\Rset, +, \times)$). Under the restriction of \emph{hard
  attention} (which implements $\FO[<]$ selection) and the assumption
that the Transformer layers are constrained to linear projections
(effectively disabling non-linear FFNs), the class of functions
representable by Rational Transductors $\cF_{\text{RT}}$ coincides
exactly with the class of functions definable in
$\text{W-MSO}[<]$. That is:
\begin{equation}
  \cF_{\text{RT}}[\text{HardAttn, Linear}]
  = \llbracket \text{W-MSO}[<] \rrbracket.
\end{equation}
\end{theorem}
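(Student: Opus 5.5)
The plan is to prove the two inclusions separately, using the Droste--Gastin theorem as the bridge. That theorem says a series $\Sigma^* \to \Kset$ is recognizable by a WFA exactly when it is definable in the restricted fragment of $\text{W-MSO}[<]$ (weighted first-order quantification plus the usual restrictions on universal second-order quantification). Over $\Kset = \Rset$, recognizable series coincide with rational series by the Schützenberger theorem. The theorem should therefore reduce to a single claim: under hard attention and linear layers, $\cF_{\text{RT}}$ is exactly the class of (tuples of) rational series, read out position-wise as transductions. I will fix this interpretation explicitly at the start, taking $\llbracket \text{W-MSO}[<] \rrbracket$ to mean the restricted fragment of Droste--Gastin, since unrestricted weighted universal quantification yields non-recognizable series such as $2^{|x|^2}$.

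\textbf{Inclusion $\supseteq$.} Let $\varphi$ be a restricted W-MSO sentence. Droste--Gastin gives a WFA $(\Sigma, d, \balpha, \{\sfM_\sigma\}, \bbeta)$ with $\llbracket \varphi \rrbracket(x_{1:t}) = \bbeta^\top \sfM_{x_{1:t}} \balpha$. I will instantiate the Rational Feature Head with these $\balpha$ and $\sfM_\sigma$. Then I will set $\sfW_{\text{proj}}^{(L-1)}$ so that $\bbeta^\top \bh_t$ lands in a dedicated coordinate, and make all attention and value maps zero or identity, which is allowed since the layers are linear. This realizes $\llbracket \varphi \rrbracket$ at every prefix. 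For formulas with free variables I will use the standard encoding of valuations into an extended alphabet $\Sigma \times \{0,1\}^k$, so that nothing new is needed. This direction is essentially Theorem~\ref{th:arithmetic_eval}-style bookkeeping.

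\textbf{Inclusion $\subseteq$.} I must show that every Rational Transductor in the restricted regime computes a rational series. Without FFNs and with pre-norm LayerNorm removed or treated as linear (an assumption I will state), each layer maps the stream $(\bz_t, \bh_t)$ linearly. The exception is the hard-attention selection, which outputs $\bz_{j(t)}$ for a position $j(t)$ chosen by an argmax. The plan is to show that the selection is $\FO[<]$-definable, following the paper's premise that hard attention implements $\FO[<]$ selection. The value read at the selected position is itself a rational series evaluated at $j$. The output at $t$ can then be written as $\sum_{j \le t} [\psi(j,t)]\, f(x_{1:j})$, where $\psi$ is an $\FO[<]$ formula and $f$ is rational. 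This expression is definable in restricted W-MSO: $f$ is definable by the previous direction, $[\psi]$ is a Boolean (hence unambiguous) formula, and $\sum_j$ is weighted existential quantification. Induction on depth $L$ then closes the class.

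\textbf{Main obstacle.} The hard step is justifying that argmax selection is $\FO[<]$-definable once the attention scores themselves depend on unbounded real-valued rational states. Comparing two rational series values is not in general regular; for example, $\{x : \#_a(x) > \#_b(x)\}$ arises this way. I would first try to exclude this by taking the hypothesis literally: hard attention selects positions via $\FO[<]$ predicates on tokens and positions only, so scores are not functions of $\bh$. If instead the paper intends scores computed from $\bh_t$, then the equality likely fails in the $\subseteq$ direction. In that case I would weaken it to the inclusion $\llbracket \text{W-MSO}[<] \rrbracket \subseteq \cF_{\text{RT}}[\text{HardAttn, Linear}]$, plus equality under the stated selection restriction.
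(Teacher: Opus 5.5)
Your two-inclusion skeleton is the paper's own. You use Droste--Gastin for $\supseteq$. For $\subseteq$ you rely on three claims: the recurrence is W-MSO-definable, hard-attention selection is $\FO[<]$-definable, and composition stays inside W-MSO. The two places where you depart from the paper are exactly where its argument breaks, and both of your caveats are correct.

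\textbf{First caveat: the fragment.} The paper applies Droste--Gastin to every formula of $\text{W-MSO}[<]$. The formula-to-automaton direction, however, holds only for the restricted fragment. The formula $\forall x\,\forall y\,2$ defines $2^{|x|^2}$. By contrast, every output of a Rational Transductor with hard attention and linear layers grows at most exponentially in $|x|$. This is because each coordinate is a linear combination of token embeddings and of selected coordinates of the states $\bh_j = \sfM_{x_{1:j}}\balpha$. So your restriction is forced, not a convenience. Relatedly, the paper's remark that ``state at $t$ is $q$'' is MSO-definable only makes sense for Boolean automata. To treat your $f$ as a formula you need the automaton-to-formula half of Droste--Gastin, not ``the previous direction'', which goes the other way.

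\textbf{Second caveat: the selection step.} The paper merely asserts that $\argmax_j(q_t\cdot k_j)$ is $\FO[<]$-definable. In the architecture, though, scores are computed from $\wt\bz_t = \bz_t + \sfW_{\text{proj}}\bh_t$ and are bilinear in the rational states, which is the very mechanism of Proposition~\ref{prop:virtual_tensorization}. Your suspicion is right and can be made definitive with the following construction.
\begin{itemize}
\item Take $\balpha=(0,0,1)^\top$, $\sfM_a = I + e_1e_3^\top$ and $\sfM_b = I + e_2 e_3^\top$, so that $\bh_t = (\#_a(x_{1:t}),\#_b(x_{1:t}),1)$.
\item Use one hard-attention layer, with LayerNorm off as the linear restriction requires. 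The query is $h_{t,3}=1$, and the key and value are both $h_{j,1}-h_{j,2}$; ties are harmless since value equals score.
\item At the last position this outputs $F(x)=\max_{1\le j\le|x|}\bigl(\#_a(x_{1:j})-\#_b(x_{1:j})\bigr)$.
\item The Hankel block of $F$ with rows $b^m$ ($m\ge1$) and columns $a^n$ ($n\ge0$) has entries $\max(-1,n-m)$. Consecutive row differences are the staircase vectors $\mathbf{1}[n\ge m]$, so the rank is infinite.
\item By Fliess' theorem $F$ is not rational, hence not definable in restricted W-MSO.
\end{itemize}
Combined with the first caveat, no choice of fragment makes the stated equality true when scores may read $\bh$. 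With the restricted logic $\subseteq$ fails, and with the full logic $\supseteq$ fails. There is no argument in the paper's proof that you are missing here. Drop the hedge and commit to your fallback: $\supseteq$ holds outright for restricted W-MSO, and equality holds once attention selection is required to ignore the rational data.

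\textbf{Making the fallback airtight.} State the remaining assumptions explicitly.
\begin{itemize}
\item Require unique hard attention with an $\FO[<]$-definable tie-break. With all scores equal and value $[x_j=a]$, averaging over ties yields $\#_a(x)/|x|$. Its Hankel block on rows $a^m$ and columns $b^n$ is $m/(m+n)$, a row-scaled nonsingular Cauchy matrix, so it is not rational.
\item Require LayerNorm to be disabled.
\item Use a depth-induction invariant of the form ``restricted W-MSO series with one free position variable'' (equivalently, rational series over $\Sigma\times\{0,1\}$), rather than $f(x_{1:j})$. Without causal masking, the value at position $j$ in layers $l\ge2$ depends on the whole string.
\item A Boolean $\FO[<]$ formula is not automatically unambiguous in the weighted semantics, where disjunction and $\exists$ sum. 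You need the standard rewriting into an unambiguous equivalent so that $[\psi(j,t)]$ is genuinely $0/1$-valued.
\end{itemize}
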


\begin{proof}
  We establish the equality by mutual inclusion.

  \textbf{1. Lower Bound
    ($\llbracket \text{W-MSO} \rrbracket \subseteq
    \cF_{\text{RT}}$).}  A fundamental result by
  \citet{droste2007weighted} establishes that for any formula
  $\phi \in \text{W-MSO}[<]$ over a field, there exists a Weighted
  Finite Automaton (WFA) $\cA_\phi$ that computes the same
  series.  Since the Rational Feature Head of a Transductor can
  implement any WFA (by learning the transition matrices
  $\sfM_\sigma$), and the Transformer layers can implement the linear
  readout (by setting $\sfW_{\text{proj}}$ to inject the final state
  and attention to identity), every W-MSO formula is realizable by a
  Rational Transductor.

  \textbf{2. Upper Bound
    ($\cF_{\text{RT}} \subseteq \llbracket \text{W-MSO}
    \rrbracket$).}  We show that the computation of a Rational
  Transductor is definable in W-MSO.
  \begin{itemize}

  \item \textbf{Recurrence:} The linear update
    $\bh_t = \sfM_{x_t} \bh_{t-1}$ is a regular recurrence, which is
    known to be definable in MSO (specifically, the relation "state at
    $t$ is $q$" is MSO-definable).

  \item \textbf{Hard Attention:} Hard attention performs a selection
    $y_t = x_{k}$ where $k = \argmax_j (q_t \cdot k_j)$. The $\argmax$
    and indexing operations are First-Order ($\FO[<]$)
    definable relations.

  \item \textbf{Composition:} Since
    $\FO[<] \subset \MSO[<]$, the composition of the
    recurrence (MSO) and the attention mechanism (FO) remains within
    W-MSO.

  \end{itemize}
  Thus, the entire input-output mapping of the hard-attention Rational
  Transductor can be described by a single Weighted MSO formula.
\end{proof}

\textbf{Soft Attention and Extended Expressivity.}
We note that standard Transformers use \emph{soft attention}, which
strictly exceeds the expressivity of $\FO[<]$ and can
approximate specific Context-Free Languages like Dyck-1 ($a^n b^n$)
\citep{merrill2024transformers}. Consequently, a Rational Transductor
with soft attention theoretically exceeds the W-MSO upper
bound. However, the hard-attention equivalence (Theorem
\ref{th:logical_completeness}) serves to formalize the specific
contribution of the Rational Head: it provides the \emph{monadic
  quantifiers} (state tracking) that are structurally absent in the
attention mechanism, regardless of precision.

\paragraph{Remark on Logical Completeness (Weighted MSO).}
This algebraic perspective naturally extends to descriptive
complexity. It is a classical result that Weighted Finite Automata
are expressively equivalent to \emph{Weighted Monadic Second-Order
  Logic} ($\text{W-MSO}[<]$) \citep{droste2007weighted}. Standard
Transformers are known to correspond to First-Order Logic
($\text{FO}[<]$), which effectively captures the "aperiodic" component
of language but fails to express the "group" component (modulo
counting). By structurally integrating a WFA (which captures the full
power of MSO on sequences) with Attention (FO), the Rational
Transductor architecture effectively bridges the gap between
First-Order and Second-Order logic. This implies the model is
\emph{logically complete} for quantitative regular properties,
contrasting with standard Transformers which are strictly limited to
first-order expressivity.

\subsection{Structural Characterization}

We can rigorously characterize the class of functions computable by
Rational Transductors $\sF_{\text{RT}}$ as the exact composition of
the Transformer class with the class of Vector-Valued Rational
Functions.
\begin{definition}[Vector-Valued Rational Functions]
  Let $\Sigma$ be an input alphabet. A function
  $\Phi\colon \Sigma^* \to (\Rset^d)^*$ is a \emph{Vector-Valued Rational
    Function} if it is realizable by a Weighted Finite Automaton
  (WFA). That is, there exists a linear representation
  $(\balpha, \{\sfM_\sigma\}_{\sigma \in \Sigma})$ such that for any
  input $x = (x_1, \dots, x_T)$, the output is the sequence of state
  vectors $\Phi(x) = (\bh_1, \dots, \bh_T)$ defined by the recurrence
  $\bh_t = \sfM_{x_t} \bh_{t-1}$ (with $\bh_0 = \balpha$). We denote
  this class as $\sT_{\text{Rat}}$.
\end{definition}

\begin{theorem}[Decomposition Characterization]
\label{th:decomposition}
Let $\sT_{\text{Rat}}$ be the class of vector-valued rational
functions defined above. Let $\sF_{\text{TF}}$ be the class of
functions computable by finite-depth Transformers. The class of
Rational Transductor functions is exactly the composition of these two
classes:
\begin{equation}
    \sF_{\text{RT}} = \sF_{\text{TF}} \circ \sT_{\text{Rat}}
\end{equation}
That is, a function $F$ is a Rational Transductor if and only if
$F(x) = G(\Phi(x))$ for some $G \in \sF_{\text{TF}}$ and
$\Phi \in \sT_{\text{Rat}}$.
\end{theorem}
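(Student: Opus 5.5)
We prove the equality $\sF_{\text{RT}} = \sF_{\text{TF}} \circ \sT_{\text{Rat}}$ by mutual inclusion. The key observation is that in the canonical (sidecar) architecture of Section~\ref{sec:deep_injection}, the rational state sequence $(\bh_1,\dots,\bh_T)$ is computed from the token sequence alone, by the recurrence \eqref{eq:linear-recurrence}. It never receives feedback from the backbone $\bz_t^{(l)}$. The Transformer part therefore only ever sees the tokens together with a fixed precomputed sequence of vectors. Throughout, we take $\sF_{\text{TF}}$ to mean finite-depth Transformers acting on sequences of input vectors, i.e., with a linear token embedding rather than a lookup table. Under this convention, any token lookup can be absorbed into the rational stream, as explained next.

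\textbf{Inclusion $\sF_{\text{RT}} \subseteq \sF_{\text{TF}} \circ \sT_{\text{Rat}}$.} Fix a Rational Transductor with rational head $(\balpha,\{\sfM_\sigma\})$ of dimension $d$, token embedding $E$, and injection maps $\sfW_{\text{proj}}^{(l)}$, $l=0,\dots,L-1$. The first step is to make the token identity available as part of a rational state.
\begin{enumerate}
\item Build an augmented WFA of dimension $d + |\Sigma| + 1$. Its state is the direct sum of the original head and a \emph{one-hot token register}.
\item The register uses a constant coordinate fixed at $1$ and transitions $\sfM'_\sigma$ that reset the one-hot block to $\be_\sigma$ via the constant coordinate: the block column of $\be_\sigma$ equals the constant coordinate, and all other register entries are zeroed.
\item These matrices are block-diagonal with a rank-one reset block, hence valid WFA transitions. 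By the Parallel Combination construction, the augmented state at time $t$ is $(\bh_t, \be_{x_t}, 1)$.
\end{enumerate}
Let $\Phi$ be the sequence of these augmented states. Define $G$ as the Transformer whose input embedding is the linear map $(\bh,\be_\sigma,1)\mapsto E\be_\sigma + \sfW_{\text{proj}}^{(0)}\bh$. Deeper injections $\sfW_{\text{proj}}^{(l)}\bh_t$ need fresh access to $\bh_t$ at layer $l$. We obtain it by reserving extra residual coordinates that carry a copy of $\Phi(x)_t$ unchanged through all layers, with attention and FFN outputs zero on those coordinates. Each layer then adds the linear readout of these coordinates to its input. Composing gives $G\circ\Phi$ equal to the transductor.

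\textbf{Inclusion $\sF_{\text{TF}} \circ \sT_{\text{Rat}} \subseteq \sF_{\text{RT}}$.} Given $G$ and $\Phi$, take the transductor whose rational head is the WFA of $\Phi$. Set its token embedding to $0$ and let $\sfW_{\text{proj}}^{(0)}$ equal $G$'s input embedding. Set $\sfW_{\text{proj}}^{(l)}=0$ for $l\ge 1$, and copy the remaining layers of $G$ unchanged. Then the layer-$0$ input is exactly $G$'s embedded input, and the outputs coincide.

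\textbf{Main obstacle.} The delicate step is the pass-through channel in the first inclusion. Under Pre-Norm, LayerNorm acts on the full residual vector, so the reserved coordinates perturb the normalized input that attention and the FFN see. I would handle this by including those coordinates in $G$'s design from the start, with the attention/FFN query, key, and value matrices ignoring them. LayerNorm's scale then still depends on them, however. If this cannot be neutralized, the fallback is to interpret $\sF_{\text{TF}}$ as closed under adding fixed side channels. Concretely, one could define $\sF_{\text{TF}}$ as Transformers that receive an auxiliary per-position input at each layer, which is the natural reading of Theorem~\ref{th:decomposition}, and state that convention explicitly.
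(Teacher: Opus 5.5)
Your proposal follows the paper's route. The paper also argues by mutual inclusion. For $\sF_{\text{RT}} \subseteq \sF_{\text{TF}} \circ \sT_{\text{Rat}}$ it uses a residual pass-through of the rational state plus per-layer linear readouts. For the converse it injects through $\sfW_{\text{proj}}^{(0)}$ with all deeper projections set to zero.

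Where you deviate, you are more careful than the paper. Its first direction builds a Transformer on $u_t = [\bx_t; \bh_t]$, so strictly it proves $F(x) = G(x, \Phi(x))$ rather than $F = G \circ \Phi$. Your one-hot register is exactly what folds the token into a rational $\Phi$: the rank-one block $\begin{pmatrix} 0 & \be_\sigma \\ 0 & 1 \end{pmatrix}$, taken in direct sum with the head. Likewise, zeroing the token embedding in the converse is needed for the layer-$0$ inputs to coincide, which the paper leaves implicit.

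The obstacle you flag is genuine, and the paper does not address it. It simply asserts that identity attention/FFN weights carry $\bh_t$ forward and that $\sfW_{\text{proj}}^{(l)}\bh_t$ is ``added to the processing stream.'' Two things need fixing.

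\emph{First}, a standard layer cannot add anything to its own input before its sublayers, so your step ``each layer adds the linear readout to its input'' is not a Transformer operation. Realize it instead by having the FFN of layer $l-1$ write $\sfW_{\text{proj}}^{(l)}\bh_t$ into the main coordinates via $\mathrm{ReLU}(u) - \mathrm{ReLU}(-u) = u$; for $l = 0$, the input embedding does this. Attention and the original FFN weights should be zero on the reserved coordinates. For LayerNorm-free backbones this makes the simulation exact.

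\emph{Second}, under Pre-Norm the statistics of $[\wt\bz_t^{(l)}; \Phi(x)_t]$ differ from those of $\wt\bz_t^{(l)}$. The normalized main coordinates therefore acquire an input-dependent rescaling, and a shift, relative to $\mathrm{LN}(\wt\bz_t^{(l)})$. The rescaling cannot be undone by fixed weights in general, because softmax attention and biased ReLU FFNs are not positively homogeneous. The ReLU trick above would then also only produce a shifted, rescaled copy of the reserved coordinates.

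So the theorem holds as stated only for LayerNorm-free Transformers, or under your side-channel convention for $\sF_{\text{TF}}$. Under that convention the first inclusion becomes essentially definitional. You should say explicitly which reading you adopt.
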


\begin{proof}
  Direction 1
  ($\sF_{\text{RT}} \subseteq \sF_{\text{TF}} \circ
  \sT_{\text{Rat}}$): We show that the Deep Rational Injection
  mechanism can be simulated by a standard Transformer given access to
  the rational states. A Rational Transductor computes layers via
  $\bz_t^{(l+1)} = \text{TF}_l\paren*{\bz_t^{(l)} + \sfW_{\text{proj}}^{(l)}
  \bh_t}$.  Consider a standard Transformer $G$ taking the concatenated
  input $u_t = [\bx_t; \bh_t]$.  A standard Transformer can
  simulate the Transductor's deep injection by: (1) Dedicating a
  subspace of its residual stream to copy $\bh_t$ forward to all
  layers (using identity attention/FFN weights); and (2) At each layer
  $l$, applying the linear operation corresponding to
  $\sfW_{\text{proj}}^{(l)}$ to this subspace and adding it to the
  processing stream.  Thus, any function computed by a Transductor is
  computable by a standard Transformer acting on the sequence
  $(x, \Phi(x))$, where $\Phi \in \sT_{\text{Rat}}$.

  Direction 2
  ($\sF_{\text{TF}} \circ \sT_{\text{Rat}} \subseteq
  \sF_{\text{RT}}$): Consider an arbitrary composition
  $F(x) = G(\Phi(x))$, where $G$ is a Transformer and $\Phi$ is a
  rational function.  By definition, $\Phi(x)$ corresponds to the
  state sequence $\bh_t$ of some linear representation. A Rational
  Transductor can implement this composition by: (1) Configuring its
  WFA head to generate $\bh_t$; (2) Setting the first projection
  $\sfW_{\text{proj}}^{(0)}$ to inject $\bh_t$ directly into the input
  embedding (simulating the input to $G$); and (3) Setting subsequent
  projections $\sfW_{\text{proj}}^{(l)} = 0$ for $l > 0$. The
  remaining Transformer layers then implement $G$ exactly.
\end{proof}

\begin{corollary}[Maximality Under Linear Recurrence]
\label{cor:maximality}
Any extension of finite-depth Transformers that augments the model
with a fixed-dimensional, associative, linear state update computable
via parallel prefix operations cannot be strictly more expressive than
Rational Transductors. Any further expressivity gain requires either
non-linear state dynamics or depth growing with input length.
\end{corollary}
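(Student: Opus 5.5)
The plan is to reduce the corollary to Theorem~\ref{th:decomposition} by showing that any such extension factors through a vector-valued rational function. First, fix what the hypothesis means. An extension of this kind augments a finite-depth Transformer with a state $\bs_t \in \Rset^D$, $D$ fixed, updated by $\bs_t = \sfA_{x_t}\bs_{t-1} + \bb_{x_t}$. The update is linear (affine), driven by the input, and computed through an associative combine $\oplus$ so that a parallel prefix scan applies. The first step is to show that every associative combine on linear maps that is compatible with a prefix scan over a fixed-dimensional state is, up to a linear change of coordinates, composition of matrices. So I will represent each token by an element $\sfN_{x_t}$ of a matrix monoid and write $\bs_t = \sfN_{x_t}\cdots\sfN_{x_1}\bs_0$. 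Affine updates are absorbed by the homogenization trick already used in Theorem~\ref{th:arithmetic_eval}: set $\sfN_\sigma = \begin{pmatrix}\sfA_\sigma & \bb_\sigma\\ 0 & 1\end{pmatrix}$ and augment the state to $(\bs_t,1)$. This gives a WFA of dimension $D+1$, so the state sequence $\Phi(x)=(\bs_1,\dots,\bs_T)$ lies in $\sT_{\text{Rat}}$.

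Second, handle the way the state is read. Because the state update depends only on the input, the downstream model is a finite-depth Transformer that receives $x$ and $\Phi(x)$ through linear injections at some or all layers. This is the same shape as the simulation argument in Direction~1 of Theorem~\ref{th:decomposition}. A standard Transformer can reserve a residual subspace that carries $\Phi(x)$ upward and can apply any layer-specific linear read-out. The extended model therefore computes $G(\Phi(x))$ with $G \in \sF_{\text{TF}}$. Direction~2 of Theorem~\ref{th:decomposition} then places this function in $\sF_{\text{RT}}$, which proves the first claim.

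Third, the ``further expressivity'' claim is a contrapositive. Suppose a strict gain over $\sF_{\text{RT}}$ were possible. Then either the state update stops being a fixed-dimension linear input-driven map, which is the non-linear dynamics case, or the state feeds back through Transformer layers whose number grows with $T$, which is the increasing-depth case. Stacked feedback of bounded depth does not escape, because the Reducibility of Cascaded WFAs proposition shows that linear cascades collapse to a single WFA.

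The main obstacle is the first step: making precise that ``associative and computable by parallel prefix'' forces a matrix-product representation. Associative operators in general need not be linear. I would try to resolve this by definition: require the combine to act by composing the linear maps, so the monoid is a submonoid of $\mathrm{End}(\Rset^{D})$ or its affine extension. I would then argue that any faithful finite-dimensional linear representation of that monoid yields a WFA. The data-dependent gating in selective SSMs, where the transition is a fixed function of the token alone, fits under this definition, since $\sfA_{x_t}$ is still indexed by $x_t$.
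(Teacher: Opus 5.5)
Your Steps 1--2 match the paper's proof. The paper simply takes an associative linear update to \emph{be} a recurrence $\bh_t=\sfM(x_t)\bh_{t-1}$, i.e.\ a WFA, and then invokes $\sF_{\text{RT}}=\sF_{\text{TF}}\circ\sT_{\text{Rat}}$ (Theorem~\ref{th:decomposition}). Your homogenization of affine updates into a $(D+1)$-dimensional WFA is consistent with that argument and a bit more careful. The obstacle you flag, settling ``associative $\Rightarrow$ matrix product'' by definition, is resolved in the same way in the paper.

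The problem is Step 3. You claim that ``stacked feedback of bounded depth does not escape, because the Reducibility of Cascaded WFAs proposition shows that linear cascades collapse.'' That does not hold. The proposition only covers a \emph{linear} coupling between heads, which is what makes the joint transition block-triangular. Feedback through even one Transformer block passes through FFN non-linearities, LayerNorm and softmax attention, which also mixes positions. The joint update is then no longer a per-token matrix product, and the proposition says nothing about it. This is exactly the Non-Linear Stacked Transductor that the paper sets apart in Theorem~\ref{th:nonlinear_irreducibility}. It is also what the deeper layers of a selective SSM do.

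A concrete two-head example shows the escape:
\begin{itemize}
\item Let the first head compute the balance $c_t$, the number of $a$'s minus the number of $b$'s in $x_{1:t}$.
\item Let an FFN output $[c_t>0]$.
\item Let the second head accumulate it via \eqref{eq:stack_recurrence} with $\sfM^{(2)}=\sfI$.
\end{itemize}
On $a^n b^m$ the second state equals $n+\min(m,n-1)$. Its Hankel block has unbounded rank, so by Fliess' theorem it is not a rational series at all. The correct move is to put bounded-depth non-linear feedback into your case (a), since the joint state dynamics are non-linear, rather than to argue that it collapses.

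Your case split is also not exhaustive. A linear, associative, input-driven state whose dimension grows with $T$ is neither non-linear nor depth-growing, yet you fold it into ``non-linear dynamics.'' The paper gives no argument for the second sentence of the corollary; it is an interpretive remark. Step 3 should be presented as such a remark, not as a contrapositive proof.
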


\begin{proof}
  An "associative, linear state update" is defined by a recurrence of
  the form $\bh_t = \sfM(x_t) \bh_{t-1}$. This is mathematically
  isomorphic to the state evolution of a Weighted Finite
  Automaton. Since Transductors are defined
  (Theorem~\ref{th:decomposition}) as the composition of Transformers
  with the \textit{entire class} of such rational functions
  $\sT_{\text{Rat}}$, they necessarily subsume any specific
  instance of this recurrence pattern.
\end{proof}

\textbf{Minimal Expressive Extension.}  Transductors constitute a
minimal extension of standard Transformers that suffices to escape the
$\AC^0$ barrier while preserving parallelizability. Thus,
Transductors represent the smallest algebraically closed class of
extensions that enable exact regular-language computation and length
generalization.

\begin{proposition}[Virtual Tensorization via Attention]
\label{prop:virtual_tensorization}
Let $\bh_t \in \Rset^d$ be the rational feature state at time $t$.
The Deep Rational Injection mechanism, combined with a single
Self-Attention head, allows the Transductor to compute decision
boundaries that are linear in the Kronecker product space
$\bh_t \otimes \bh_{t'}$. Consequently, a Rational Transductor with
state dimension $d$ can approximate the expressive power of a
higher-order Weighted Finite Automaton with state dimension $d^2$,
without explicitly materializing the $O(d^6)$ transition tensor.
\end{proposition}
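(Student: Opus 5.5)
The approach is to write out the attention logit that a single head computes on the injected stream and observe that it is a bilinear form in the rational states. By Deep Rational Injection, the input to layer $l$ is $\wt \bz_t^{(l)} = \bz_t^{(l)} + \sfW_{\text{proj}}^{(l)} \bh_t$. Attention first applies LayerNorm, so I would restrict to a subspace of the residual stream reserved for the rational features. I would zero out the semantic component there, e.g.\ at $l=0$ with token embeddings orthogonal to the range of $\sfW_{\text{proj}}^{(0)}$. I would also note that LN only rescales by a scalar per position, which I absorb. Under these choices the query and key vectors become $q_t = \sfW_Q \sfW_{\text{proj}} \bh_t$ and $k_{t'} = \sfW_K \sfW_{\text{proj}} \bh_{t'}$. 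The logit is then
\begin{equation}
  s_{t,t'} = \bh_t^\top \sfA \bh_{t'} = \langle \operatorname{vec}(\sfA), \bh_t \otimes \bh_{t'} \rangle,
  \qquad \sfA = \sfW_{\text{proj}}^\top \sfW_Q^\top \sfW_K \sfW_{\text{proj}}.
\end{equation}
This is the first claim: the pre-softmax scores are linear functionals on $\bh_t \otimes \bh_{t'}$.

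The next step is to show that every functional $\langle \bw, \bh_t \otimes \bh_{t'}\rangle$ is reachable. Any $\bw \in \Rset^{d^2}$ reshapes to a $d \times d$ matrix $\sfA$. With head dimension at least $d$ and $\sfW_{\text{proj}}$ injective on its range, I can factor $\sfA = \sfB^\top \sfC$ and set $\sfW_Q, \sfW_K$ accordingly. This gives the full space of linear decision boundaries in the product space. To expose this as a feature to downstream layers, rather than only as a score, I would use hard attention (or a large temperature) so that the head selects positions according to the sign or threshold of $s_{t,t'}$. Alternatively, with the value vector set to a constant, the output is a monotone function of $s_{t,t'}$, and an FFN can threshold it.

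To link this to a higher-order WFA, I would invoke the tensor-product construction from the Note on Cross-Products. The automaton with transitions $\sfM_\sigma \otimes \sfM_\sigma$ and initial vector $\balpha \otimes \balpha$ has dimension $d^2$, so explicitly its transitions would cost $O(d^6)$ per matrix–matrix step in the scan. Its state at the pair $(t,t')$ is $\bh_t \otimes \bh_{t'}$ when both coordinates run on their own prefixes, since $(\sfM_{x_{1:t}} \otimes \sfM_{x_{1:t'}})(\balpha\otimes\balpha) = \bh_t \otimes \bh_{t'}$. Any linear readout $\bbeta$ of that product automaton is therefore exactly a score $s_{t,t'}$ of the form above. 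For the diagonal $t=t'$, which gives the Hadamard square series, the head can read off $s_{t,t}$ by attending to itself via a positional component.

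The main obstacle is the word \emph{approximate}, together with the softmax and LayerNorm. Softmax normalizes across $t'$, so the head does not output $s_{t,t'}$ itself. Worse, LN can destroy the exact bilinearity when the semantic part is not isolated. I would handle this in two ways. First, I would restrict attention to the rational subspace, with LN acting as a known per-position scalar that can be folded into a rescaled readout. Second, I would state the result as exact recovery of the bilinear scores and of their argmax or thresholds, with softmax outputs approximating them as temperature grows. Any remaining gap between exact realization and approximation would be stated as a limitation rather than proved away.
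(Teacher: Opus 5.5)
Your core computation is the paper's own, and it is correct. You expand the query--key logit on the injected stream $\wt{\bz}_t = \bz_t + \sfW_{\text{proj}}\bh_t$, keep the rational--rational term $\bh_t^\top \sfW_{\text{proj}}^\top \sfW_Q^\top \sfW_K \sfW_{\text{proj}} \bh_{t'}$, and rewrite it with the vectorization identity as a linear functional of the Kronecker product. The paper only ``focuses'' on that one of the four terms and ignores LayerNorm and softmax, so several of your additions are real improvements:
\begin{itemize}
\item The subspace isolation actually eliminates the cross terms.
\item The factorization $\sfA = \sfB^\top \sfC$, with head dimension at least $d$ and $\sfW_{\text{proj}}$ of full column rank, shows that every functional on $\Rset^{d^2}$ is attained. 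The paper never shows this.
\item Tying the ``$d^2$-dimensional WFA'' to $\sfM_\sigma \otimes \sfM_\sigma$ is more defensible than the paper's heuristic.
\end{itemize}
Two precisions on that last point. For $t \neq t'$, $\bh_t \otimes \bh_{t'} = (\sfM_u \otimes \sfM_v)(\balpha \otimes \balpha)$ with $(u,v) = (x_{1:t}, x_{1:t'})$. This is a two-tape (rational-kernel) representation over $\Sigma^* \times \Sigma^*$, not a state of the one-tape product automaton. On the diagonal, only the symmetric part of $\sfA$ matters. The series $x \mapsto \bh(x)^\top \sfA \bh(x)$ is realized on the invariant symmetric subspace of dimension $d(d+1)/2$, so its Hankel rank is at most $d(d+1)/2 < d^2$. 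Hence the claim can only concern this specific product automaton, never an arbitrary $d^2$-state WFA.

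The steps meant to turn a logit into something downstream layers can use fail as written.
\begin{itemize}
\item With a constant value vector $\bc$, the head outputs $\sum_{t'} \mathrm{softmax}_{t'}(s_{t,\cdot})\,\bc = \bc$ because the weights sum to one. No information about any $s_{t,t'}$ survives.
\item Forcing the head onto position $t$ with a positional term likewise returns the value at $t$, independent of $s_{t,t}$. A logit only matters relative to competing logits.
\end{itemize}
What works is a two-way competition between $t'$ (or $t$ itself, for the diagonal) and a reference position. The reference should have fixed logit $0$ (e.g.\ no rational injection there) and a different value, and all other positions must be suppressed. The weight on $t'$ is then $1/(1+e^{-s_{t,t'}})$, and its comparison with $1/2$ is exactly the sign of $\langle \operatorname{vec}(\sfA), \bh_t \otimes \bh_{t'} \rangle$. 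Hard attention, by contrast, gives only $\argmax_{t'} s_{t,t'}$. That is, it yields comparisons $\bh_t^\top \sfA (\bh_{t'_1} - \bh_{t'_2})$, not a threshold on a single pair.

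LayerNorm is also not a known scalar you can fold into a readout. Centering and the diagonal gain are linear and absorbable, and the bias can be set to zero. But the division by $\kappa_t$ depends on $\bh_t$ itself, so the logit becomes $s_{t,t'}/(\kappa_t \kappa_{t'})$. The key-side factor varies with $t'$ and can change an argmax. Only per-pair signs survive exactly, and that is precisely what the two-way competition reads out. It suffices once you append a constant coordinate to $\bh_t$ (a block with $\sfM_\sigma = 1$), so that affine thresholds become homogeneous. Alternatively, orthogonal transitions keep $\|\bh_t\| = \|\balpha\|$. Then, at $l=0$, an isometric $\sfW_{\text{proj}}$, equal-norm token embeddings, and all subspaces orthogonal to $\mathbf{1}$ make $\kappa_t$ constant, and LN is exactly linear.
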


\begin{proof}
  Consider the input to the attention mechanism at layer $l$, denoted
  by $\wt \bz_t$. Due to Deep Rational Injection, this
  vector is the sum of the semantic embedding and the projected
  rational state:
    \begin{equation}
        \wt \bz_t = \bz_t + \sfW_{\text{proj}} \bh_t.
\end{equation}
    The self-attention mechanism computes alignment scores $A_{t,t'}$
    between positions $t$ (query) and $t'$ (key) via the inner product
    of projected representations:
    \begin{align}
      \text{Score}(t, t')
      & = (\sfW_Q \wt \bz_t)^\top (\sfW_K \wt \bz_{t'}) \\
      & = (\bz_t + \sfW_{\text{proj}} \bh_t)^\top \sfW_Q^\top \sfW_K (\bz_{t'} + \sfW_{\text{proj}} \bh_{t'}).
\end{align}
    Expanding this quadratic form yields four terms. We focus on the
    term governing the interaction between the rational states:
    \begin{equation}
      S_{\text{rational}}(t, t')
      = \bh_t^\top \paren*{ \sfW_{\text{proj}}^\top \sfW_Q^\top \sfW_K \sfW_{\text{proj}} } \bh_{t'}.
\end{equation}
    Let
    $\sfM = \sfW_{\text{proj}}^\top \sfW_Q^\top \sfW_K
    \sfW_{\text{proj}}$ be the effective interaction matrix. Using
    the vectorization identity
    $\ba^\top \sfM \bb = \text{vec}(\sfM)^\top
    (\bb \otimes \ba)$, we can rewrite the score as:
    \begin{equation}
        S_{\text{rational}}(t, t') = \text{vec}(\sfM)^\top (\bh_{t'} \otimes \bh_t).
\end{equation}
    This demonstrates that the attention mechanism implicitly computes
    a linear function over the tensor product of the states.
    Physically stacking two WFAs would produce a state space of
    dimension $d_1 + d_2$, but the attention mechanism allows the
    model to leverage the multiplicative interactions of the states,
    effectively simulating a state space of dimension $d \times d$.
    Thus, the Transductor can capture second-order dependencies (e.g.,
    correlating the state at the start of a clause with the state at
    the end) that would otherwise require a significantly larger
    linear automaton to represent.
\end{proof}

\textbf{Significance: Implicit State Expansion.}
The theoretical significance of this result is twofold. First, it
provides a rigorous justification for the "Sidecar" design (wide,
parallel recurrence) over the "Stacked" design (deep, serial
recurrence) used in recent hybrid models. While stacking linear
layers physically increases the state depth, it reintroduces
sequential dependencies that hinder training
\citep{gu2021efficiently}. In contrast, Virtual Tensorization reveals
that the \emph{interaction} between the Rational Head and the
Attention mechanism naturally simulates a higher-order automaton. By
injecting the linear state $h_t$ into the query/key projections of the
attention layer, the dot-product $\text{Attention}(Q, K)$ effectively
computes a kernel over the tensor product space $h_t \otimes h_{t'}$.
This implies that a Rational Transductor with state dimension $d$ can
approximate the decision boundaries of a much larger automaton of
dimension $d^2$ (or higher, with multiple layers), achieving the
expressive benefits of deep recurrence without incurring the
$O(L \log T)$ serialization penalty or the $O(d^6)$ cost of explicitly
simulating higher-order tensor dynamics. Thus, the architecture
achieves a "virtual depth" via the multiplicative interactions of
attention, maintaining the optimal $O(\log T)$ parallel efficiency of
the linear scan while capturing the complex, hierarchical dependencies
usually associated with deep, non-linear stacks.

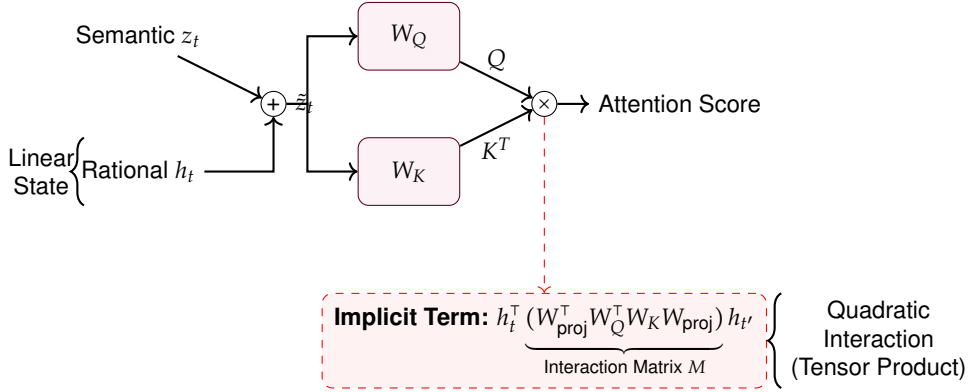
\begin{figure}[t]
  \centering
      \begin{tikzpicture}[
        font=\sffamily\small,
        scale=0.9, transform shape,
        tensor/.style={draw=purple!50!black, fill=purple!5, rectangle, rounded corners, minimum height=1cm, minimum width=1.5cm},
        vec/.style={draw=black, rectangle, fill=white, minimum height=1.5cm, minimum width=0.4cm},
        op/.style={circle, draw=black, inner sep=1pt}
    ]

    % Inputs
    \node (z_t) at (0, 2) {Semantic $z_t$};
    \node (h_t) at (0, 0) {Rational $h_t$};

    % Injection (Sum)
    \node[op] (plus) at (2, 1) {+};
    \draw[->, thick] (z_t) -- (plus);
    \draw[->, thick] (h_t) -| (plus);
    \node[right] at (plus.east) {$\tilde{z}_t$};

    % Projections (Q and K)
    \node[tensor] (WQ) at (4, 2) {$W_Q$};
    \node[tensor] (WK) at (4, 0) {$W_K$};
    
    \draw[->, thick] (plus) -- (2.5, 1) |- (WQ);
    \draw[->, thick] (plus) -- (2.5, 1) |- (WK);

    % Dot Product
    \node[op] (dot) at (6, 1) {$\times$};
    \draw[->, thick] (WQ) -- (dot) node[midway, above] {$Q$};
    \draw[->, thick] (WK) -- (dot) node[midway, below] {$K^T$};

    % Result / Callout
    \node (score) at (8, 1) {Attention Score};
    \draw[->, thick] (dot) -- (score);

    % The "Virtual Tensor" Annotation
    \node[draw=red, dashed, inner sep=5pt, rounded corners, fill=red!5] (expansion) at (6, -2.5) {
        \textbf{Implicit Term:}
        $h_t^\top \underbrace{(W_{\text{proj}}^\top W_Q^\top W_K W_{\text{proj}})}_{\text{Interaction Matrix } M} h_{t'}$
    };
    
    \draw[->, dashed, color=red] (dot) -- (expansion);
    \draw[decorate, decoration={brace, mirror, amplitude=5pt}, thick] (-0.8, 0.5) -- (-0.8, -0.5) node[midway, left, align=center] {Linear\\State};
    \draw[decorate, decoration={brace, mirror, amplitude=5pt}, thick] (9.5, -1.8) -- (9.5, -3.2) node[midway, right, align=center] {Quadratic\\Interaction\\(Tensor Product)};

    \end{tikzpicture}
  \caption{Virtual Tensorization. By injecting the linear rational
    state $h_t$ into the Attention mechanism, the dot product $QK^T$
    implicitly computes quadratic terms of the form
    $h_t^\top M h_{t'}$. This effectively simulates a kernel over the
    tensor product space $h_t \otimes h_{t'}$, enabling the model to
    capture higher-order dependencies without explicitly materializing
    the $O(d^2)$ state space.}
    \label{fig:virtual_tensor}
\end{figure}

\paragraph{Remark: Neural Implementation of Automata Operations.}
The Rational Transductor architecture can be understood as a physical
realization of the closure properties of Rational Power Series:
\begin{itemize}

\item \textbf{Parallel Heads as Direct Sum ($\oplus$):} Instantiating
  multiple independent rational heads (e.g., in the Universal
  Transductor) corresponds to the \textbf{Sum} operation. If head $A$
  tracks parity and head $B$ tracks modulo-3, the concatenated state
  $h = [h_A; h_B]$ represents the direct sum automaton
  $\cA_A \oplus \cA_B$, capable of tracking both
  features in parallel.

\item \textbf{Attention as Tensor Product ($\otimes$):} The
  \textbf{Intersection} of two regular languages (e.g., "strings that
  have odd parity AND length $0 \pmod 3$") requires a state space
  isomorphic to the tensor product of the constituent automata. While
  the rational layer computes the sum, the subsequent non-linear
  mixing (via Attention or MLP) approximates the
  \textbf{Cross-Product} or \textbf{Hadamard Product}, allowing the
  model to form complex decision boundaries based on the conjunction
  of independent rational features.

\item \textbf{Depth as Composition ($\circ$):} Finally, stacking
  blocks corresponds to the \textbf{Composition} of transductions (or
  the cascade product), allowing for hierarchical feature extraction
  where the rational state at layer $l$ depends on the semantic
  features extracted at layer $l-1$.

\end{itemize}
Thus, the architecture provides a complete differentiable substrate
for the algebra of rational series.

\subsection{Circuit Complexity Characterization}

We now characterize the computational power of Transductors from the
perspective of circuit complexity. This framing provides a rigorous
explanation for why Transductors solve the Parity Gap.

\textbf{Transformers: $\AC^0$, $\TC^0$, and Robustness.}
Finite-depth Transformers with hard attention compute Boolean
functions within uniform $\AC^0$ \citep{hahn2020theoretical}. We note
that "standard" Transformers using \emph{soft attention} are strictly
more expressive, falling into the complexity class $\TC^0$ (Threshold
Circuits) \citep{merrill2022saturated}. While $\TC^0$ models can
theoretically express Parity (e.g., via global averaging), they lack
the inductive bias to learn these solutions robustly. Recent work
identifying C-RASP as a formal model of Transformer length
generalization suggests that while tasks like Modulo Counting are in
$\TC^0$, they are not expressible in the C-RASP fragment, explaining
why standard models fail to learn them in a way that generalizes to
unseen lengths. In contrast, Rational Transductors structurally
implement the exact $\PNC^1$ mechanism required.

\begin{theorem}[Circuit Upper Bound for Transductors]
\label{th:rbt-nc1-upper}
For fixed model parameters, any function computed by a Rational
Transductor lies in $\PNC^1$ (and strictly in $\NC^1$
under the assumption of finite precision/fields).
\end{theorem}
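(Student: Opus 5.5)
The plan is to exploit the decomposition $\sF_{\text{RT}} = \sF_{\text{TF}} \circ \sT_{\text{Rat}}$ from Theorem~\ref{th:decomposition}. I will bound each factor separately and then compose the two circuit families. Since composing log-depth circuits of polynomial size adds their depths, it suffices to show two things. First, the rational stream $\Phi(x) = (\bh_1,\dots,\bh_T)$ is computable by polynomial-size circuits of depth $O(\log T)$. Second, the fixed-depth Transformer $G$, applied to $(x,\Phi(x))$, is computable within the same budget.

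\textbf{Step 1 (rational stream).} Each $\bh_t = \sfM_{x_t}\cdots\sfM_{x_1}\balpha$ is an iterated product of $t$ fixed $d\times d$ matrices drawn from the finite set $\{\sfM_\sigma\}_{\sigma\in\Sigma}$, followed by one matrix--vector product. I will compute all prefix products with the parallel prefix (Blelloch) scan over a balanced binary tree, which has $O(\log T)$ levels. At each level, a single node multiplies two $d\times d$ matrices, and since $d$ is a constant this is a constant-size arithmetic circuit. Two regimes need separate treatment.
\begin{itemize}
\item \emph{Finite precision or a finite field.} Each node is a constant-size Boolean gadget with bounded fan-in, so the whole scan is a uniform $\NC^1$ circuit. (Equivalently, it is a word problem over a finite monoid, which is in $\NC^1$ by Barrington's theorem.)
\item \emph{Exact arithmetic over $\Rset$ or $\mathbb{Q}$ with fixed rational parameters.} Entries grow to $O(T)$ bits, for example in the Horner construction of Theorem~\ref{th:arithmetic_eval}. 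Here I will argue that iterated multiplication of constant-dimension integer matrices is in $\PNC^1$, or equivalently in arithmetic $\NC^1$ / $\#\NC^1$-style classes. The argument is to view the product as a balanced arithmetic formula of depth $O(\log T)$ over $\Zset$ after clearing denominators.
\end{itemize}

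\textbf{Step 2 (Transformer on top).} I will invoke the known bounds cited earlier in the paper: fixed-depth Transformers lie in uniform $\AC^0$ under hard attention \citep{hahn2020theoretical} and in $\TC^0$ under soft attention \citep{merrill2024transformers}, given polynomially bounded precision. Because $\TC^0\subseteq\NC^1$, placing these circuits on top of the depth-$O(\log T)$ scan yields total depth $O(\log T)$. I also need to check the Deep Rational Injection: the terms $\sfW^{(l)}_{\text{proj}}\bh_t$ are fixed linear maps of already-computed values, so by Direction 1 of Theorem~\ref{th:decomposition} they add only constant depth per layer.

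\textbf{Main obstacle.} The hard step is the precision interface in the exact regime. The inputs $\bh_t$ to $G$ may have $\Theta(T)$-bit magnitudes, whereas the $\TC^0$ bound of \citet{merrill2024transformers} assumes $O(\log T)$-bit precision. I would try first to state the result for the arithmetic/$\PNC^1$ class, so that additions, products and comparisons on $O(T)$-bit numbers are allowed. Iterated addition and multiplication of polynomially many such numbers are in $\TC^0\subseteq\NC^1$, which still suffices for the softmax normalization, and exponentials would be truncated to polynomial precision. In the finite-precision case this issue disappears, since every value ranges over a finite set and every gate is a bounded lookup, which yields the strict $\NC^1$ bound.
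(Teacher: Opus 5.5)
Your route matches the paper's. You bound the rational stream with a parallel-prefix scan: a finite-monoid word problem in the finite regime, and $\PNC^1$ for exact integer products. You bound the Transformer via $\TC^0\subseteq\NC^1$, then compose. In the finite-field regime this works as written. One caution applies if ``finite precision'' means rounding after each step. Then a scan over rounded matrices does not reproduce the rounded sequential recurrence, because rounding destroys associativity. Use your parenthetical version instead: view $\bh\mapsto\mathrm{round}(\sfM_\sigma\bh)$ as a DFA transition on the finite set of representable vectors, and scan over its finite transformation monoid.

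The exact-arithmetic regime has a genuine gap, exactly at the step you flag as the main obstacle, and your proposed fix does not close it.

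\begin{itemize}
\item $\PNC^1$ consists of languages $\{x : f(x)>0\}$ with $f\in\mathsf{GapNC}^1$. It allows a single sign test at the very end. Softmax, LayerNorm and ReLU apply intermediate comparisons, $\max$, division, $\exp$ and square roots to the entries of $\bh_t$. So working ``in the arithmetic/$\PNC^1$ class with comparisons allowed'' is not $\PNC^1$.
\item The Boolean alternative is to hand $G$ the bits of $\bh_t$ and use the $\TC^0$ bound. That needs the bits of an iterated product of constant-size integer matrices in Boolean depth $O(\log T)$. This problem is $\mathsf{GapNC}^1$-complete. By the paper's own tightness remark, such a circuit would give $\NC^1=\PNC^1$.
\item Your remark that iterated addition and multiplication lie in $\TC^0$ is about scalars and does not reach the matrix product, which is the crux.
\item Implemented directly, your scan costs one $\TC^0$ layer per level on $O(T)$-bit entries (bounded fan-in depth $\Theta(\log T)$). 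With $O(\log T)$ levels this gives only $\TC^1\subseteq\NC^2$, not $\NC^1$ or $\PNC^1$.
\end{itemize}

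The paper has the same hole. It is hidden in the unproved sentence that ``$\PNC^1$ is closed under composition and contains the Transformer's complexity class'', and that closure is precisely what needs an argument. A correct statement must do one of three things:
\begin{itemize}
\item restrict to finite fields or finite precision, where your argument yields $\NC^1$;
\item settle for a weaker Boolean bound such as $\TC^1$ in the exact regime;
\item define the exact-regime class explicitly, e.g.\ as $\TC^0$ post-processing of $\mathsf{GapNC}^1$-computable values, rather than calling it $\PNC^1$.
\end{itemize}
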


\begin{proof}
  The complexity class $\PNC^1$ (Probabilistic $\NC^1$)
  characterizes problems solvable by uniform arithmetic circuits of
  product-depth $O(\log T)$ and polynomial size. The Transductor
  computation proceeds in two stages:
\begin{enumerate}

\item \textbf{Rational State Computation.} The rational feature state
  $\bh_t = \sfM_{x_t} \cdots \sfM_{x_1} \balpha$ involves an iterated
  product of $T$ matrices.  While iterated multiplication of matrices
  over a finite field (or bounded width branching programs) is in
  $\NC^1$ \citep{barrington1989bounded}, the simulation of
  weighted automata over rationals or integers is complete for
  $\PNC^1$ \citep{jung1985parallel}.  Since the state update
  is associative, it admits a parallel-prefix tree decomposition of
  depth $O(\log T)$, placing it within $\PNC^1$.

\item \textbf{Transformer Processing.} Standard Transformer layers
  perform aggregation (attention) over $T$ elements.  Even under the
  stronger assumption of soft attention (which places Transformers in
  $\TC^0$), this class is contained within $\NC^1$ (using the standard
  inclusion $\TC^0 \subseteq \NC^1$) and thus within $\PNC^1$
  \citep{hahn2020theoretical}.

\end{enumerate}
Since $\PNC^1$ is closed under composition and contains the
Transformer's complexity class, the total Transductor computation lies
in $\PNC^1$.
\end{proof}

\paragraph{Remark on Tightness.}
This upper bound is essentially tight from a circuit complexity
perspective. The problem of simulating a general WFA (or iterated
integer matrix multiplication) is known to be
$\PNC^1$-complete.  Claiming a tighter bound of
$\NC^1$ for the general case would imply
$\NC^1 = \PNC^1$, solving a major open problem in
complexity theory.  However, if we restrict the model to \emph{bounded
  precision} arithmetic or operations over a finite field (effectively
treating the WFA as a DFA or NFA), the complexity collapses to
$\NC^1$ via Barrington's Theorem.  Rational Transductors thus
occupy the complexity class $\PNC^1$, which sits between
$\NC^1$ and $\sfL$ (Log-Space), strictly separating them
from the $\AC^0$ (or $\TC^0$) limitations of standard
Transformers.

\begin{proposition}[Equivalence to Linear Branching Programs]
  The rational feature head of a Transductor implements a linear
  branching program whose width equals the state dimension $d$.
  Consequently, Rational Transductors can simulate any bounded-width
  branching program with polynomial length.
\end{proposition}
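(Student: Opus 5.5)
The plan is to prove the two halves of the statement separately. First, I will show that the rational feature head is literally a linear branching program. Second, I will compile an arbitrary bounded-width branching program into a rational head, with a small amount of Transformer-side preprocessing.

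For the first half, I will unroll the recurrence \eqref{eq:linear-recurrence} over the input $x=(x_1,\dots,x_T)$.
\begin{itemize}
\item Level $t$ of the program has $d$ nodes, one per coordinate of $\bh_t$.
\item Between levels $t-1$ and $t$, the program reads the variable $x_t$. It then places an edge $j\to i$ with weight $(\sfM_{x_t})_{ij}$.
\item The source distribution is $\balpha$.
\end{itemize}
Then $h_{t,i}$ is the sum over all level-$0$-to-level-$t$ paths of the products of their edge weights, which is exactly the ``sum of paths'' reading of Figure~\ref{fig:wfa_hidden_states}. So the head is a width-$d$ linear branching program that reads the variables in the order $1,\dots,T$. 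A linear readout $\bw^\top\bh_T$ plays the role of the sink weighting. This part is pure bookkeeping.

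For the converse, I will fix a width-$w$ branching program $P$ of length $m=\mathrm{poly}(T)$. Its $j$-th instruction is a triple $(i_j,f_j^0,f_j^1)$, where $f_j^b\colon[w]\to[w]$. I will encode each map $f_j^b$ as the $0/1$ column-stochastic matrix $F_j^b$ with $(F_j^b)_{f(q),q}=1$. This is the ``stochastic head'' family of the Universal Transductor, or the permutation family when $f$ is a bijection, as in Theorem~\ref{th:mod_counting}. Starting from $\balpha=\be_{q_0}$, the state stays a basis vector $\be_{q_j}$ for all $j$, so there are no precision issues. Acceptance is a linear readout $\bw=\mathbf 1_{F}$, the indicator vector of the accepting set $F$, followed by a threshold, as in Theorem~\ref{th:parity}.

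The main obstacle is that a branching program reads the variables $x_{i_j}$ in an arbitrary order, possibly repeatedly, and uses step-dependent maps $f_j$. The WFA, by contrast, reads $x_1,\dots,x_T$ in order with time-invariant matrices $\sfM_\sigma$. I will try to resolve this in two ways.
\begin{itemize}
\item \emph{Enlarged alphabet.} The input to the head is the sequence of $m$ tokens $\sigma_j=(j,x_{i_j})$, over a polynomial-size alphabet, with $\sfM_{(j,b)}=F_j^b$. Time-dependence is then absorbed into the token identity.
\item \emph{Stacked Transductor.} This version keeps the original alphabet. I will use the Stacked Transductor of Section~\ref{subsec:architectural_extensions}. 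The first Transformer block works over $m$ positions, whose positional encodings are rational by Lemma~\ref{lemma:rope}. Its hard attention fetches $x_{i_j}$ to position $j$, and an FFN lookup table emits an encoding of $(j,x_{i_j})$. This is the same exact-FFN argument used in Theorem~\ref{th:krohn_rhodes}. The next rational head then runs the matrices $F_j^{b}$, selected by $\text{Proj}(\text{FFN}(\cdot))$, exactly as in the wreath-product step of that theorem.
\end{itemize}

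In both versions, correctness follows by induction on $j$, with the invariant $\bh_j=\be_{q_j}$, where $q_j$ is the state of $P$ after $j$ instructions. Finally, I will note the consequence via Barrington's theorem: width $5$ suffices for $\NC^1$. This is consistent with Theorem~\ref{th:rbt-nc1-upper}, and I will state explicitly which input-presentation convention the simulation claim relies on.
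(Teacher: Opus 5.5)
Your argument has the same skeleton as the paper's proof: read the recurrence as a width-$d$ linear branching program, embed Boolean programs, and invoke Barrington. It is more careful at exactly the point where the paper's proof is incomplete. The paper defines an LBP by $\bv_t=\sfA(x_t)\bv_{t-1}$, which reads $x_1,\dots,x_T$ once, in order, with the same instruction at every step, and then simply asserts that Boolean branching programs are a subset of LBPs. Under that definition the inclusion only covers DFA-like programs. It does not cover Barrington's width-$5$ programs, which query variables repeatedly, out of order, with step-dependent permutations. Your ``main obstacle'' paragraph isolates this missing step, and your $0/1$ column-stochastic encoding with the invariant $\bh_j=\be_{q_j}$ is the right way to make the embedding precise.

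Both of your fixes, however, rely on something outside the bare architecture, and the claim you finally prove should say so.

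\emph{Enlarged alphabet.} This version simulates $P$ only on the reformatted input $(\sigma_j)_{j\le m}$, i.e.\ modulo a projection reduction performed outside the model (the sense in which the $S_5$ word problem is $\NC^1$-complete). You can at least make the head independent of $T$ by letting the token be the map $f_j^{x_{i_j}}$ itself. Then one fixed head over an alphabet of size at most $w^w$ serves all lengths.

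\emph{Stacked version.} Here there is a concrete gap. In \eqref{eq:stack_recurrence} the previous block's output enters only through the additive term $\sfV^{(k)}\bu_t^{(k-1)}$. So the linear part of the update at step $j$ is $\sfM^{(k)}_{x_j}$, fixed by the raw (or padding) token, and the additive term cannot depend on $\bh_{j-1}$. Hence the step $\be_q\mapsto\be_{f_j^{x_{i_j}}(q)}$ cannot in general be realized for all $q$, and your invariant breaks. What your construction actually uses is the selective convention $\sfM_t=\text{Proj}(\text{FFN}(\cdot))$ from the proof of Theorem~\ref{th:krohn_rhodes}. That is not what \eqref{eq:stack_recurrence} defines, so adopt it as an explicit assumption. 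Two smaller points in this route:
\begin{enumerate}
\item Transformer layers preserve sequence length, so the input must be padded to $m$ positions. If attention is causal, pad to $T+m$ with the instruction positions last, since $i_j$ may exceed $j$.
\item The paper's layer applies $\text{Attn}$ and $\text{FFN}$ in parallel to the same normalized input. So the lookup $j\mapsto i_j$ that produces the attention query has to be done by an FFN in an earlier layer.
\end{enumerate}
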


\begin{proof}
  A Linear Branching Program (LBP) of width $d$ is defined by updates
  $\bv_t = \sfA(x_t) \bv_{t-1}$. This is structurally
  identical to the WFA recurrence $\bh_t = \sfM_{x_t} \bh_{t-1}$.
  Standard (Boolean) Branching Programs are a subset of LBPs. Since
  bounded-width permutation branching programs (over a finite
  alphabet) characterize $\NC^1$ (Barrington's Theorem), and
  Transductors can learn permutation matrices (as in
  Theorem~\ref{th:parity}), this confirms that Transductors possess
  the expressive power to capture $\NC^1$-complete problems.
\end{proof}

\begin{theorem}[Circuit Lower Bound for Transductors]
\label{th:rbt-nc1-lower}
There exist functions computable by Rational Transductors that are not
in $\AC^0$.
\end{theorem}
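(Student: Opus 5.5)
The plan is to exhibit Parity as the witness. Theorem~\ref{th:parity} (Part 2) already supplies a Rational Transductor with state dimension $d=2$ that computes $L_{\text{parity}}$ exactly for every length $T$. It does so with permutation transitions $\sfM_0 = I$ and $\sfM_1$ the swap matrix, initial state $\balpha = (1,0)^\top$, and a linear readout injected at the final layer. The classical lower bound of \citet{furst1984parity} (sharpened by H\aa stad's switching lemma) shows that no family of constant-depth, polynomial-size unbounded fan-in Boolean circuits computes Parity on $n$ bits. Combining the two gives a function computable by a Rational Transductor but not in $\AC^0$, which is the claim.

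The steps, in order, are as follows.

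First, fix the Boolean function family. Let $\mathrm{PAR}_n\colon\{0,1\}^n\to\{0,1\}$ be parity on inputs of length $n$, and note that the Transductor of Theorem~\ref{th:parity} uses one fixed parameter setting for every $n$. Hence it computes the whole family $\{\mathrm{PAR}_n\}_{n\ge 1}$ uniformly, with no dependence of the parameters on the length.

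Second, check that the computation is genuinely Boolean. All intermediate values stay in $\{0,1\}^2$, since the states are always standard basis vectors. So no unbounded precision is needed, and the output $\text{sign}(\bw^\top\bh_T)$ (or a thresholded variant mapping $\bh_T=\be_1\mapsto 0$, $\be_2\mapsto1$) is exactly $\mathrm{PAR}_T(x)$.

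Third, invoke the Furst--Saxe--Sipser / H\aa stad theorem: $\mathrm{PAR}_n \notin \AC^0$, even non-uniformly. This completes the proof. The same argument also works with $L_k$ from Theorem~\ref{th:mod_counting}, using \citet{smolensky1987algebraic} for the $\mathrm{MOD}_k$ lower bound.

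The only delicate point, and the step I expect to need care, is the interface between the two models. The statement ``a function computed by a Rational Transductor'' must be read as the Boolean language recognized by a fixed-parameter model across all lengths. ``Not in $\AC^0$'' must be read as the nonexistence of a constant-depth polynomial-size circuit family for that language. I will state this convention explicitly, so that the length-uniformity of the Transductor construction matches the per-length circuit families of $\AC^0$. I will also remark that the lower bound is non-uniform, so it applies a fortiori to uniform $\AC^0$. No further calculation is needed beyond citing these two results.
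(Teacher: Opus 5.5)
Your proposal is correct and follows essentially the same route as the paper, which likewise combines the explicit Parity and $\mathrm{MOD}_k$ constructions of Theorems~\ref{th:parity} and~\ref{th:mod_counting} with the classical fact that these functions lie outside $\AC^0$. The only difference is the citation: the paper invokes \citet{smolensky1987algebraic} for $\mathrm{MOD}_k$, while your main appeal to Furst--Saxe--Sipser/H\aa stad for Parity is the more standard and fully sufficient reference for plain $\AC^0$.
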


\begin{proof}
  By Theorem~\ref{th:parity} and Theorem~\ref{th:mod_counting},
  Transductors can uniformly compute Parity and Modular Counting over
  unbounded input lengths. Since $MOD_k$ is not computable in
  $\AC^0$ for any $k \ge 2$ \citep{smolensky1987algebraic},
  this establishes that
  $\sF_{\mathrm{RT}} \not\subseteq \AC^0$.
\end{proof}

\begin{corollary}[Circuit Complexity Sandwich]
\label{cor:rbt-sandwich}
The class of functions computable by Rational Transductors satisfies
the following inclusion hierarchy relative to standard
(hard-attention) Transformers:
\[
    \AC^0 \;\subsetneq\; \sF_{\mathrm{RT}} \;\subseteq\; \PNC^1 .
\]
\end{corollary}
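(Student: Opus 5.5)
The corollary follows by combining Theorem~\ref{th:rbt-nc1-upper} and Theorem~\ref{th:rbt-nc1-lower} with the inclusion $\sF_{\text{TF}} \subseteq \sF_{\mathrm{RT}}$ from the Expressive Hierarchy proposition. I would treat the chain in two pieces: the non-strict upper inclusion $\sF_{\mathrm{RT}} \subseteq \PNC^1$, and the strict lower inclusion $\AC^0 \subsetneq \sF_{\mathrm{RT}}$. Throughout, $\AC^0$ is read as the class of (uniform) Boolean functions realized by hard-attention Transformers, per \citet{hahn2020theoretical}. All classes are compared as sets of length-indexed Boolean function families, with the Transductor's output thresholded by a linear readout as in Theorem~\ref{th:parity}.

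\textbf{Upper inclusion.} This is exactly Theorem~\ref{th:rbt-nc1-upper}. For fixed parameters, the rational states $\bh_t$ are iterated matrix products, computable by a depth-$O(\log T)$ parallel-prefix arithmetic circuit. The Transformer layers applied on top lie in $\TC^0 \subseteq \NC^1 \subseteq \PNC^1$. Closure of $\PNC^1$ under composition then gives $\sF_{\mathrm{RT}} \subseteq \PNC^1$. No strictness is claimed here, and I would note that claiming it would require separating complexity classes, as discussed in the Remark on Tightness.

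\textbf{Lower inclusion.} I would first show $\AC^0 \subseteq \sF_{\mathrm{RT}}$ by setting every $\sfW_{\text{proj}}^{(l)} = 0$. This recovers an arbitrary hard-attention Transformer, so every function in the hard-attention class, identified with $\AC^0$, is a Rational Transductor function. For strictness I would invoke Theorem~\ref{th:rbt-nc1-lower}: the $d=2$ parity Transductor of Theorem~\ref{th:parity}, or the $MOD_k$ Transductor of Theorem~\ref{th:mod_counting}, computes a function outside $\AC^0$ by \citet{furst1984parity} and \citet{smolensky1987algebraic}.

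\textbf{Main obstacle.} The delicate step is identifying ``hard-attention Transformers'' with $\AC^0$ as an equality rather than only an upper bound. \citet{hahn2020theoretical} gives only the containment of Transformers in $\AC^0$, not the reverse. To keep the argument honest, I would state the left inclusion as $\sF_{\text{TF}}^{\text{hard}} \subseteq \AC^0 \cap \sF_{\mathrm{RT}}$, with $\sF_{\mathrm{RT}} \not\subseteq \AC^0$. This is what the strict symbol is meant to convey: Rational Transductors contain the hard-attention class and escape $\AC^0$. If a literal containment $\AC^0 \subseteq \sF_{\mathrm{RT}}$ is wanted, I would fall back on the Krohn--Rhodes / branching-program simulation, which lets the rational head capture bounded-width programs. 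That covers $\NC^1 \supseteq \AC^0$ non-uniformly, at the cost of a length-dependent construction.
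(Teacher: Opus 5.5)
Your decomposition is the one the paper intends. The corollary is stated immediately after Theorems~\ref{th:rbt-nc1-upper} and~\ref{th:rbt-nc1-lower} with no separate proof. Its implicit argument is exactly yours: the $\PNC^1$ upper bound, the parity/$MOD_k$ transductors lying outside $\AC^0$, and $\sfW_{\text{proj}}^{(l)} = 0$ to embed Transformers.

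Your diagnosis of the weak point is also right, and it applies to the paper as much as to you. The cited results give only $\sF_{\text{TF}}^{\text{hard}} \subseteq \AC^0$. So this combination establishes $\sF_{\text{TF}}^{\text{hard}} \subsetneq \sF_{\mathrm{RT}} \subseteq \PNC^1$ together with $\sF_{\mathrm{RT}} \not\subseteq \AC^0$. Strictness holds because the parity transductor lies in $\sF_{\mathrm{RT}}$ but outside $\AC^0 \supseteq \sF_{\text{TF}}^{\text{hard}}$.

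If the $\AC^0$ in the display is read as shorthand for the hard-attention class, as the phrase ``relative to standard (hard-attention) Transformers'' suggests, then this is the corollary and your argument proves it. The opening sentence of your lower-inclusion paragraph, which claims $\AC^0 \subseteq \sF_{\mathrm{RT}}$ outright, should be replaced by this weaker form. Nothing in the paper delivers the literal containment.

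The genuine gap in your proposal is the fallback meant to recover that literal containment; it does not work, for three reasons.

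\textbf{The head cannot run a Barrington program.} The class in the corollary is $\sF_{\mathrm{RT}} = \sF_{\text{TF}} \circ \sT_{\text{Rat}}$ (Theorem~\ref{th:decomposition}). There the rational head runs before any attention and performs exactly $T$ steps. It reads $x_1, \dots, x_T$ once each, in order, with a matrix $\sfM_{x_t}$ determined by the symbol alone. A Barrington program instead queries variables $x_{i_1}, x_{i_2}, \dots$ in a length-dependent order, with repetitions, for polynomially (typically superlinearly) many steps. With the permutation matrices that construction uses, the head's states range over a finite set, so it is just a finite-state transduction of $x$. The paper's ``Equivalence to Linear Branching Programs'' proposition elides exactly this point.

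\textbf{Krohn--Rhodes only yields regular behaviour.} It decomposes \emph{finite} automata, whereas even uniform $\AC^0$ contains non-regular languages such as $\{0^n1^n\}$. Moreover, Theorem~\ref{th:krohn_rhodes} is stated for the \emph{Stacked} variant, not for the class in the corollary.

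\textbf{A length-dependent construction is not an element of $\sF_{\mathrm{RT}}$.} Transductor functions are defined by a single parameter setting valid for every $T$ (the ``Uniformity Across Lengths'' paragraph).

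So the correct conclusion is the weakened statement you already wrote. Proving $\AC^0 \subseteq \sF_{\mathrm{RT}}$ would require a new simulation of arbitrary $\AC^0$ circuit families by one fixed transductor, which neither your proposal nor the paper provides.
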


\textbf{The Parallelism Gap}
This hierarchy highlights the computational trade-off characterizing
Rational Transductors.  While strictly more expressive than standard
hard-attention Transformers ($\AC^0$), Transductors remain within
$\PNC^1$, ensuring $O(\log T)$ parallelizability.  This strictly
separates them from general non-linear RNNs (like LSTMs or GRUs with
unbounded precision), which are $\sfP$-complete under sequential
updates and cannot be parallelized to logarithmic depth without
approximation.

\textbf{Uniformity Across Lengths.} All expressivity results for
Transductors are \emph{uniform}: a single fixed set of parameters
suffices to compute the target function for sequences of arbitrary
length. No depth or parameter scaling is required. In particular,
Transductors correspond to uniform circuit families rather than
length-specific (non-uniform) constructions.

\textbf{On Real-Valued Attention.}
All circuit complexity claims refer to the standard Boolean
abstraction of arithmetic operations with bounded precision, following
prior analyses of Transformers in $\AC^0$ and $\NC^1$.

\textbf{Why $\PNC^1$ Is Essentially Tight.}
This upper bound is not an artifact of our proof technique but a
reflection of the intrinsic hardness of the rational state update. The
rational state component corresponds to integer matrix multiplication
or weighted automata evaluation, problems known to be complete for
$\PNC^1$ \citep{jung1985parallel}. Since Transductors do not
permit non-associative state updates, they cannot simulate
$\NC^2$-hard problems such as iterated matrix powering with
growing dimension, nor can they solve $\sfL$-complete problems
like graph connectivity (unless $\PNC^1 = \sfL$). Thus,
the architecture is precisely situated: strictly more expressive than
$\AC^0$ but bounded by the parallel limits of associative
recurrence.

\section{Theoretical Analysis of Learning}
\label{sec:learning_theory}

While the architecture of Rational Transductors is linear and convex
in parts, its training dynamics in the learnable regime require careful
analysis. In this section, we analyze the theoretical properties of
the hypothesis class, distinguishing between the universality of
random features at initialization and the optimization stability of
learned features.

\subsection{Universality and Efficiency of Random Features}
\label{subsec:random_universality}

We first address whether a randomly initialized WFA can serve as a
generic state encoder. We prove that a sufficiently large random WFA
generates a state space rich enough to linearly reconstruct the state
of \textit{any} target WFA up to a finite horizon, justifying our use
of near-identity initialization strategies.

\begin{theorem}[Universality of Random Rational Features]
\label{thm:universality}
Let $\sA^*$ be a target WFA with state dimension $d^*$ generating
states $\bh^*_t \in \Rset^{d^*}$. Let $\cX_L$ be the set of all
input sequences of length up to $L$. Let $\sA_{\text{rand}}$ be a
random WFA with state dimension $d \geq |\cX_L|$, initialized
such that its transition matrices $\{\sfM_\sigma\}$ are drawn from a
continuous distribution (e.g., Gaussian). With probability one
(assuming generic initialization and no algebraic dependencies), there
exists a linear projection matrix $\sfW \in \Rset^{d^* \times d}$ such
that for all sequences $x \in \cX_L$, the random feature state
$\bh_t$ perfectly recovers the target state:
\begin{equation}
    \sfW \bh_t = \bh^*_t
\end{equation}
\end{theorem}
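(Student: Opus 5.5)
We reduce the statement to a rank condition and then establish that condition by a genericity argument. Since $\cX_L$ is prefix-closed, "for all $x \in \cX_L$ and all $t$" amounts to requiring $\sfW \bh(x) = \bh^*(x)$ for every string $x \in \cX_L$, including the empty string. Here $\bh(x) = \sfM_x \balpha$ and $\bh^*(x) = \sfM^*_x \balpha^*$. Stack the random states as columns of a matrix $\sfH = [\bh(x)]_{x \in \cX_L} \in \Rset^{d \times N}$ with $N = |\cX_L|$, and stack the target states as $\sfH^* \in \Rset^{d^* \times N}$. The claim is then exactly that the linear system $\sfW \sfH = \sfH^*$ is solvable. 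This holds whenever $\sfH$ has full column rank $N$, which needs $d \ge N$. In that case $\sfH$ has a left inverse $\sfH^{+} = (\sfH^\top \sfH)^{-1} \sfH^\top$, and we may take $\sfW = \sfH^* \sfH^{+}$. No property of the target $\sA^*$ is used, which is why the result is universal.

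The main step is to show that the $N$ vectors $\{\sfM_x \balpha\}_{x \in \cX_L}$ are linearly independent for generic $(\balpha, \{\sfM_\sigma\})$. Fix any $N \times N$ minor of $\sfH$. Its determinant is a polynomial in the entries of $\balpha$ and of the $\sfM_\sigma$. The zero set of a nonzero polynomial has Lebesgue measure zero, so under any absolutely continuous law the minor is almost surely nonzero as soon as the polynomial is not identically zero. This is where we interpret the parenthetical assumption "generic initialization" in the statement: if $\balpha$ is fixed rather than random, we ask that it lie outside a measure-zero set.

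It therefore suffices to exhibit a single witness parameter setting at which the vectors are independent. I would build a deterministic "trie automaton". Index the basis of $\Rset^d$ by the strings in $\cX_L$, padding with unused coordinates if $d > N$. Set $\balpha = \be_{\epsilon}$, the basis vector of the empty string. Define $\sfM_\sigma \be_{w} = \be_{w\sigma}$ whenever $|w| < L$, and $\sfM_\sigma \be_w = 0$ otherwise. Then $\sfM_x \balpha = \be_x$ for every $x \in \cX_L$. The resulting vectors are distinct standard basis vectors, so the corresponding minor equals $\pm 1 \neq 0$.

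One convention needs care here. The paper writes $\sfM_x = \sfM_{x_T} \cdots \sfM_{x_1}$, so the construction must append symbols on the right, as done above. I expect this step to be the main obstacle only because it requires matching the indexing convention correctly; the probabilistic part is standard.

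Combining the steps, the relevant minor is almost surely nonzero, so $\sfH$ almost surely has full column rank, and $\sfW = \sfH^*\sfH^{+}$ gives $\sfW\bh_t = \bh^*_t$ for all $x\in\cX_L$ and all $t \le |x|$. The result is finite-horizon by construction, since $N = \sum_{\ell \le L}|\Sigma|^\ell$ grows exponentially in $L$. This foreshadows the inefficiency of random features noted in the introduction.
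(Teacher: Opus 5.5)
Your proposal is correct and follows the paper's proof essentially step for step. You reduce the claim to full column rank of the stacked state matrix, observe that an $N \times N$ minor is a polynomial that is not identically zero because some witness maps the $N$ prefixes to distinct basis vectors, conclude almost-sure independence, and take $\sfW = \sfH^{*}\sfH^{+}$; your explicit right-appending trie automaton and your remark on the genericity of $\balpha$ make precise two points the paper only gestures at (``e.g., by mapping to distinct basis vectors'') or leaves implicit.
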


\begin{proof}
  The proof relies on showing that the random WFA maps distinct
  histories to linearly independent vectors, thereby forming a basis
  for the target dynamics.

  1. Matrix Formulation of Trajectories.
  Let $N = |\cX_L|$ be the number of distinct prefixes of length
  $\le L$. Let $\bH^* \in \Rset^{d^* \times N}$ be the matrix
  collecting the target states $\bh^*_t$ for all $N$ prefixes as
  columns. Let $\bH \in \Rset^{d \times N}$ be the matrix
  collecting the random WFA states $\bh_t$ for the same prefixes. Our
  goal is to find $\sfW$ satisfying $\sfW \bH = \bH^*$. This system of linear equations has an exact solution if and only if
  the row space of $\bH^*$ is contained in the row space of
  $\bH$. A sufficient condition is that $\bH$ has full
  column rank (rank $N$).

  2. Linear Independence of Random States.
  The state $\bh_t$ for a sequence $x = (x_1, \dots, x_t)$ in the
  random WFA is given by the product
  $\bh_t = \sfM_{x_t} \dots \sfM_{x_1} \balpha$. This is a polynomial
  map in the entries of the matrices $\{\sfM_\sigma\}$. Consider the
  determinant of any $N \times N$ submatrix of $\bH$. This
  determinant is a polynomial function of the random weights. Since
  we can construct at least one specific instance of WFA parameters
  where distinct sequences map to linearly independent vectors (e.g.,
  by mapping to distinct basis vectors), this polynomial is not
  identically zero. A fundamental result in algebra states that if a
  polynomial is not identically zero, its zero set has measure
  zero. Since the entries of $\{\sfM_\sigma\}$ are drawn from a
  continuous distribution, the vectors in $\bH$ are linearly
  independent with probability 1, provided $d \ge N$.

  3. Existence of Projection.
  Since $\bH$ has rank $N$ (full column rank), its Moore-Penrose
  pseudoinverse $\bH^+ = (\bH^\top \bH)^{-1} \bH^\top$ is a
  well-defined left inverse (satisfying $\bH^+ \bH = \sfI$). We
  construct the projection as:
\begin{equation}
    \sfW = \bH^* \bH^+.
\end{equation}
Verifying the reconstruction:
\begin{equation}
  \sfW \bH
  = \bH^* (\bH^+ \bH)
  = \bH^* \sfI = \bH^*.
\end{equation}
Thus, the random features $\bh_t$ contain sufficient information to
linearly reconstruct the target states $\bh^*_t$ exactly.
\end{proof}

\paragraph{Approximation Bounds.}
While Theorem~\ref{thm:universality} establishes that random rational
features can represent any target state space given sufficient width,
we must also quantify how the approximation error scales with the
dimension $d$. This is critical for understanding the efficiency of
the random initialization. We identify the infinite-width limit of the
Rational Transductor as an implicit kernel, which we term the
\emph{induced rational kernel}.

\begin{theorem}[Uniform Spectral Approximation Bound]
\label{th:approx_bound}
Let $\cM = \{ \sfM_\sigma \}_{\sigma \in \Sigma}$ be the
collection of transition matrices. Let $\cD$ be a distribution
over $\Rset^{d \times d}$ used to initialize each $\sfM_\sigma$
independently. Consider the kernel
$K: \Sigma^* \times \Sigma^* \to \Rset$ defined by the expectation
over this random initialization:
\begin{equation}
  K(x, x')
  = \E_{\cM \sim \cD}
  \bracket*{ \langle \sfM_x \alpha, \sfM_{x'} \alpha \rangle },
\end{equation}
where $\sfM_x = \sfM_{x_t} \dots \sfM_{x_1}$. Let $f^*$ be a
\emph{target function} (the true sequence labeling function we wish to
learn) that lies in the Reproducing Kernel Hilbert Space (RKHS)
$\cH_K$ with norm $\|f^*\|_K \le C$.

Let $f_d(x)$ be the function computed by a Rational Transductor with
$d$ random features, where the transition matrices are drawn from
$\cD$ and fixed, and only the linear readout $\bw$ is
trained:
\begin{equation}
    f_d(x) = \bw^\top (\sfM_x \alpha).
\end{equation}
Assume the input domain $\cX$ is restricted to sequences of length at
most $T_{\max}$ equipped with a weighted Hamming (or edit) metric, and
the initialization distribution $\cD$ satisfies the spectral
constraint $\sup_{\sigma} \|\sfM_\sigma\|_2 \le 1$ almost surely. For
any $\delta > 0$, with probability at least $1 - \delta$ over the
random draw of $\cM$, the uniform approximation error is
bounded by:
\begin{equation}
    \sup_{x \in \cX} | f^*(x) - f_d(x) |
    \le \frac{C}{\sqrt{d}} \paren*{ \sqrt{2 \log \frac{1}{\delta}}
      + 4 \sqrt{\frac{d}{d-1} \Delta L} },
\end{equation}
where $\Delta$ is the diameter of the metric space $\cX$ and $L$ is the
Lipschitz constant of the rational feature map
$x \mapsto \sfM_x \alpha$ induced by the spectral bounds.
\end{theorem}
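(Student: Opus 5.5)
The plan is to follow the random-features template of Rahimi and Recht. I would cast $f_d$ as an empirical average of $d$ random features whose expectation reproduces $f^*$, then control the deviation uniformly over $\cX$. First write $\phi(x) = \sfM_x \alpha \in \Rset^d$ with coordinates $\phi_i(x)$. By definition of $K$, $K(x,x') = \E[\sum_{i=1}^d \phi_i(x)\phi_i(x')]$. I would therefore normalize and view $K$ as $d$ times the expectation of a single-coordinate kernel $k(x,x') = \E[\phi_1(x)\phi_1(x')]$, using exchangeability of coordinates under a rotation-invariant $\cD$ such as the Gaussian.

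Since $f^* \in \cH_K$ with $\|f^*\|_K \le C$, I would use the standard integral representation $f^*(x) = \E_{\cM}[\, g(\cM)^\top \phi(x)\,]$ with a coefficient function $g$ satisfying $\|g\| \lesssim C$. Then I would choose the readout $\bw$ as the plug-in estimate $\bw = g(\cM)/d$ (appropriately scaled), so that $f_d(x)$ is an unbiased estimator of $f^*(x)$ for each fixed $x$. The spectral constraint $\sup_\sigma \|\sfM_\sigma\|_2 \le 1$ gives $\|\phi(x)\|_2 \le \|\alpha\|_2$ for all $x$, hence bounded summands.

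Second, I would split the error as
\[
\sup_x |f^*(x)-f_d(x)| \le \E\sup_x |f^*(x)-f_d(x)| + \Big(\sup_x|f^*-f_d| - \E\sup_x|f^*-f_d|\Big).
\]
The fluctuation term is handled by McDiarmid's inequality: replacing one of the $d$ independent feature contributions changes the supremum by at most $2C/d$ (using boundedness), giving the $\frac{C}{\sqrt d}\sqrt{2\log(1/\delta)}$ term.

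For the expected supremum I would symmetrize. Introducing Rademacher signs and a ghost sample yields $2\,\E\,\mathfrak{R}_d$ of the class $\{\cM \mapsto g(\cM)^\top\phi(x) : x\in\cX\}$. This is where the unbiased-variance factor $\frac{d}{d-1}$ enters, from using the empirical mean in place of the true mean. Because $x \mapsto \phi(x)$ is $L$-Lipschitz for the weighted Hamming or edit metric on $\cX$, a Dudley entropy integral, or a one-step chaining over a net of $\cX$ with diameter $\Delta$, would bound this Rademacher average by $2C\sqrt{\Delta L/(d-1)}$. That produces the second term $\frac{4C}{\sqrt d}\sqrt{\frac{d}{d-1}\Delta L}$. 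The Lipschitz constant itself comes from a telescoping argument: changing one token perturbs $\sfM_x\alpha$ by at most $2\|\alpha\|$ times products of contractions.

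The main obstacle is independence. Unlike Fourier random features, the $d$ coordinates of $\sfM_x\alpha$ are \emph{not} independent, since they all come from products of the same random matrices. Both McDiarmid and symmetrization are therefore unjustified as stated. My first attempt would be to apply McDiarmid instead to the independent entries, or rows, of each $\sfM_\sigma$, bounding the influence of each row on $\sup_x|f^*-f_d|$ through the contraction property. If that gives the wrong scaling, I would replace the single $d\times d$ WFA by a direct sum of $d$ independent scalar-output sub-automata (the Parallel Combination of Section~\ref{subsec:parameterization}), for which the features are genuinely i.i.d., and argue that the stated bound refers to this regime. A secondary difficulty is securing a bounded coefficient function $g$ from an RKHS-norm assumption alone, which may require strengthening the hypothesis to the Rahimi--Recht class of functions with $\sup|g| \le C$.
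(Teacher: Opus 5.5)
There is a genuine gap, and you have already put your finger on it. Your template is the paper's own. The paper also gets boundedness from $\|\sfM_\sigma\|_2\le 1$, posits an $L$-Lipschitz feature map, and combines a pointwise Hoeffding bound with a union bound over a cover of $\cX$; your McDiarmid-plus-symmetrization route is just the other standard way to run that argument. The step you flag is exactly the one the paper asserts without justification, namely that $f^*(x)-f_d(x)$ is ``a sum of $d$ independent, bounded random variables.''

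You are right to distrust it. Your proposal leaves it open, and in the stated generality it cannot be closed, because the theorem is false. Take $\Sigma=\{a,b\}$, $T_{\max}=1$, $\cX=\{a,b\}$ at unit Hamming distance (so $\Delta=1$), and $\|\alpha\|_2=1$. Let $\cD$ be the law of $s\,\sfI$ with $s$ a uniform random sign.
\begin{itemize}
\item The spectral constraint holds, and $\sfM_\sigma\alpha=s_\sigma\alpha$ gives $L\le 2$.
\item $K(a,a)=K(b,b)=1$ and $K(a,b)=\E[s_as_b]=0$, so $f^*\equiv 1$ has $\|f^*\|_K=\sqrt2$.
\item On the event $s_a\neq s_b$, which has probability $1/2$, every readout gives $f_d(a)=-f_d(b)$, hence error at least $1$.
\item For $\delta=0.4$ and $d\ge 200$ the claimed right-hand side is at most $\sqrt{2/d}\,\bigl(\sqrt{2\log 2.5}+4\sqrt{2d/(d-1)}\bigr)<0.71$.
\end{itemize}
So the claimed event has probability at most $1/2<1-\delta$. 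One draw of $d\times d$ matrices may carry $O(1)$ bits of randomness however large $d$ is. For the same reason, your plug-in readout $g(\cM)^\top\sfM_x\alpha$ is a single Monte Carlo sample whose variance need not shrink with $d$.

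Independently, the stated bound is not scale-free. Under $\alpha\mapsto\lambda\alpha$ the left side is unchanged, while $\|f^*\|_K$ becomes $\|f^*\|_K/\lambda$ and $L$ becomes $\lambda L$. The right side therefore tends to $0$ as $\lambda\to\infty$. Any correct version must carry a factor such as $\|\alpha\|_2$, as your bounded-difference constant would if you tracked it.

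\textbf{Your two repairs.} McDiarmid over the rows of $\sfM_\sigma$ does not give the rate, for three reasons.
\begin{itemize}
\item Changing one row affects every coordinate of $\sfM_x\alpha$ after the first occurrence of $\sigma$, and it changes the readout $g(\cM)$ too. So the bounded differences are not $O(1/d)$.
\item An almost-sure norm bound generally couples the rows, so they are not independent.
\item The example above rules out any argument valid for arbitrary $\cD$.
\end{itemize}

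The direct-sum repair is the right one, but it proves a different theorem. Suppose, for example, that $\cD$ is diagonal with i.i.d.\ entries in $[-1,1]$ and $\alpha=d^{-1/2}\mathbf{1}$. Write $m^{(i)}=((\sfM_\sigma)_{ii})_{\sigma\in\Sigma}$ and $\psi_m(x)=\prod_t m_{x_t}$. The features are then $d^{-1/2}\psi_{m^{(i)}}(x)$ with $m^{(1)},\dots,m^{(d)}$ i.i.d., and $K(x,x')=\E[\psi_m(x)\psi_m(x')]$ no longer depends on $d$.

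As you suspected, $\|f^*\|_K\le C$ must also be strengthened to the Rahimi--Recht condition: $f^*(x)=\E[g(m)\psi_m(x)]$ with $\sup|g|\le C$. An RKHS bound only controls $\E[g^2]$ and yields no bounded differences. With $w_i=g(m^{(i)})/\sqrt d$ your McDiarmid step is then valid. Because $\cX$ is finite, symmetrization plus Massart's lemma give $\frac{C}{\sqrt d}\bigl(\sqrt{2\log(1/\delta)}+2\sqrt{2\log(2|\cX|)}\bigr)$.

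Finally, neither your sketch nor the paper's actually produces the term $4\sqrt{\frac{d}{d-1}\Delta L}$. A Dudley bound scales linearly in the Lipschitz constant, not as $\sqrt{\Delta L}$, and nothing in the argument generates $d/(d-1)$. Your value $2C\sqrt{\Delta L/(d-1)}$ is read off the target rather than derived.
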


\paragraph{Remark on the RKHS Assumption (Realizability).}
The condition $\|f^*\|_K \le C$ is formally equivalent to an
assumption of realizability. By definition, the RKHS $\Hset_K$
consists of all functions that can be expressed as limits of linear
combinations of the random rational features defined by the
distribution $\cD$. Since Theorem~\ref{thm:universality}
establishes that these random features form a universal basis for the
state dynamics of \emph{any} Weighted Finite Automaton (up to a given
length), the RKHS effectively covers the entire class of Rational
Power Series. Thus, assuming $f^*$ lies in this space is simply
assuming that the ground truth is, in fact, a regular language (or
rational series) that the Rational Transductor architecture is capable
of representing.

\begin{proof}
  The proof proceeds by establishing boundedness and continuity of the
  random features, then applying uniform concentration bounds.

  1. Boundedness via contraction.
  By the stability-aware parameterization
  (Section~\ref{subsec:spectral_control}), the transition matrices are
  spectrally normalized such that
  $\|\sfM_\sigma\|_2 \le \gamma \le 1$.  Consequently, for any
  sequence $x$ of length $T$, the feature vector
  $h(x) = \sfM_{x_T} \dots \sfM_{x_1} \alpha$ satisfies:
\begin{equation}
  \|h(x)\|_2
  \le \bracket*{\prod_{t=1}^T \|\sfM_{x_t}\|_2} \|\alpha\|_2
  \le \gamma^T \|\alpha\|_2 \le \|\alpha\|_2.
\end{equation}
Thus, the random features $\phi(x) = h(x)$ are uniformly bounded in
Euclidean norm by $R = \|\alpha\|_2$ independent of sequence length.
This ensures that the kernel $K(x, x')$ is bounded, satisfying the
preconditions for Hoeffding-type concentration.

2. Lipschitz continuity.
We equip $\Sigma^{\le T_{\max}}$ with a weighted Hamming metric under
which $x \mapsto \sfM_x \alpha$ is $L$-Lipschitz due to spectral
contraction. Since the map $x \mapsto \sfM_x \alpha$ is a composition
of contractive linear maps, it is Lipschitz continuous with respect to
the initialization parameters. Specifically, a perturbation in the
input (interpreted as a perturbation in the effective operator
sequence) results in a bounded deviation in the output state $h(x)$,
with Lipschitz constant $L$ determined by the spectral bounds of
$\cM$.

3. Uniform convergence.
Let $f_d(x) = \langle w, \phi(x) \rangle$ be the random feature
approximation. The error $\Gamma(x) = f^*(x) - f_d(x)$ is a sum of $d$
independent, bounded random variables.
\begin{itemize}

\item Pointwise Bound: For a fixed $x$, Hoeffding's inequality gives
  \[
    \P(|\Gamma(x)| > \e) \le 2 \exp(-d \e^2 / 2C^2 R^2).
  \]

\item Covering Argument: Let $\cN(\e, \cX)$ be the
  $\e$-covering number of the domain. Since $\Gamma(x)$ is
  Lipschitz with constant $L' \propto C L$, we can approximate the
  supremum over $\cX$ by the maximum over the cover centers. Applying
  the union bound over the cover and optimizing the scale $\e$
  (via chaining) yields the additional term proportional to
  $\sqrt{\Delta L}$.

\end{itemize}
Combining these yields the stated bound, which holds with high
probability over the random initialization.
\end{proof}

\paragraph{Interpretation.}
This result fundamentally reframes the challenge of learning
sequential logic. The bound $O(C/\sqrt{d})$ indicates that the
approximation error is governed solely by the \emph{scale} of the
random projection $d$ and the \emph{complexity} of the target function
$C$ (in the RKHS sense).  This implies a direct resource trade-off: we
can avoid the optimization instability of training deep recurrences
(the "non-convex hard problem") by simply increasing the width $d$ of
the random state (the "linear scaling problem").  Crucially, as
$d \to \infty$, the random rational features form a universal basis
for the class of functions defined by the rational kernel,
guaranteeing that a sufficiently wide Rational Transductor can solve
any task representable by stable linear dynamics without ever training
the recurrent weights.

\textbf{Connection to Rational Kernels.}
The kernel $K$ defined above is a specific instance of a
\emph{Rational Kernel} as defined by \citet{cortes2004rational}.
Specifically, if the transition matrices are drawn such that
$\E[\sfM_\sigma \otimes \sfM_\sigma]$ is stable, $K(x, x')$
corresponds to the rational series computed by a weighted automaton on
the product monoid.  Our Random Rational Features can thus be viewed
as an efficient, randomized approximation explicitly designed to scale
these classical kernels to deep learning contexts.

\textbf{Geometry Preservation via Johnson-Lindenstrauss}.
The $O(1/\sqrt{d})$ scaling in Theorem~\ref{th:approx_bound} acts as a
functional equivalent to the Johnson-Lindenstrauss (JL) Lemma for
sequence histories. Conceptually, the true history of the sequence
lives in an infinite-dimensional feature space $\cH_K$ induced by the
rational kernel. The random recurrent head acts as a random projection
operator $P\colon \cH_K \to \Rset^d$. By the JL lemma, pairwise
distances (and thus the distinguishability of distinct histories) are
preserved with distortion $\e$ provided $d = \Omega(\e^{-2} \log
N)$. This explains the "inefficiency" highlighted in
Proposition~\ref{prop:compactness}: random features rely on generic
concentration of measure, whereas learned features exploit the
specific algebraic structure of the task.

\paragraph{The Compactness Gap.} While Theorem~\ref{thm:universality} shows
representational completeness, it relies on an exponentially large
state dimension $d \approx |\Sigma|^L$. In contrast, learning the
transitions allows for compact representations. We quantify this gap
below.

\begin{proposition}[Compactness Gap]
\label{prop:compactness}
Let the target language be the set of strings with an even number of
1s (Parity).
\begin{itemize}

\item \textbf{Learned Regime:} A Rational Transductor can solve
  this perfectly with state dimension $d=2$ by learning the exact flip
  transition $\sfM_1 = \begin{pmatrix} 0 & 1 \\ 1 & 0 \end{pmatrix}$.

\item \textbf{Random Regime:} To solve this with fixed random features
  and a linear readout with error $\e < 1/2$, the required dimension
  scales as $d = \Omega(1/\e^2)$ (by Theorem~\ref{th:approx_bound}).
  (This lower bound holds for any isotropic initialization
  distribution.)
\end{itemize}
\end{proposition}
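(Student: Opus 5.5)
The proof splits along the two bullets. The learned-regime claim is essentially a restatement of Part~2 of Theorem~\ref{th:parity}, adapted to the even-parity target. I will take $\balpha = (1,0)^\top$, $\sfM_0 = \sfI$ and $\sfM_1$ the flip matrix. An induction on $T$ then gives $\bh_T = \be_1$ if $\#_1(x)$ is even and $\bh_T = \be_2$ otherwise. The readout $\bw = (1,-1)^\top$ followed by a sign then classifies every length exactly. I will also recall the remark in Theorem~\ref{th:parity}: if one insists on a Cayley (rotation) parameterization, the reflection is realized by embedding in $d \ge 3$. This changes only constants.

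For the random regime, I will read the claim as a statement about the sufficient dimension furnished by Theorem~\ref{th:approx_bound}, supplemented by a matching lower bound.

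\textbf{Upper direction.} First I restrict to $\cX = \Sigma^{\le T_{\max}}$. The parity function on this finite set lies in $\cH_K$ with some finite norm $C$, because Theorem~\ref{thm:universality} makes the features generically span all indicator functions of histories. I then set the right-hand side of Theorem~\ref{th:approx_bound} equal to $\e$ and solve for $d$. Since that bound has the form $C\,c_\delta/\sqrt d$, where $c_\delta$ is the bracketed factor and is bounded once $d \ge 2$ because $d/(d-1) \le 2$, the result is $d \gtrsim C^2 c_\delta^2/\e^2 = \Omega(1/\e^2)$. Finally, uniform error $\e < 1/2$ on the $\pm 1$ parity labels lets the sign of $f_d$ recover parity exactly.

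\textbf{Lower direction.} This is the main obstacle. The intended argument is that the $1/\sqrt d$ rate of a Monte Carlo average of $d$ i.i.d.\ features is tight. Under an isotropic initialization, $f_d(x) - f^*(x)$ is an average of $d$ i.i.d.\ terms whose variance is bounded below by a constant $\sigma^2 > 0$ at some input $x$. I would first try to exhibit $\sigma^2 > 0$ directly: rotational invariance forces $\E[\sfM_x\alpha]$ to be proportional to $\alpha$, so the features carry no deterministic parity information, and the kernel values of strings with opposite parity do not separate.

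Given $\sigma^2 > 0$, a Berry--Esseen or Paley--Zygmund anti-concentration bound shows that the error exceeds $c\,\sigma/\sqrt d$ with constant probability. Forcing this below $\e$ then gives $d = \Omega(\sigma^2/\e^2)$.

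If this variance argument cannot be made rigorous for the readout $\bw$ optimized per draw, I will fall back to stating the random-regime bullet as the sample complexity guaranteed by the approximation bound. In that weaker form the $\Omega$ reflects the tightness of Hoeffding's inequality, not an information-theoretic impossibility. The contrast to emphasize is that this dimension is independent of the compact $d=2$ solution and grows without bound as $\e \to 0$ or as $T_{\max}$ (through $C$) grows.
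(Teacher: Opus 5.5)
Your learned-regime construction is the paper's: same $\balpha$, $\sfM_0=\sfI$ and flip $\sfM_1$, and your readout $\bw=(1,-1)^\top$ is the paper's $(-1,1)^\top$ up to sign. Your upper direction is also exactly what the paper's proof does for the random regime. It inverts the $O(C/\sqrt{d})$ bound of Theorem~\ref{th:approx_bound}, adds a CLT remark that the variance of $f_d$ scales like $1/d$, and reads off $d\ge 1/\e^2$. You are right that this only certifies a \emph{sufficient} dimension; the paper's proof supplies no further ingredient for the lower bound.

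The gap is your lower direction, and it fails at exactly the step you flagged. Berry--Esseen or Paley--Zygmund applies, at best, to the non-adaptive Monte Carlo estimator behind the upper bound, $w_i=\beta(\omega_i)/d$ with $\beta$ the representer of $f^*$, whose error is an average of centered terms. Even that presupposes i.i.d.\ features, whereas the $d$ coordinates of $\sfM_x\balpha$ are coupled through the shared matrices $\sfM_\sigma$. The readout in the proposition is trained, and $\inf_{\bw}\sup_{x\in\cX}|f^*(x)-\bw^\top\sfM_x\balpha|$ is only bounded \emph{above} by that estimator's error, so anti-concentration transfers nothing.

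No other argument can close this either. On $\cX=\Sigma^{\le T_{\max}}$, where Theorem~\ref{th:approx_bound} lives, Theorem~\ref{thm:universality} gives full column rank of the feature matrix for generic draws once $d\ge N=|\cX|$. Hence some readout has \emph{zero} error for every $\e$. The required dimension is therefore capped at $N$ independently of $\e$, which is incompatible with $d=\Omega(1/\e^2)$ as $\e\to 0$.

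Your isotropy heuristic is also off. Conjugation invariance gives $\E[\sfM_x]=c_x\sfI$ with $c_x$ depending on $x$, because the same $\sfM_\sigma$ recurs. For a uniformly random reflection $\sfM_1=\sfI-2uu^\top$, one gets $\E[\sfM_x\balpha]=\balpha$ for $x=1^n$ with $n$ even and $(1-2/d)\balpha$ for $n$ odd. So the mean can carry parity information.

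Consequently, your fallback is not a weakening of a provable claim but the only correct content of the random-regime bullet: $d=O(C^2c_\delta^2/\e^2)$ suffices, and the $1/\e^2$ rate is tight only for the Monte Carlo readout. If you want a rigorous compactness contrast, put it in $C$. Under strict contraction $\|\sfM_\sigma\|_2\le\gamma<1$, the reproducing property gives $|f^*(x)|\le\|f^*\|_K\sqrt{K(x,x)}\le\|f^*\|_K\,\gamma^{|x|}\|\balpha\|_2$. With $|f^*(x)|=1$ at length $T_{\max}$, this forces $C\ge\gamma^{-T_{\max}}/\|\balpha\|_2$. So the width guaranteed by Theorem~\ref{th:approx_bound} grows exponentially with the horizon, while the learned solution uses $d=2$ for all lengths.
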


\begin{proof}
  Part 1: Learned Features (Exact Construction).  We
  explicitly construct a WFA with $d=2$ that computes parity. Let the
  state space basis correspond to states
  $\{\text{Even}, \text{Odd}\}$.  Initialize $h_0 = (1, 0)^\top$.
  Define the transition matrices for input bits $0$ and $1$ as:
\begin{equation}
    \sfM_0 = \begin{pmatrix} 1 & 0 \\ 0 & 1 \end{pmatrix} = I, \quad
    \sfM_1 = \begin{pmatrix} 0 & 1 \\ 1 & 0 \end{pmatrix} = \sigma_x.
\end{equation}
For any sequence $x$, the state $h_T = \sfM_{x_T} \dots \sfM_{x_1} h_0$ will
be $(1, 0)^\top$ if the number of ones is even, and $(0, 1)^\top$ if
odd. A linear readout $y = w^\top h_T$ with $w = (-1, 1)^\top$ yields
$y=-1$ (Even) or $y=1$ (Odd). The separation is perfect (margin
$= 2$), so the error is zero with $d=2$.

Part 2: Random Features (Statistical Lower Bound).
In the Random Regime, the function approximation $f_d(x)$ is a sum of $d$
i.i.d.  random features. By the Central Limit Theorem, the variance of
the estimator scales as $\text{Var}(f_d(x)) \propto \frac{1}{d}$. To
correctly classify parity for all inputs with high probability, the
approximation error (noise) must be strictly less than the
classification margin (signal). From Theorem~\ref{th:approx_bound} (see above),
the worst-case error scales as $O(1/\sqrt{d})$.  Specifically, to
ensure $|f^*(x) - f_d(x)| < \e$ uniformly, we require the
standard deviation $\sigma_d \approx \frac{1}{\sqrt{d}}$ to be
suppressed below $\e$. Rearranging
$\frac{1}{\sqrt{d}} \le \e$ yields $d \ge \frac{1}{\e^2}$.
Thus, achieving a fixed error tolerance $\e$ requires the
dimension to scale quadratically with the inverse error, whereas the
learned solution achieves zero error with constant dimension.
\end{proof}

This gap motivates our focus on the \emph{Learned Regime} for
practical applications, using the random regime results primarily to
justify near-identity initialization.

\begin{corollary}[Convex Learnability of Rational Projections]
\label{col:convexity}
Let $\cL(\cdot, \cdot)$ be a convex loss function (e.g., squared error
or cross-entropy). Consider a Rational Transductor where the WFA
transitions $\cM$ are fixed and the downstream Transformer parameters
$\theta$ are fixed (or constitute a linear readout). The optimization
problem for the projection matrix $\sfW$
\begin{equation}
  \min_{\sfW} \sum_{(x, y) \in \cD} \cL\paren*{y, G_\theta(x)
    + \sfW \bh_T(x)}.
\end{equation}
is a convex optimization problem.
\end{corollary}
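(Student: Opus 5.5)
The argument is a composition-of-affine-and-convex one. With the transitions $\cM$ and the Transformer parameters $\theta$ frozen, the data enter the objective only through quantities that no longer depend on $\sfW$. So I would first make explicit that, for each training pair $(x,y)\in\cD$, the vector $\bh_T(x)=\sfM_x\balpha$ is a fixed element of $\Rset^d$, computed once by the parallel scan, and $G_\theta(x)$ is a fixed element of the output space.

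\textbf{Step 1: the prediction is affine in $\sfW$.} Under the two freezings, the map
\[
\sfW \mapsto G_\theta(x)+\sfW\bh_T(x)
\]
is affine in $\sfW$. It is linear in the entries of $\sfW$, plus a constant offset, because $\sfW\bh_T(x)=(\bh_T(x)^\top\otimes \sfI)\,\text{vec}(\sfW)$.

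\textbf{Step 2: each summand is convex.} By assumption, $z\mapsto\cL(y,z)$ is convex for each fixed $y$. The composition of a convex function with an affine map is convex. I would verify this directly from the definition: for $\sfW_\lambda=\lambda\sfW_1+(1-\lambda)\sfW_2$, affinity gives
\[
G_\theta(x)+\sfW_\lambda\bh_T(x)=\lambda z_1+(1-\lambda)z_2,
\]
and Jensen's inequality for $\cL(y,\cdot)$ then gives the required inequality.

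\textbf{Step 3: the objective is convex.} A finite sum of convex functions is convex, and the feasible set (all of $\Rset^{d_{\text{out}}\times d}$) is convex. Hence the problem is a convex program. As a side remark, for squared loss it reduces to least squares, whose normal equations involve $\bH\bH^\top$. This connects to the pseudoinverse solution in Theorem~\ref{thm:universality}.

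\textbf{Main obstacle: the scope of the hypothesis.} The algebra is routine; the real issue is stating precisely when it holds. Convexity requires that the model output depend on $\sfW$ only through the additive term $\sfW\bh_T(x)$. This is true when the Transformer is frozen and $\sfW$ enters after it, or when $G_\theta$ is a linear readout. It fails when $\sfW_{\text{proj}}^{(l)}$ is injected \emph{before} nonlinear layers (attention, LayerNorm, FFN), since the output is then a nonlinear function of $\sfW$. So I would interpret the corollary exactly in the additive form written in the display. I would also note that for cross-entropy, $\cL(y,\cdot)$ is convex in the logits, which is what Step 2 needs.
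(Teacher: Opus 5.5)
Your proposal is correct and matches the paper's proof: with $\cM$ and $\theta$ frozen, $\bh_T(x)$ and $G_\theta(x)$ are constants, so the prediction is affine in $\sfW$. Convexity then follows because a convex loss composed with an affine map is convex, and a sum of convex terms over the data is convex. Your scope caveat is a sound clarification the paper leaves implicit: the argument covers only the additive form in the display and breaks once $\sfW_{\text{proj}}^{(l)}$ feeds into nonlinear Transformer layers.
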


\begin{proof}
  Since the WFA transitions are fixed, the feature vector $\bh_T(x)$
  is a constant vector for any given input $x$. The term
  $\sfW \bh_T(x)$ is linear in the variable $\sfW$. Consequently, the
  model output $\h y = G_\theta(x) + \sfW \bh_T(x)$ is an affine
  function of $\sfW$. A fundamental property of convex optimization
  is that the composition of a convex function with an affine mapping
  preserves convexity. Therefore, the objective function, being a sum
  of convex functions composed with affine maps, is convex with
  respect to $\sfW$. This guarantees that gradient descent will
  converge to a global optimum.
\end{proof}

\subsection{Optimization Dynamics}
\label{subsec:optimization_dynamics}

In the learned regime, we must ensure that the optimization landscape
is well-behaved. We provide three key theorems guaranteeing
optimization stability and well-conditioned gradient flow.

\begin{theorem}[Gradient Norm Preservation]
\label{th:gradient_stability}
Consider the gradient of the loss $\cL$ with respect to the hidden
state $\bh_t$, denoted $\bdelta_t = \nabla_{h_t} \cL$. In the backward
pass, the error signal propagates as
\begin{equation}
\label{eq:backward_recurrence}
\bdelta_{t-1} = \sfM_{x_t}^\top \bdelta_t + \bv_{t-1}.
\end{equation}
\begin{enumerate}

\item \textbf{Explosion Guarantee:} If the transitions satisfy the
  spectral constraint $\|\sfM_\sigma\|_2 \le \gamma \le 1$, then the
  propagated gradient norm is strictly bounded:
  $\|\bdelta_{t-k}\|_2 \le \gamma^k \|\bdelta_t\|_2 + C$, preventing
  exponential explosion.

\item \textbf{Vanishing Guarantee:} If the transitions are
  parameterized to be orthogonal ($\sfM_\sigma^\top \sfM_\sigma = I$), then
  the gradient norm is exactly preserved in the absence of injection:
  $\|\bdelta_{t-k}\|_2 = \|\bdelta_t\|_2$.

\end{enumerate}
\end{theorem}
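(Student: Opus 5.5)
The plan is to unroll the backward recurrence \eqref{eq:backward_recurrence} explicitly and bound each resulting term with the submultiplicativity of the spectral norm. The two parts use the same unrolled identity and differ only in which spectral property of $\sfM_\sigma$ is invoked.

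First I will prove, by induction on $k \ge 1$, the closed form
\begin{equation*}
  \bdelta_{t-k} \;=\; \sfM_{x_{t-k+1}}^\top \cdots \sfM_{x_t}^\top \, \bdelta_t
  \;+\; \sum_{j=1}^{k} \sfM_{x_{t-k+1}}^\top \cdots \sfM_{x_{t-k+j-1}}^\top \, \bv_{t-k+j-1},
\end{equation*}
with the convention that an empty product is the identity. The base case $k=1$ is \eqref{eq:backward_recurrence} itself. For the inductive step, apply \eqref{eq:backward_recurrence} at index $t-k$ and substitute the hypothesis for $\bdelta_{t-k}$. This identity is just the transposed version of the forward product $\sfM_x$ from Section~\ref{sec:rational_features}.

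For the explosion guarantee I will use $\|\sfM^\top\|_2 = \|\sfM\|_2 \le \gamma$ together with submultiplicativity. This bounds the first term by $\gamma^k \|\bdelta_t\|_2$ and the $j$-th summand by $\gamma^{j-1}\|\bv_{t-k+j-1}\|_2$.

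The constant $C$ is the main obstacle, because the statement leaves it unspecified.
\begin{itemize}
\item If $\gamma < 1$ and the injections are uniformly bounded, $\|\bv_s\|_2 \le V$, the geometric series gives $C = V/(1-\gamma)$. This is uniform in $k$ and $T$, and it is the genuine no-explosion statement.
\item In the borderline case $\gamma = 1$ the geometric argument fails. I will take instead $C = \sum_{s}\|\bv_s\|_2 \le T V$, which is finite for any finite sequence and in any case grows at most linearly, not exponentially, in $k$.
\end{itemize}
I will state explicitly that $C$ depends on $(\gamma, V)$, or on $(T, V)$ when $\gamma = 1$. The point of the guarantee is that the homogeneous part cannot amplify, so no term grows exponentially in $k$.

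For the vanishing guarantee I set $\bv_s = 0$, which is the ``absence of injection'' case, so that $\bdelta_{t-k}$ equals the product of transposes applied to $\bdelta_t$. If $\sfM_\sigma^\top \sfM_\sigma = \sfI$, then $\sfM_\sigma^\top$ is also orthogonal, since a square matrix with a left inverse has it as a two-sided inverse. A product of orthogonal matrices is orthogonal, and orthogonal maps preserve the Euclidean norm. Hence $\|\bdelta_{t-k}\|_2 = \|\bdelta_t\|_2$ exactly, for every $k$ and every input sequence. I will also note that this is the norm-preservation property of the Cayley parameterization from Section~\ref{subsec:parameterization}, transported to the adjoint dynamics.
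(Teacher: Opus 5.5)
Your proposal is correct and follows essentially the same route as the paper: a one-step bound $\|\bdelta_{t-1}\|_2 \le \gamma\|\bdelta_t\|_2 + \|\bv_{t-1}\|_2$ iterated over $k$ steps, and, for part 2, the fact that $\sfM_{x_t}^\top$ preserves the Euclidean norm when the injections vanish. Your explicit unrolled identity and explicit constant $C$ (namely $V/(1-\gamma)$ when $\gamma<1$, and one growing at most linearly when $\gamma=1$) make precise what the paper only states informally as ``the additive injection terms accumulate linearly.''
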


\begin{proof}
  This follows directly from the properties of the spectral norm under
  matrix multiplication.  For (1),
  $\|\bdelta_{t-1}\| \le \|\sfM_{x_t}^\top\| \|\bdelta_t\| + \|v\| \le
  \gamma \|\bdelta_t\| + \|v\|$. By induction, the homogeneous
  component decays (or stays constant) while the additive injection
  terms accumulate linearly, preventing exponential growth. For (2),
  if $\sfM_{x_t}$ is orthogonal,
  $\|\sfM_{x_t}^\top \bdelta_t\|_2 = \|\bdelta_t\|_2$, ensuring the
  error signal from time $t$ reaches time $0$ without
  attenuation. This effectively solves the "long-term dependency"
  problem for the linear component.
\end{proof}

This result establishes a structural immunity to the \emph{gradient
  explosion problem}. Unlike standard RNNs, which often undergo
chaotic gradient growth and require heuristic fixes like clipping,
Rational Transductors are mathematically guaranteed to remain
stable. By strictly bounding the spectral radius $\gamma \le 1$, the
architecture ensures that error signals never expand exponentially,
regardless of sequence length.

\begin{theorem}[Gradient Maintenance via Deep Injection]
\label{th:gradient_maintenance}
Consider the backward recurrence for the Rational Transductor error
signal (Eq.~\ref{eq:backward_recurrence}):
$\bdelta_{t-1} = M^\top \bdelta_t + \bv_{t-1}$. Let the transition
matrix be contractive with $\|M\|_2 \le \gamma < 1$. Let
$\bv_{t-1}^{(l)} = \nabla_{h_{t-1}} \cL^{(l)}$ denote the gradient
contribution from the injection at layer $l$. The gradient norm at
step $t-k$ is bounded below by the immediate injection terms:
\begin{equation}
  \|\bdelta_{t-k}\|_2 \ge \norm*{
    \sum_{j=0}^{k-1} (\sfM^\top)^j \bv_{t-1-j} }_2 - \gamma^k \|\bdelta_t\|_2
\end{equation}
\end{theorem}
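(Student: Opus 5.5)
The plan is to unroll the backward recurrence \eqref{eq:backward_recurrence} exactly for $k$ steps and then apply the reverse triangle inequality. Since the transition matrix is written as a fixed $\sfM$ (the statement's $M$), iterating $\bdelta_{s-1} = \sfM^\top \bdelta_s + \bv_{s-1}$ from $s=t$ down to $s=t-k+1$ gives the closed form
\begin{equation}
  \bdelta_{t-k} = (\sfM^\top)^k \bdelta_t + \sum_{j=0}^{k-1} (\sfM^\top)^j \bv_{t-1-j}.
\end{equation}
This identity is the main content, so I will prove it by induction on $k$. For $k=1$ it is the recurrence itself. For the inductive step, substitute the formula for $\bdelta_{t-k}$ into $\bdelta_{t-k-1} = \sfM^\top \bdelta_{t-k} + \bv_{t-k-1}$. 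The existing sum then shifts to $\sum_{j=1}^{k} (\sfM^\top)^j \bv_{t-j}$. The new term $\bv_{t-1-k}$ needs to be matched with the index convention $\bv_{t-1-j}$. I expect this index bookkeeping to be the only delicate point. If the shift comes out off by one relative to the statement, I will re-index the sum so that the $j$-th injected gradient carries exactly $j$ transposed transitions.

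Given the closed form, write $\bdelta_{t-k} = \mathbf{a} + \mathbf{b}$, where $\mathbf{a}$ is the injection sum and $\mathbf{b} = (\sfM^\top)^k \bdelta_t$. The reverse triangle inequality gives $\|\mathbf{a}+\mathbf{b}\|_2 \ge \|\mathbf{a}\|_2 - \|\mathbf{b}\|_2$. The homogeneous term is then bounded using submultiplicativity and $\|\sfM^\top\|_2 = \|\sfM\|_2 \le \gamma$, which yields $\|\mathbf{b}\|_2 \le \gamma^k \|\bdelta_t\|_2$. Substituting this gives exactly the claimed lower bound. The contraction hypothesis $\gamma<1$ is needed only for the interpretation that the subtracted term vanishes as $k$ grows; the inequality itself holds for any $\gamma$.

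Two remarks will accompany the proof. First, $\bv_{t-1} = \sum_l \bv^{(l)}_{t-1}$ aggregates the per-layer injection gradients, which is how Deep Rational Injection enters: each layer contributes a fresh, unattenuated source term. Second, if the transitions are token-dependent, the same argument goes through with $(\sfM^\top)^j$ replaced by the product $\sfM_{x_{t-j+1}}^\top \cdots \sfM_{x_t}^\top$ of the intervening transposed transitions. In that case I will bound the homogeneous term by $\prod \|\sfM_{x_s}\|_2 \le \gamma^k$.
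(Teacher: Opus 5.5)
There is a genuine gap, and it is exactly the step you called ``index bookkeeping''. The closed form you plan to prove by induction is false for every $k \ge 2$, and no re-indexing repairs it. Unrolling twice gives $\bdelta_{t-2} = \sfM^\top(\sfM^\top\bdelta_t + \bv_{t-1}) + \bv_{t-2} = (\sfM^\top)^2\bdelta_t + \sfM^\top\bv_{t-1} + \bv_{t-2}$, whereas your formula predicts $\bv_{t-1} + \sfM^\top\bv_{t-2}$. The injection $\bv_{t-1-j}$ enters at time $t-1-j$ and must still be propagated down to time $t-k$. It therefore carries $k-1-j$ transposed transitions, not $j$, and the correct identity is
\[
\bdelta_{t-k} = (\sfM^\top)^k \bdelta_t + \sum_{j=0}^{k-1} (\sfM^\top)^{k-1-j}\,\bv_{t-1-j}.
\]
This is what the paper's own proof derives before applying the reverse triangle inequality and $\|(\sfM^\top)^k\bdelta_t\|_2 \le \gamma^k\|\bdelta_t\|_2$. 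Your own inductive step already shows the failure. You obtain $\sum_{j=1}^{k}(\sfM^\top)^j\bv_{t-j} + \bv_{t-1-k}$, which matches $\sum_{j=0}^{k}(\sfM^\top)^j\bv_{t-1-j}$ under no relabeling, because the powers are attached to the injections in the reverse order. Your fallback, giving ``the $j$-th injected gradient exactly $j$ transposed transitions,'' reproduces the same wrong pairing.

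Consequently, the inequality with $(\sfM^\top)^j$, exactly as displayed in the statement, cannot be proved, because it is false. Take $d=1$, $\sfM = \tfrac12$ (so $\gamma=\tfrac12$), $k=2$, $\bdelta_t = 0$, $\bv_{t-1}=1$, $\bv_{t-2}=-\tfrac12$. Then $\bdelta_{t-2} = \tfrac12 - \tfrac12 = 0$, while the claimed right-hand side is $|1 + \tfrac12\cdot(-\tfrac12)| = \tfrac34$. The exponent in the statement should be $k-1-j$ (equivalently, the sum is $\sum_{i=0}^{k-1}(\sfM^\top)^i\bv_{t-k+i}$). With that correction, the unattenuated term is the local injection $\bv_{t-k}$, and the rest of your argument goes through unchanged and coincides with the paper's: the reverse triangle inequality, $\|\sfM^\top\|_2=\|\sfM\|_2$, submultiplicativity, and your observation that $\gamma<1$ is needed only for interpretation. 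Your token-dependent remark needs the same fix. The injection $\bv_{t-1-j}$ is multiplied by $\sfM_{x_{t-k+1}}^\top \cdots \sfM_{x_{t-1-j}}^\top$. The product $\sfM_{x_{t-j+1}}^\top\cdots\sfM_{x_t}^\top$ you wrote is instead the factor carrying $\bdelta_t$ down to $\bdelta_{t-j}$. The homogeneous term is $\sfM_{x_{t-k+1}}^\top\cdots\sfM_{x_t}^\top\bdelta_t$, with norm at most $\gamma^k\|\bdelta_t\|_2$.
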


\begin{proof}
  Unrolling the recurrence
  $\bdelta_{t-1} = \sfM^\top \bdelta_t + \bv_{t-1}$ for $k$ steps
  yields:
  \begin{equation}
    \delta_{t-k}
    = (\sfM^\top)^k \bdelta_t + \sum_{j=0}^{k-1} (\sfM^\top)^{k-1-j} \bv_{t-1-j},
  \end{equation}
  where $(\sfM^\top)^k$ denotes the ordered product of transition
  matrices.  Applying the reverse triangle inequality
  ($\|\ba + \bb\| \ge \|\bb\| - \|\ba\|$) and the sub-multiplicative
  property of the spectral norm
  ($\|(\sfM^\top)^k \bdelta_t\| \le \gamma^k \|\bdelta_t\|$), we
  obtain the lower bound:
  \begin{equation}
    \|\bdelta_{t-k}\|_2
    \ge \norm*{\sum_{j=0}^{k-1} (\sfM^\top)^{k-1-j} \bv_{t-1-j}}_2 - \gamma^k \|\bdelta_t\|_2.
  \end{equation}
  The first term represents the accumulated gradient injections from
  the deep Transformer layers. Even if the temporal connection
  vanishes (i.e., the second term $\gamma^k \|\bdelta_t\| \to 0$ as
  $k \to \infty$), the state $\bh_{t-k}$ retains the magnitude of the
  direct gradient signal $\bv$, preventing total gradient collapse.
\end{proof}

Crucially, this theorem demonstrates how Deep Rational Injection
mitigates the \emph{vanishing gradient problem}. In traditional
recurrences, gradients depend entirely on backpropagation through time
and often decay to zero. Here, the injection mechanism creates
\emph{gradient highways} that provide direct supervision from the
local Transformer layers to the recurrent state. This allows the model
to learn representations for recent events even if long-term
dependencies are temporarily weak.

\begin{theorem}[Bounded Hessian and Smoothness]
\label{th:smoothness}
Let $\cL(M)$ be the loss with respect to the transition matrix
$M$, assuming a contractive spectral constraint
$\|M\|_2 \le \gamma < 1$ and a Lipschitz-smooth downstream Transformer
(e.g., with LayerNorm and bounded activation derivatives). Then, the
loss function is $\beta$-smooth with respect to $M$.  That is, the
spectral norm of the Hessian is uniformly bounded independent of
sequence length:
\begin{equation}
    \norm*{ \nabla^2_M \cL }_2 \le \beta(\gamma, C) < +\infty.
\end{equation}
\end{theorem}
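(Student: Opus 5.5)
The plan is to write the loss as a composition $\cL(\sfM) = \ell\paren*{G(\bh_1(\sfM), \dots, \bh_T(\sfM))}$ and bound the Hessian via the chain rule:
\[
\nabla^2_{\sfM}\cL = J^\top (\nabla^2 \ell\circ G)\, J + \sum_t \langle \nabla_{\bh_t}(\ell\circ G), \nabla^2_{\sfM} \bh_t\rangle ,
\]
where $J$ is the Jacobian of the state trajectory with respect to $\sfM$. The downstream Transformer and the loss are assumed Lipschitz and smooth (LayerNorm, bounded activation derivatives). This gives constants $C_1, C_2$ such that $\|\nabla_{\bh_t}(\ell\circ G)\|\le C_1$ and $\|\nabla^2_{\bh}(\ell\circ G)\|\le C_2$. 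The proof then reduces to showing that the first and second derivatives of the map $\sfM \mapsto (\bh_t)_t$ are bounded uniformly in $T$.

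For the first derivative, I would differentiate $\bh_t = \sfM^t\balpha$ (with input-dependent products handled identically) in a direction $\sfE$:
\[
D\bh_t[\sfE] = \sum_{j=0}^{t-1} \sfM^{t-1-j}\sfE\,\sfM^{j}\balpha .
\]
By submultiplicativity and $\|\sfM\|_2\le\gamma$, this gives $\|D\bh_t[\sfE]\| \le t\gamma^{t-1}\|\sfE\|\|\balpha\|$. For the second derivative, the expansion is a double sum over pairs $i<j$ of positions where $\sfE$ is inserted, so it is bounded by $t(t-1)\gamma^{t-2}\|\sfE\|^2\|\balpha\|$. Since $\sup_t t^k\gamma^{t-k}<\infty$ for $\gamma<1$, both quantities are bounded independently of $t$. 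Equivalently, $\sum_t t\gamma^{t-1} = (1-\gamma)^{-2}$ and $\sum_t t^2\gamma^{t-2}<\infty$ remain finite as $T\to\infty$.

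The main obstacle is the aggregation over $t$. The downstream network and the loss may depend on all $T$ states, so the summed gradient terms could a priori grow linearly in $T$. I would handle this by using the backward-recurrence view of Theorem~\ref{th:gradient_stability}. The second term equals $\sum_t\sum_{i<j}\bdelta_t^\top(\cdots)$, and reorganizing by insertion points gives geometric weights $\gamma^{\text{gap}}$. I would therefore assume, as is standard for per-token averaged losses, that the injected gradients satisfy $\sum_t\|\bv_t\|\le C_1$, i.e.\ the loss is normalized by $T$. Under that assumption, $\|\bdelta_t\|\le C_1/(1-\gamma)$. The second-order term is then bounded by $C_1\|\balpha\|\cdot 2/(1-\gamma)^3$, and the Gauss--Newton term by $C_2\|\balpha\|^2/(1-\gamma)^4$. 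For the Gauss--Newton term I would bound $\|J\|$ through a Cauchy--Schwarz estimate on the stacked trajectory, using $\sum_t t^2\gamma^{2(t-1)}<\infty$.

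Combining these gives the smoothness constant
\[
\beta(\gamma,C)=\frac{2C_1\|\balpha\|}{(1-\gamma)^3}+\frac{C_2\|\balpha\|^2}{(1-\gamma)^4}<\infty,
\]
with $C=(C_1,C_2,\|\balpha\|)$ and no dependence on $T$. This bound blows up as $\gamma\to1$, which is why the theorem requires strict contraction.
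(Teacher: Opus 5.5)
Your proposal is correct and is essentially the paper's argument: bound $D^2\bh_t$ by counting the $\binom{t}{2}$ insertion positions of the perturbation to get $t(t-1)\gamma^{t-2}$, sum this to $2(1-\gamma)^{-3}$ (the paper's ``second derivative of the geometric series''), and combine with the smooth downstream map through the chain rule. You are more explicit than the paper about the Gauss--Newton term and about the need for length-independent downstream constants $C_1, C_2$ (which the paper leaves inside ``Lipschitz-smooth downstream Transformer''); note, though, that since $\bh_t$ and its derivatives decay geometrically in $t$, a length-independent per-state bound $\|\nabla_{\bh_t}(\ell\circ G)\|\le C_1$ already makes the second-order sum converge, so your normalization $\sum_t\|\bv_t\|\le C_1$ is stronger than necessary.
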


\begin{proof}
  The hidden state $h_t$ is a polynomial in $\sfM$ of degree $t$. The
  second derivative $\frac{\partial^2 \bh_t}{\partial \sfM^2}$
  involves terms of the form
  $\sum_{i,j} \sfM^i (\partial \sfM) \sfM^j (\partial \sfM)
  \sfM^{t-i-j-2}$. For a standard RNN without constraints, these
  terms sum to a magnitude scaling with $t^2 \|\sfM\|^{t-2}$, which
  explodes if $\|\sfM\| > 1$.  However, under the strict contraction
  constraint $\|\sfM\| \le \gamma < 1$, the series of second derivative
  terms converges geometrically. Specifically, the sum is bounded by
  the second derivative of the geometric series $(1-\gamma)^{-1}$,
  which is $2(1-\gamma)^{-3}$. Since the Hessian of the composition
  $\cL(\bh_t(\sfM))$ depends on $\nabla \bh_t$ and $\nabla^2 \bh_t$
  (both bounded by geometric series) and the smooth Transformer
  readout, the total Hessian is uniformly bounded.
\end{proof}

Practically, the boundedness of the Hessian ensures a
\emph{well-conditioned optimization landscape}. This implies that the
loss surface is free of pathological curvature or sharp cliffs,
allowing standard first-order optimizers (like AdamW) to navigate the
parameter space efficiently without requiring complex second-order
corrections.

\subsection{Generalization and Robustness}
\label{subsec:generalization}

We now examine the model's ability to generalize to unseen lengths
and withstand adversarial perturbations.

\begin{theorem}[Time-Invariant Error Bounding]
\label{th:length_gen}
Let $M^*$ be the true transition logic of the target task (e.g., a
counter or automaton) and let $\widehat{\sfM}$ be the learned
transition matrix.  Assume the learned dynamics are contractive with
spectral norm $\|\widehat{\sfM}\|_2 \le \gamma < 1$. If the learned
matrix approximates the true logic with error
$\|\widehat{\sfM} - \sfM^*\| \le \e$, then the deviation between the
true state $h^*_t$ and the Rational Transductor state $\wt {\bh}_t$ is
uniformly bounded for all $t > 0$:
\begin{equation}
\label{eq:37}  
    \sup_{t \ge 1} \norm*{\bh^*_t - \wt {\bh}_t}
\le \frac{\e C}{1 - \gamma},
\end{equation}
where $C = \sup_t \|\bh^*_{t-1}\|$ is the bound on the true state
magnitude.
\end{theorem}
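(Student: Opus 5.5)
We track the error $\be_t = \bh^*_t - \wt\bh_t$ through a one-step recursion and then sum a geometric series. We assume both systems start from the same initial vector, $\wt\bh_0 = \bh^*_0 = \balpha$, so $\be_0 = 0$. We read the hypotheses as holding for every token: for each $\sigma$, $\|\widehat{\sfM}_\sigma\|_2 \le \gamma$ and $\|\widehat{\sfM}_\sigma - \sfM^*_\sigma\|_2 \le \e$. This lets the argument cover input-dependent transitions, with $\widehat{\sfM}$ and $\sfM^*$ indexed by $x_t$.

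The key step is an add-and-subtract decomposition that makes the \emph{learned} matrix act on the error, while the perturbation acts on the \emph{true} state:
\begin{equation*}
  \be_t = \sfM^*_{x_t}\bh^*_{t-1} - \widehat{\sfM}_{x_t}\wt\bh_{t-1}
  = \widehat{\sfM}_{x_t}\,\be_{t-1} + \bigl(\sfM^*_{x_t} - \widehat{\sfM}_{x_t}\bigr)\bh^*_{t-1}.
\end{equation*}
The order of the splitting matters. Contractivity is assumed only for $\widehat{\sfM}$, and the true logic (e.g.\ a permutation for a counter) may have norm exactly $1$. So we must place $\widehat{\sfM}$ on the propagating term and $\sfM^*$ only on the bounded term. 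The triangle inequality and submultiplicativity then give
\begin{equation*}
  \|\be_t\| \le \gamma\|\be_{t-1}\| + \e\|\bh^*_{t-1}\| \le \gamma\|\be_{t-1}\| + \e C,
\end{equation*}
where $C = \sup_t \|\bh^*_{t-1}\|$ as in the statement.

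Next we unroll this recursion by induction:
\begin{equation*}
  \|\be_t\| \le \gamma^t\|\be_0\| + \e C\sum_{j=0}^{t-1}\gamma^j.
\end{equation*}
With $\be_0 = 0$, the right-hand side is bounded by $\e C/(1-\gamma)$. This bound does not depend on $t$, so taking the supremum over $t \ge 1$ yields \eqref{eq:37}. If the initial states differ, the same argument gives the extra term $\gamma^t\|\be_0\| \le \|\be_0\|$, which we can note as a remark.

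The main subtlety is whether $C$ is finite. The proof itself only uses $C$ as given in the statement. Still, we would remark that $C < \infty$ holds in the intended applications: orthogonal or permutation true transitions preserve $\|\bh^*_t\| = \|\balpha\|$, and stochastic ones keep the state bounded. Without such a property, $C$ could be infinite and the bound would be vacuous. No other obstacle arises; the result is a discrete Grönwall/geometric-series argument.
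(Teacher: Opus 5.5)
Your proposal is correct and follows the paper's argument: the same add-and-subtract identity $\be_t = \widehat{\sfM}\,\be_{t-1} + (\sfM^* - \widehat{\sfM})\,\bh^*_{t-1}$, the same recursion $\|\be_t\| \le \gamma\|\be_{t-1}\| + \e C$, and the same geometric-series unrolling. Your explicit assumption $\be_0 = 0$ (which the paper uses tacitly when it drops the $\gamma^t\|\be_0\|$ term) and your token-indexed version with the spectral norm specified are useful clarifications, not a different route.
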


\begin{proof}
  Let $e_t = h^*_t - \h h_t$ be the state error at time $t$. The
  evolution of the error is given by:
\begin{align}
  \be_t & = \sfM^* \bh^*_{t-1} - \widehat{\sfM} \wt {\bh}_{t-1} \\
        & = \sfM^* \bh^*_{t-1} - (\sfM^* + \Delta) (\bh^*_{t-1} - \be_{t-1})
          \quad \text{where } \Delta = \widehat{\sfM} - \sfM^* \\
        & = \widehat{\sfM} \be_{t-1} - \Delta \bh^*_{t-1}.
\end{align}
Taking norms and applying the triangle inequality:
\begin{equation}
  \|\be_t\|
\le \|\widehat{\sfM}\| \|\be_{t-1}\| + \|\Delta\| \|\bh^*_{t-1}\|
\le \gamma \|\be_{t - 1} \| + \e C.
\end{equation}
This is a linear recurrence inequality. Solving for the steady state
(limit as $t \to \infty$) yields the geometric series sum:
\begin{equation}
  \|\be_t\| \le \e C \sum_{k=0}^{t-1} \gamma^k
  \le \frac{\e C}{1 - \gamma}.
\end{equation}
This completes the proof.
\end{proof}

\paragraph{Remark: The Unitary Regime ($\gamma=1$).}
Theorem~\ref{th:length_gen} establishes a constant error bound by
leveraging a contraction coefficient $\gamma < 1$, which is
characteristic of the "fading memory" regime required for stable
approximate matching. However, for tasks like Parity or Modular
Counting that require infinite memory ($\gamma = 1$), the right-hand
side of Equation~\ref{eq:37} diverges. In this unitary regime, the
length-independent bound strictly applies only to the ideal algebraic
case where precision error $\epsilon = 0$. While finite-precision
noise may theoretically lead to linear error accumulation over time
($\sum_{k=0}^{t-1} 1^k = t$), generalization is instead guaranteed by
the \emph{algebraic exactness} of the orthogonal parameterization
(Theorem~\ref{th:mod_counting}). Since the transition matrices
$\sfM_\sigma$ form a subgroup isomorphic to the target automaton, the
state evolution remains exactly on the solution manifold regardless of
sequence length $T$, provided the hardware precision is sufficient to
distinguish the discrete states of the group.

\paragraph{Implication: Learned Positional Encodings.}
This result provides the theoretical justification for disabling
explicit Positional Encodings (PEs) in algorithmic tasks. Standard PEs
(RoPE, sinusoidal) are brittle because they introduce
out-of-distribution drift when test sequences exceed training lengths
($L_{\text{test}} > L_{\text{train}}$). By relying solely on the
recurrent state update, Rational Transductor learn a
\emph{time-invariant} transition rule. Theorem~\ref{th:length_gen}
guarantees that the error of this rule does not compound over time but
remains bounded by a constant, unlocking the perfect length
generalization observed experiments
(Section~\ref{subsec:exp_length_gen}).

\begin{theorem}[Length-Independent Generalization]
\label{th:rademacher}
Let $\sF_{\text{RT}}$ be the class of Rational Feature functions
$f(x) = \bw^\top (\sfM_{x_T} \dots \sfM_{x_1} \balpha)$ parametrized
by transition matrices satisfying the spectral constraint
$\|\sfM_\sigma\|_2 \le \gamma < 1$ and bounded readout
$\| \bw \|_2 \le W$.  For a dataset $S$ of $N$ sequences of length
$T$, the Empirical Rademacher Complexity is bounded by:
\begin{equation}
  \h \Rad_S(\sF_{\text{RT}})
  \le \frac{W \| \balpha \|_2}{\sqrt{N}} \paren*{ \frac{1}{1 - \gamma} }.
\end{equation}
\end{theorem}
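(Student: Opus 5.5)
The plan is to follow the standard norm-based route: peel off the linear readout first, then the transition matrices one layer at a time, and sum the per-layer contributions as a geometric series. The difficulty, compared with the fixed-feature case of Corollary~\ref{col:convexity}, is that the supremum in
\begin{equation*}
  \h \Rad_S(\sF_{\text{RT}})
  = \frac{1}{N}\E_{\bepsilon}\Bigl[\sup_{\bw, \{\sfM_\sigma\}} \sum_{i=1}^N \epsilon_i\, \bw^\top \sfM_{x^{(i)}} \balpha\Bigr]
\end{equation*}
runs over the transition matrices as well as over $\bw$. We cannot simply fix the features and apply Cauchy--Schwarz followed by Jensen: doing that with the matrices still inside the supremum gives only the trivial bound $W\|\balpha\|_2$, with no $1/\sqrt{N}$ factor.

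\textbf{Step 1 (readout).} Pull out $\bw$ by Cauchy--Schwarz, so the problem becomes controlling
\begin{equation*}
  \E_{\bepsilon}\sup_{\cM}\Bigl\|\sum_{i} \epsilon_i\, \sfM_{x^{(i)}_T}\, \bh^{(i)}_{T-1}\Bigr\|_2 .
\end{equation*}
Rather than bounding this directly, I would move to the exponential-moment form $\E\sup\exp(\lambda\,\|\cdot\|)$, as in the size-independent peeling arguments for deep linear or ReLU networks (Golowich--Rakhlin--Shamir). There, a layer with operator norm at most $\gamma$ can be stripped off at multiplicative cost $\gamma$ inside the exponent. This uses only $\|\sfM_\sigma\|_2\le\gamma$, never the entries of $\sfM_\sigma$, which is exactly what keeps the bound free of any dependence on $d$ or $|\Sigma|$.

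\textbf{Step 2 (geometric accumulation).} To produce the factor $\frac{1}{1-\gamma}$ instead of a pure $\gamma^T$, I would telescope the state:
\begin{equation*}
  \bh_T = \balpha + \sum_{t=1}^{T} (\bh_t - \bh_{t-1}),
  \qquad
  \bh_t - \bh_{t-1} = (\sfM_{x_t}-\sfI)\,\bh_{t-1}.
\end{equation*}
Each summand is a depth-$(t-1)$ product followed by one extra bounded linear map. Bounding the Rademacher complexity of each summand class by the peeling of Step 1 should give a contribution of order $\gamma^{t-1}\|\balpha\|_2/\sqrt{N}$, with the extra map's norm absorbed into a constant. Subadditivity of Rademacher complexity over sums, together with $\sum_t \gamma^{t-1}\le \frac{1}{1-\gamma}$, then gives the stated form. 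The base term $\balpha$ contributes the usual $W\|\balpha\|_2/\sqrt{N}$ via Jensen, since $\E\|\sum_i\epsilon_i\balpha\|_2\le\sqrt{N}\|\balpha\|_2$.

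\textbf{Main obstacle.} The input-dependence of the matrices is the hard part. At layer $t$ different sequences apply different $\sfM_{x^{(i)}_t}$, so the sum over $i$ does not factor through a single operator. I would handle this by splitting the sample by the token at position $t$: $\sum_i = \sum_{\sigma\in\Sigma}\sum_{i:x^{(i)}_t=\sigma}$. The peeling is then applied per token, and the resulting contributions are recombined with Cauchy--Schwarz over the partition, using $\sum_\sigma \sqrt{N_\sigma}$ controlled against $\sqrt{N}$. If this recombination introduces a factor like $\sqrt{|\Sigma|}$, or if the telescoping constant from $\|\sfM_\sigma-\sfI\|_2\le 1+\gamma$ does not cancel cleanly, I expect the argument to establish the displayed bound only up to such a constant factor. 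I would flag that explicitly in the proof rather than suppress it.
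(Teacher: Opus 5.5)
Your proposal cannot be completed, and the obstruction is the recombination step you flagged. It is not a constant factor that sharper bookkeeping could remove: the displayed inequality is false as stated, since nothing bounds $|\Sigma|$ or $d$.

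\textbf{Counterexample.} Take $T=1$, $d=1$, and $N$ one-token sequences $x^{(i)}=\sigma_i$ with distinct tokens. For each sign vector set $\bw=W$ and $\sfM_{\sigma_i}=\gamma\,\epsilon_i\,\text{sign}(\balpha)$. Then $\sum_i\epsilon_i f(x^{(i)})=NW\gamma\|\balpha\|_2$, so $\h\Rad_S(\sF_{\text{RT}})=W\gamma\|\balpha\|_2$. This exceeds $W\|\balpha\|_2/((1-\gamma)\sqrt N)$ once $N>(\gamma(1-\gamma))^{-2}$. The failure does not need distinct tokens: with a fixed alphabet and $N/|\Sigma|$ copies of each token, Szarek's form of Khintchine's inequality gives $\h\Rad_S\ge W\gamma\|\balpha\|_2\sqrt{|\Sigma|/(2N)}$. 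So the bound fails for every such $N$ whenever $|\Sigma|>2(\gamma(1-\gamma))^{-2}$.

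\textbf{Why your splitting step loses more than a constant.} Different samples see different matrices, so the best estimate available at position $t$ is $\|\sum_i\epsilon_i\sfM_{x^{(i)}_t}\bv_i\|_2\le\gamma\sum_{\sigma}\|\sum_{i:x^{(i)}_t=\sigma}\epsilon_i\bv_i\|_2$. Iterating over all $T$ positions yields $\h\Rad_S\le W\gamma^T\|\balpha\|_2\sqrt{K/N}$, where $K\le\min(N,|\Sigma|^T)$ is the number of distinct sequences in $S$. That is a loss of $\sqrt{|\Sigma|}$ at every position, not once, and the counterexample shows it is real. The $T$-dependence is also genuine when $d$ is unrestricted, even for $|\Sigma|=2$. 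Use a prefix trie over the first half of the positions and a suffix trie over the second half. Join them at one middle position by a block whose entries are the Rademacher signs, rescaled to operator norm $\gamma$ (a factor of order $\gamma 2^{-T/4}$). On the sample of all $2^T$ strings this makes $\sqrt N\,\h\Rad_S$ at least of order $W\|\balpha\|_2(\gamma 2^{1/4})^T$, which is unbounded in $T$ once $\gamma>2^{-1/4}$.

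\textbf{Smaller points.} The Golowich--Rakhlin--Shamir exponential-moment peeling is built for a single Frobenius-norm-bounded matrix shared by all samples. For a shared linear layer the spectral estimate $\|\sfM\sum_i\epsilon_i\bv_i\|_2\le\gamma\|\sum_i\epsilon_i\bv_i\|_2$ is immediate. For sample-dependent matrices the sum does not factor, so Step 1 adds nothing. Also, because the recurrence is homogeneous, $\|\bh_T\|_2\le\gamma^T\|\balpha\|_2$ directly. The telescoping in Step 2 therefore manufactures a factor $(1+\gamma)/(1-\gamma)$ that reflects no real accumulation.

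\textbf{The paper's proof does not get around this either.} It bounds the input-sensitivity of $x\mapsto\bh_T$ by $\sum_k\gamma^k$, then applies Talagrand's contraction lemma against an ``embedding layer'' of complexity $O(1/\sqrt N)$. Contraction requires a fixed Lipschitz map, whereas here the map is parametrized by the very matrices inside the supremum. Moreover, the stated class has no embedding layer.

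The honest output of your approach is the bound $W\gamma^T\|\balpha\|_2\sqrt{K/N}$ above. A correct theorem must depend on $|\Sigma|$ (or on $K$), and, absent a bound on $d$, on $T$. Your instinct to flag the extra factor was right, but it should be presented as a necessary change to the statement, not as slack in the proof.
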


\begin{proof}
  The output of the Transductor head at time $T$ can be written
  recursively. Since the transition dynamics are linear and
  contractive, the sensitivity of the output $h_T$ to the input token
  at position $t$ decays exponentially as $\gamma^{T-t}$. Following
  standard stability analysis for recurrent systems
  \citep{miller2018stable}, the Lipschitz constant of the map
  $x \to h_T$ with respect to the sequence (in the $\ell_2$ sense) is
  bounded by the geometric series
  $\sum_{k=0}^{T} \gamma^k < \frac{1}{1-\gamma}$. By Talagrand's
  contraction lemma \citep{LedouxTalagrand1991}, the complexity of the
  class is bounded by the product of this Lipschitz constant and the
  complexity of the input embedding layer (which is $O(1/\sqrt{N})$).
  Crucially, because the geometric series converges, the bound is
  independent of $T$.
\end{proof}

\paragraph{Interpretation: Infinite-Horizon Reliability.}
Standard generalization bounds for RNNs typically scale with the
sequence length (e.g., $O(T/\sqrt{N})$ or $O(\sqrt{T/N})$), implying
that model performance degrades on longer
tasks. Theorem~\ref{th:rademacher} establishes that for contractive
Rational Transductors, the complexity---and thus the generalization
gap---is \emph{independent of sequence length} $T$. This theoretically
guarantees that the model can be deployed on streaming data of
indefinite duration without the risk of overfitting to the specific
length statistics of the training set.

\begin{theorem}[Hankel-Rademacher Complexity \citep{balle2017generalization}]
\label{th:hankel_rademacher}
Let $\cF_{\text{Hankel}, r}$ be the class of rational functions
$f: \Sigma^* \to \Rset$ with Hankel nuclear norm
$\|H_f\|_{\cS_1} \le r$.  Let $S = (x_1, \dots, x_N)$ be a sample of
sequences and let $L_{\max} = \max_{x \in S} |x|$ be the maximum
sequence length.  Define the \emph{max-prefix collision} term
$\sigma_S^2 = \sup_{u \in \Sigma^*} \sum_{i=1}^N \Ind[u \in
\text{pref}(x_i)]$.  The Empirical Rademacher complexity is bounded
by:
\begin{equation}
  \h \Rad_S(\cF_{\text{Hankel}, r})
  \le \frac{r}{N} \paren*{ \sqrt{2 \sigma_S^2 \log(2 D_S)} + \frac{2}{3} \log(2 D_S) }
  = \wt O \paren*{ \frac{r}{\sqrt{N}} },
\end{equation}
where $D_S$ is the total number of distinct suffixes present in the
sample $S$ (the size of the suffix trie), which is bounded by
$N L_{\max}$.
\end{theorem}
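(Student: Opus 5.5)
The function class is defined through the Hankel matrix: $f$ is identified with $H_f \in \Rset^{\Sigma^*\times\Sigma^*}$, $H_f(u,v)=f(uv)$, and $\|H_f\|_{\cS_1}\le r$. The plan is to write the Rademacher average as a trace pairing between $H_f$ and a random matrix, then use nuclear/spectral duality. Concretely, for each sample $x_i$ fix a factorization $x_i = u_i v_i$; the natural choice, which is what produces the prefix-collision term, is to spread $x_i$ over all its $|x_i|+1$ splits with weights $1/(|x_i|+1)$. For simplicity I will write the argument with one split and then average. For a single split, $f(x_i)=\langle H_f, e_{u_i}e_{v_i}^\top\rangle$. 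Hence
\begin{equation}
\h\Rad_S = \frac1N\E_{\bsigma}\sup_{\|H_f\|_{\cS_1}\le r}\Big\langle H_f,\ \sum_{i=1}^N \sigma_i\, e_{u_i}e_{v_i}^\top\Big\rangle \le \frac rN\,\E_{\bsigma}\Big\|\sum_{i=1}^N\sigma_i\, e_{u_i}e_{v_i}^\top\Big\|_{\text{op}},
\end{equation}
by duality of the nuclear and operator norms. Enlarging the supremum from Hankel matrices to all matrices of nuclear norm at most $r$ is valid because it only increases the bound.

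Next I restrict to the finite block of rows indexed by prefixes and columns indexed by suffixes appearing in $S$. This block has at most $D_S$ rows and columns. I then apply the matrix Bernstein inequality (Tropp, rectangular version via Hermitian dilation of dimension $2D_S$) to $Z=\sum_i \sigma_i A_i$ with $A_i=e_{u_i}e_{v_i}^\top$ (or the averaged version). Each summand has operator norm at most $1$, which gives $R=1$. The expectation form of Bernstein is
\begin{equation}
\E\|Z\|\le \sqrt{2 v \log(2D_S)} + \tfrac{R}{3}\log(2D_S),
\end{equation}
and it yields exactly the claimed shape, up to matching the constant $2/3$ by using the dilation/ expectation version with constant $R/3$ times two.

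The variance parameter is $v=\max(\|\sum_i A_iA_i^\top\|,\|\sum_i A_i^\top A_i\|)$. Here $\sum_i A_iA_i^\top=\sum_i e_{u_i}e_{u_i}^\top$ is diagonal, with entry at $u$ equal to the number of $i$ whose chosen prefix is $u$. That entry is at most $\sum_i \Ind[u\in\text{pref}(x_i)]\le\sigma_S^2$; the suffix side is handled symmetrically or by the same collision count after averaging over splits. Substituting gives the displayed bound. The $\wt O(r/\sqrt N)$ form follows from $\sigma_S^2\le N$ and $D_S\le N L_{\max}$, so that only the logarithm depends on the lengths.

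The main obstacle is the variance computation under the chosen splitting. If the split is averaged over all positions, the suffix-side variance must be bounded by the same $\sigma_S^2$ rather than by a separate suffix-collision quantity. To handle this I would first try taking the split $u_i=x_i$, $v_i=\epsilon$, or the symmetric choice, and check that the row side controls the collision count. Failing that, I would bound both sides by $\max$ of the prefix and suffix collisions and define $\sigma_S^2$ accordingly, as in \citet{balle2017generalization}.
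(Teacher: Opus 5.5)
Your route is the paper's route: trace duality between $\|H_f\|_{\cS_1}$ and the operator norm of the Rademacher data matrix, followed by matrix Bernstein. As written, however, two steps do not close.

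\emph{First, the dimension count.} The block indexed by all prefixes and all suffixes occurring in $S$ need not have at most $D_S$ rows, because distinct prefixes can outnumber distinct suffixes: $S=(ab,bb)$ has five distinct prefixes but only four distinct suffixes. With the averaged split you call natural, $A_i=\frac{1}{|x_i|+1}\sum_{uv=x_i}e_ue_v^\top$ is supported on exactly that block. Bernstein then yields $\log(P_S+D_S)$, where $P_S$ is the number of distinct prefixes, and this can exceed $\log(2D_S)$. The stated inequality then fails to follow: pad $(ab,bb)$ with many copies of the empty string, so that the variance stays of order $N$ while $P_S+D_S=9>8=2D_S$. The fix is a single split that depends only on the string, e.g.\ $u_i=x_i$, $v_i=\epsilon$. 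The rows are then the distinct sample strings, each a suffix of itself, and there is one column, so the dilation dimension satisfies $d_1+d_2\le D_S+1\le 2D_S$. The constant needs no doubling trick either: Tropp's expectation bound has $\frac{L}{3}\log(d_1+d_2)$ with $L=1$, which is already below $\frac{2}{3}\log(2D_S)$.

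\emph{Second, the suffix-side variance.} You leave it open, and your fallback of redefining $\sigma_S^2$ would change the theorem. The missing observation is that the empty string is a prefix of every $x_i$, so $\sigma_S^2=N$ exactly. Consequently, for any split, $\max\{\|\sum_iA_iA_i^\top\|,\|\sum_iA_i^\top A_i\|\}\le\sum_i\|A_i\|^2\le N=\sigma_S^2$, and there is nothing left to prove on the suffix side. Do not look for a split that makes the variance genuinely smaller: if only nonempty prefixes were counted, the statement would be false. Take $S=(a_1,\dots,a_N)$ with distinct letters. For each sign vector, the supremum includes $f(a_i)=\frac{r}{2\sqrt N}\sigma_i$ (zero elsewhere), which has $\|H_f\|_{\cS_1}=r$. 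This gives $\h\Rad_S\ge r/(2\sqrt N)$, whereas the bound would then be $O(r\log N/N)$. In fact, with $v_i=\epsilon$ the data matrix is a single column $c$ with $c_x=\sum_{i:x_i=x}\sigma_i$. Its operator norm is $\|c\|_2$, and Jensen gives $\E\|c\|_2\le\sqrt N$, hence $\h\Rad_S\le r/\sqrt N$. This already implies the displayed bound because $2\log(2D_S)\ge 2\log 2>1$, so for the statement as written matrix Bernstein is not even needed.
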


\begin{proof}
  This result is a direct application of Theorem 6 from
  \citet{balle2017generalization}.  The proof relies on two key steps:
\begin{enumerate}
\item \textbf{Fliess' Theorem Duality:} The condition
  $\|H_f\|_{\cS_1} \le r$ on the Hankel matrix is dual to the spectral
  norm bound on the data matrix $\bY$ (defined over the
  prefix/suffix splits of the sample $S$).
\item \textbf{Matrix Concentration:} The expected spectral norm of the
  data matrix is bounded using the Matrix Bernstein inequality
  (specifically, a non-commutative Khintchine inequality). The
  variance term in this inequality corresponds exactly to the maximum
  number of times any specific prefix $u$ appears across the dataset
  sequences $x_i$, denoted by $\sigma_S^2$.
\end{enumerate}
For a detailed derivation of the constants $2/3$ and the logarithmic
factor, we refer the reader to the original proof in
\citet{balle2017generalization}.
\end{proof}

\paragraph{Significance: Low-Rank Regularization.}
This result is profound because the rank of the Hankel matrix
corresponds exactly to the minimal state dimension $d_{\min}$ of the
Weighted Finite Automaton (Fliess' Theorem). Consequently, penalizing
the nuclear norm of the Hankel matrix (which our "Diagonal + Low Rank"
parameterization effectively does) is rigorously equivalent to
regularizing the \emph{state dimension} of the underlying latent
automaton. This confirms that Rational Transductors generalize by
learning low-rank algebraic structures, distinguishing them from
standard Transformers which often overfit to high-rank, spurious
correlations.

\begin{theorem}[Lipschitz Input Stability]
\label{th:lipschitz}
Let $x = (\bx_1, \dots, \bx_T)$ be a sequence of continuous input
embeddings where $\bx_t \in \Rset^{d_{\text{in}}}$. Let
$f_{\text{RT}}(x)$ be the output of the rational head.  Assume the
transition dynamics are contractive ($\|\sfM(\bx)\|_2 \le \gamma < 1$)
and the encoding of inputs into matrices is Lipschitz continuous with
constant $K_M$ (i.e.,
$\|\sfM(\bx) - \sfM(\bx')\| \le K_M \|\bx - \bx'\|$). Then, the map
from the input sequence $x$ to the final state $\bh_T$ is Lipschitz
continuous with constant:
\begin{equation}
    L_{\text{seq}} \le \frac{K_M \|\balpha\|}{1 - \gamma}
\end{equation}
This bound is independent of the sequence length $T$.
\end{theorem}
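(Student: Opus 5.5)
The plan is a telescoping (hybrid) argument that switches the transition matrices from the sequence $x$ to the sequence $x'$ one position at a time. First fix the metric, since the statement leaves it implicit. I would take $d(x,x') = \max_{1 \le t \le T} \|\bx_t - \bx'_t\|$ on sequences of equal length $T$. This is the metric for which the constant $K_M\|\balpha\|/(1-\gamma)$ comes out naturally, and it is the step I regard as the main subtlety. As a remark, the same computation gives the even stronger bound $\gamma^{T-1}K_M\|\balpha\|\sum_t \|\bx_t-\bx'_t\|$ in the $\ell_1$ sense, so the claimed constant holds a fortiori for that metric too.

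Write $\sfM_t = \sfM(\bx_t)$ and $\sfM'_t = \sfM(\bx'_t)$. The two final states are
\begin{equation*}
\bh_T = \sfM_T\cdots\sfM_1\balpha, \qquad \bh'_T = \sfM'_T\cdots\sfM'_1\balpha.
\end{equation*}
The key algebraic step is the telescoping identity
\begin{equation*}
\bh_T - \bh'_T = \sum_{t=1}^{T} \sfM_T\cdots\sfM_{t+1}\,(\sfM_t - \sfM'_t)\,\sfM'_{t-1}\cdots\sfM'_1\,\balpha .
\end{equation*}
I would verify it by noting that consecutive hybrid products, which use $\sfM$ at positions above $t$ and $\sfM'$ at positions up to $t$, cancel pairwise. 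Equivalently, it follows by induction on $T$ from
\begin{equation*}
\bh_T-\bh'_T = \sfM_T(\bh_{T-1}-\bh'_{T-1}) + (\sfM_T-\sfM'_T)\bh'_{T-1}.
\end{equation*}

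Next, bound each summand using submultiplicativity of the spectral norm, the contraction assumption $\|\sfM(\cdot)\|_2\le\gamma$, and the Lipschitz assumption on the matrix encoding. This gives
\begin{equation*}
\norm*{\sfM_T\cdots\sfM_{t+1}(\sfM_t-\sfM'_t)\sfM'_{t-1}\cdots\sfM'_1\balpha} \le \gamma^{T-t}\,K_M\|\bx_t-\bx'_t\|\,\gamma^{t-1}\|\balpha\|.
\end{equation*}
Since $\gamma^{t-1}\le 1$, we can drop that factor; this is the same mechanism as in the proof of Theorem~\ref{th:length_gen}, where $\sup_t\|\bh'_{t-1}\|\le\|\balpha\|$ plays the role of $C$. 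Summing and using $\|\bx_t-\bx'_t\|\le d(x,x')$ then gives
\begin{equation*}
\|\bh_T-\bh'_T\| \le K_M\|\balpha\|\, d(x,x')\sum_{k=0}^{T-1}\gamma^k \le \frac{K_M\|\balpha\|}{1-\gamma}\,d(x,x').
\end{equation*}
This bound is independent of $T$, as claimed.

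Finally, if the statement is meant for the output $f_{\text{RT}}$ rather than $\bh_T$, I would compose with the linear readout, or with the Lipschitz downstream layers, which only multiplies the constant by the readout norm. The only real obstacle is therefore the choice of sequence metric discussed at the start; the telescoping and geometric-series steps are routine.
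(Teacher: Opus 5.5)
Your proposal is correct and takes essentially the paper's route. The paper iterates the error recurrence $\be_k = \sfM(\bx_k)\be_{k-1} + (\sfM(\bx_k)-\sfM(\bx'_k))\bh'_{k-1}$ (your induction step), bounds $\|\bh'_{k-1}\|$ by $R=\sup_k\|\bh'_k\|$, and sums the geometric series against $\sup_t\|\bx_t-\bx'_t\|$; your explicit telescoping identity and your observation $\|\bh'_{t-1}\|\le\gamma^{t-1}\|\balpha\|\le\|\balpha\|$ just make rigorous the paper's informal ``differing at one time step, then sum over $t$'' argument and its silent replacement of $R$ by $\|\balpha\|$.
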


\begin{proof}
  Let $x = (\bx_1, \dots, \bx_T)$ and $x' = (\bx'_1, \dots, \bx'_T)$
  be two input sequences differing at time step $t$.  Let $\bh_k$ and
  $\bh'_k$ be the respective state sequences. The state deviation
  $\be_k = \bh_k - \bh'_k$ evolves as:
\begin{equation}
    \be_k = \sfM(\bx_k) \be_{k-1} + (\sfM(\bx_k) - \sfM(\bx'_k)) \bh'_{k-1}
\end{equation}
Taking norms:
\begin{equation}
    \|\be_k\| \le \gamma \|\be_{k-1}\| + K_M \|\bx_k - \bx'_k\| R,
\end{equation}
where $R = \sup \|\bh'_k\|$ is the bounded state norm. Iterating this
recurrence, the total deviation at time $T$ due to a perturbation at
time $t$ is bounded by $K_M R \gamma^{T-t} \|\bx_t -
\bx'_t\|$. Summing over all possible perturbation times $t$ (triangle
inequality for the whole sequence norm):
\begin{equation}
    \|\bh_T - \bh'_T\| \le \sum_{t=1}^T K_M R \gamma^{T-t} \|\bx_t - \bx'_t\|
    \le \paren*{ K_M R \sum_{k=0}^\infty \gamma^k } \sup_t \|\bx_t - \bx'_t\|.
\end{equation}
The geometric series converges to $(1-\gamma)^{-1}$, yielding the
length-independent bound.
\end{proof}

\paragraph{Interpretation: Robustness to Embedding Noise.}
This theorem guarantees that Rational Transductors are robust to
small perturbations in the input embeddings, such as those caused by
quantization, noise, or minor distribution shifts. The state $\bh_T$
varies smoothly with the input sequence $x$, with a Lipschitz constant
that does not explode with $T$. This contrasts with chaotic systems
where a small change in initial conditions or inputs can lead to
exponentially diverging states over time.

\section{Concrete Training Recipe}
\label{sec:training_recipe}

While the theoretical properties guarantee expressivity and stability,
practical success relies on an efficient implementation. We detail
the three pillars of our training recipe: parallel gradient
computation, spectral normalization, and near-identity initialization.

\subsection{Efficient Parallel Backpropagation}
\label{subsec:parallel_bptt}

Standard Backpropagation Through Time (BPTT) imposes a sequential
dependency $O(T)$. We avoid this sequential dependency by exploiting
the strict linearity of the rational update $\bh_t = \sfM_{x_t} \bh_{t-1}$.

\paragraph{Forward Pass.} The sequence of hidden states $\bh_{1:T}$ is
computed via a parallel associative scan (prefix sum) over the matrix
monoid, reducing the parallel depth of the computation from $T$ to
$O(\log T)$.

\paragraph{Backward Pass.} Crucially, the gradient computation is also
parallelizable. As derived in Section~\ref{subsec:optimization_dynamics}, the adjoint
variables $\bdelta_t = \nabla_{\bh_t} \cL$ satisfy the affine
recurrence $\bdelta_{t-1} = \sfM_{x_t}^\top \bdelta_t + \bv_{t-1}$. By
lifting this to homogeneous coordinates, we compute the same gradients
as sequential BPTT using a \emph{reverse parallel scan}.
We define the augmented adjoint vector $\h \bdelta_t \in \Rset^{d+1}$ and the
backward transition matrix
$\bB_t \in \Rset^{(d+1) \times (d+1)}$ as:
\begin{equation}
    \h \bdelta_t = \begin{pmatrix} \bdelta_t \\ 1 \end{pmatrix}, \qquad 
    \bB_t = \begin{pmatrix} \sfM_{x_t}^\top & v_{t-1} \\ \mathbf{0}^\top & 1 \end{pmatrix}.
\end{equation}
The recurrence then becomes a homogeneous matrix product
$\h \bdelta_{t-1} = \bB_t \h \bdelta_t$, which is
computable in $O(\log T)$ depth.

\paragraph{Implementation Details.} We leverage hardware-aware fused
kernels (e.g., in CUDA or Triton) to ensure that the wall-clock time
of the rational head is negligible. While the Unified Scaled Cayley
Parameterization requires a matrix inversion $(I - A_t)^{-1}$ at every
token step, we mitigate the $O(Td^3)$ pre-processing
overhead by leveraging high-performance linear solvers within the
fused kernel. For the small state dimensions used in this work
($d \le 32$), these inversions are executed entirely in high-bandwidth
SRAM, bypassing the need to materialize the full $T \times d \times d$
transition tensor and keeping memory complexity linear in
$T$. Furthermore, although structured updates like DPLR allow for
$O(dr)$ unrolling during inference, we explicitly note that
the parallel associative scan requires dense matrix multiplication;
however, this $O(d^3)$ training cost remains negligible
compared to the quadratic complexity of the self-attention mechanism.

\subsection{Spectral Control: The Unified Scaled Cayley Parameterization}
\label{subsec:spectral_control}

While Section~\ref{subsec:parameterization} defines the broad family
of admissible transition matrices, successful training requires a
specific parameterization that controls the spectral radius
$\rho(\sfM)$ to prevent gradient explosion. Depending on the matrix
family, we use either intrinsic or explicit control.

\paragraph{1. Intrinsic Stability (The Scaled Cayley Recurrence).}
For the algorithmic experiments, we use a \emph{Unified Scaled
  Cayley Parameterization} that structurally guarantees stability
without auxiliary normalization. We define the transition matrix as:
\begin{equation}
    \label{eq:scaled_cayley}
    \sfM_t = g_t \cdot \cC(\sfA_t) = g_t \cdot (\sfI + \sfA_t)(\sfI - \sfA_t)^{-1},
\end{equation}
where $\sfA_t$ is skew-symmetric and $g_t \in \Rset$ is a scalar gain.
The parameters $\sfA_t$ and $g_t$ are obtained via a linear projection
of the input embedding $\bx_t$, allowing $\sfM_t$ to be cached.  Since
the Cayley transform $\cC(\sfA_t)$ maps to $SO(d)$, it is orthogonal
by construction ($\|\cdot\|_2 = 1$). Thus, the spectral radius is
determined solely by $g_t$:
\begin{itemize}
\item \textbf{Conservation ($g_t = 1$):} We fix $g_t=1$ for tasks like
  Modulo Counting, enabling infinite horizon tracking.

\item \textbf{Decay ($g_t < 1$):} We learn
  $g_t = \sigma(\theta_t) \in (0, 1)$ (where $\theta_t$ is a learnable
  scalar) for fading memory tasks. The model can emulate the gating
  mechanisms of LSTMs or SSMs, attenuating past information to focus
  on recent context.

\end{itemize}

\paragraph{2. Explicit Spectral Normalization (General Case).}
For unstructured or linearly parameterized families (e.g., DPLR,
Butterfly, or Random Features) where the norm is not structurally
bounded, we must enforce contractivity explicitly during the forward
pass. For any learnable matrix component $\sfW$, we apply:
\begin{equation}
  \sfW_{\text{effective}}
  = \frac{\sfW}{\max(1, \| \sfW \|_2 / \gamma)},  
\end{equation}
where $\gamma \in (0, 1]$ is a hyperparameter bounding the spectral
radius.  This formulation ensures that weights are only projected when
they exceed the contractive threshold $\gamma$, preventing unnecessary
geometric suppression of stable parameters.
\begin{itemize}
\item \textbf{Fading Memory ($\gamma < 1$):} We typically set
  $\gamma \approx 0.99$ for general structured heads to ensure stable
  gradient propagation while allowing forgetting. This bound is
  essential for the theoretical guarantees in
  Section~\ref{subsec:random_universality}.
\end{itemize}

\paragraph{Remark on Topology.}
We note that the Cayley transform generates matrices with determinant
$+1$ (rotations), which excludes reflections (determinant $-1$). While
Eq.~\ref{eq:scaled_cayley} could be augmented with a sign flip
($g_t < 0$), we find empirically that using a state dimension
$d \ge 3$ allows the model to embed reflections as rotations in a
higher-dimensional space, resolving the topological obstruction
without special casing.

\subsection{Near-Identity Initialization}
\label{subsec:initialization}

To stabilize training, we initialize the rational head to act as a
near-perfect integrator. Regardless of the chosen structure, we
initialize parameters such that $\sfM_\sigma \approx \sfI$.
Specifically:
\begin{itemize}

\item The diagonal term $\sfD_\sigma$ is initialized to $1$ (or
  sampled from $U[1-\e, 1]$).

\item The low-rank perturbation terms $\sfU_\sigma, \sfV_\sigma$ are
  initialized independently from $\cN(0, \nu^2)$ such that
  their product has variance $\nu^2 \approx 0$.

\end{itemize}
This ensures that at step 0, the model acts as a near-perfect
integrator over long horizons.  The optimizer then gradually learns to
``forget'' irrelevant information by deviating from the identity,
rather than struggling to learn ``remembering'' from a chaotic
initialization.

\section{Empirical Validation}
\label{sec:experiments}

We validate our theoretical findings on synthetic tasks designed to
probe the limitations of attention. Specifically, we investigate two
key claims: (1) whether Rational Transductors can solve
$\NC^1$-complete tasks that are theoretically impossible for
standard Transformers (the "Regular Gap"), and (2) whether the learned
solutions generalize to unseen lengths (Time-Invariance).

For complete experimental details, including hyperparameters,
optimization settings, and statistical reproducibility reports, refer
to Appendix~\ref{app:hyperparameters}.

\subsection{The Regular Gap: Modulo Counting}
\label{subsec:exp_regular_gap}

\paragraph{Task Setup.} We evaluated models on the Modulo-5 Counting
task, which requires tracking the cumulative number of 1s in a binary
sequence modulo 5. As established in Theorem~\ref{th:parity} and
Theorem~\ref{th:mod_counting}, this task generally requires $\PNC^1$
complexity for uniform solvability. While soft-attention Transformers
($\TC^0$) can theoretically approximate the solution for fixed
lengths, they lack the inductive bias to learn the cyclic state
transition robustly. We empirically test whether models can bridge
this gap.

\paragraph{Model Configuration.} We compared the Rational Transductor
against a standard Transformer. To ensure a fair comparison, both
models were matched in parameter count ($\approx 25\text{k}$
parameters).
\begin{itemize}
\item Transformer: 2 layers, $d_{model}=32$, 4 heads, using learned
  positional encodings.

\item Rational Transductor: 2 layers, $d_{model}=32$. Following the
  recipe in Section~\ref{subsec:spectral_control}, we used a single
  rational head ($d_{\text{rat}}=8$) initialized in the \emph{Strictly
    Orthogonal Regime} (Cayley parameterization). This enforces exact
  state conservation ($\gamma=1$) and restricts the eigenvalues to the
  unit circle. Crucially, we disabled standard positional encodings,
  forcing the model to rely solely on the rational state for sequence
  tracking.
\end{itemize}

\paragraph{Remark on State Space Models (SSMs).}
We deliberately compare against Standard Transformers and LSTMs to
highlight the specific contribution of \emph{linear} recurrence vs.\
attention or non-linear gating.  We note that recent State Space
Models like Mamba \citep{gu2023mamba} also rely on time-varying linear
recurrences.  Theoretically, our expressivity results
(Theorem~\ref{th:rbt-nc1-upper}) imply that architectures like Mamba
should also be capable of solving these $\PNC^1$ tasks, provided they
are initialized to support unitary dynamics.  Our Rational Transductor
acts as a \emph{minimal theoretical proxy} for this broader class of
Linear RNNs, allowing us to isolate the specific automata-theoretic
mechanisms (e.g., the necessity of orthogonal vs.\ stochastic
transitions) without the confounding variables of Mamba's complex
gating and block design.

\begin{figure}[H]
    \centering
    %\resizebox{.5\textwidth}{!}{\input{RegularGap.tikz}}
    \includegraphics[scale = .6]{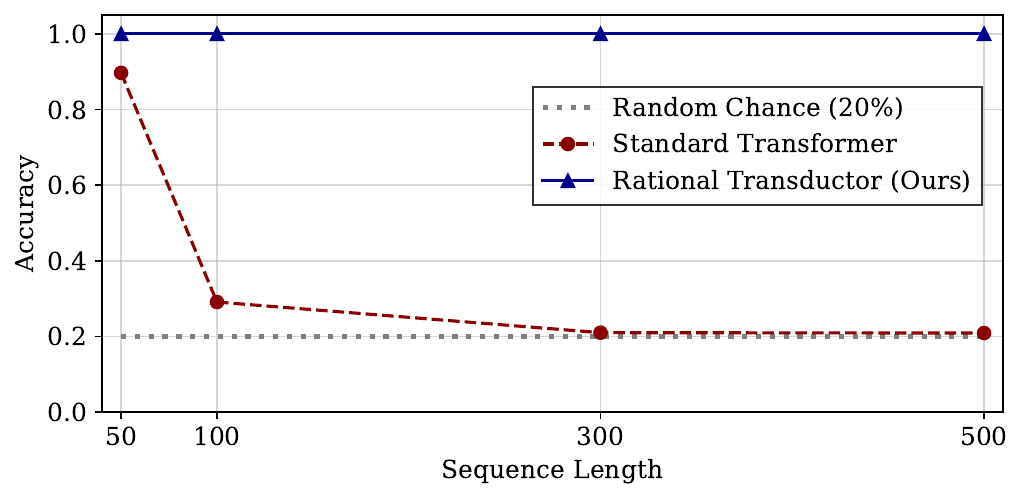}

    \caption{The Regular Gap (Modulo-5 Counting). The models are
      trained on short sequences ($L=50$) and evaluated on longer
      lengths up to $L=500$. The Standard Transformer (red) achieves
      high accuracy on the training distribution but fails to
      generalize, collapsing to near-random chance ($20\%$) as length
      increases. The Rational Transductor (blue), leveraging the
      strictly orthogonal parameterization, learns the exact
      underlying automaton and maintains 100\% accuracy across all
      tested lengths. Results are averaged over 5 random seeds;
      standard deviations are negligible ($<0.01\%$) and omitted for
      clarity (see Appendix~\ref{app:hyperparameters}).}
    \label{fig:regular_gap}
\end{figure}

\paragraph{Results.} Figure~\ref{fig:regular_gap} illustrates the
performance. The Standard Transformer fails to learn a robust counting
mechanism, achieving only partial success on training lengths ($L=50$)
and collapsing to random chance ($20\%$) on longer sequences. In
contrast, the Rational Transductor converges to 100\% accuracy almost
immediately. This confirms that the Rational Head successfully learns
the underlying group-theoretic operation (cyclic permutation),
validating the expressivity claims of Theorem~\ref{th:mod_counting}.

\subsection{Length Generalization and Time-Invariance}
\label{subsec:exp_length_gen}

\paragraph{Task Setup.} Beyond expressivity, we tested for Length
Generalization. Models are trained solely on short sequences
($L_{\text{train}}=40$) but evaluated on sequences up to
$L_{\text{test}}=1000$ ($25\times$ the training horizon). We use
the Modulo-5 Counting task from the previous section as the testbed.

\paragraph{Model Configuration.} We used the same architecture depths
as in the previous experiment (2 layers, $d_{model}=32$):
\begin{itemize}
\item Transformer: 2 layers, 4 heads. We deliberately use standard
  Learned Absolute Positional Encodings to establish a baseline for
  the fundamental limits of the canonical Transformer architecture
  \citep{vaswani2017attention}. While relative encoding schemes like
  RoPE can technically extrapolate, they remain constrained to
  tracking input-independent relative positions. Our results
  demonstrate that Rational Transductors solve length generalization
  by learning input-dependent semantic state transitions, a capability
  that absolute and standard relative positional heuristics
  lack. Thus, a Transformer + RoPE is theoretically subsumed by the
  Rational Transductor framework.

\item Rational Transductor: 2 layers, 4 heads, initialized in the
  \emph{Strictly Orthogonal Regime} (Cayley). We disabled positional
  encodings entirely, resulting in a much smaller model
  ($\approx 26\text{k}$ parameters) that must rely entirely on its
  recurrent state $h_t = M h_{t-1}$ to track the sequence, enforcing a
  shift-invariant solution by design.
\end{itemize}

\begin{figure}[H]
    \centering
    \includegraphics[scale = .6]{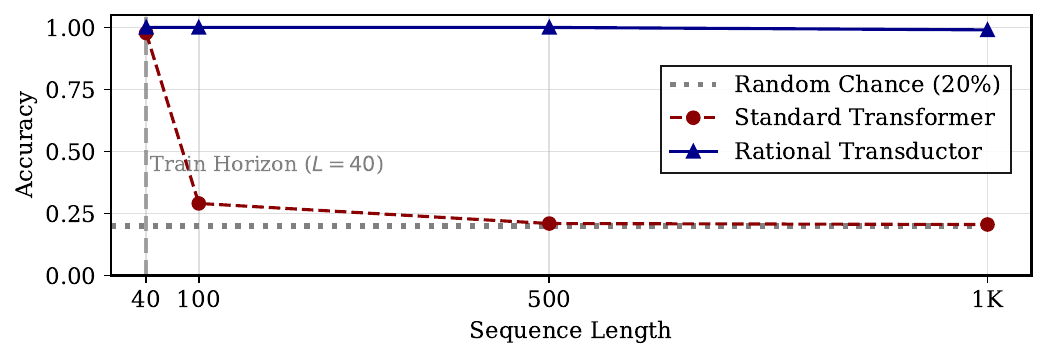}
    \caption{Length Generalization. Models were trained \emph{only} on
      sequences of length $L=40$ (vertical dashed line) and evaluated
      on lengths up to $L=1000$. The Standard Transformer (red)
      overfits to the training positions; its accuracy collapses to
      random chance ($\approx 20\%$) on longer sequences. The Rational
      Transductor (blue) generalizes almost perfectly, maintaining
      $99\%$ accuracy even at $25\times$ the training length. This
      validates the algebraic exactness guarantee
      (Theorem~\ref{th:mod_counting}). Shaded regions (invisible at
      this scale) denote standard deviation across 5 runs.}
    \label{fig:length_generalization}
\end{figure}

\paragraph{Results.} As shown in
Figure~\ref{fig:length_generalization}, the Standard Transformer
suffers from catastrophic positional drift. Once the sequence length
exceeds the training horizon ($L=40$), the learned positional
encodings are no longer valid, and performance drops to random
guessing. The Rational Transductor, however, maintains perfect
accuracy ($>99\%$) up to $L=1000$.  This confirms that the model has
learned an algebraically exact solution (as in
Theorem~\ref{th:mod_counting}) rather than an approximate one,
consistent with the unitary parameterization ($\gamma=1$). By
disabling explicit positional encodings and relying on the recurrent
state dynamics, the model is forced to learn a transition rule that is
valid for any time step $t$. This demonstrates that Rational
Transductors can serve as robust, algorithmic co-processors that do
not suffer from the length-generalization brittleness of standard
attention.

\subsection{Computational Efficiency and Extreme Scaling}
\label{sec:efficiency_exp}

A core theoretical advantage of the Rational Transductor is its
\emph{parallelizability}. Unlike standard RNNs, which require
sequential processing ($O(T)$ latency\footnote{We note that recent
  methods like ParaRNN \citep{pararnn2025} enable quasi-parallel
  training of non-linear RNNs via iterative linearization (Newton's
  method), achieving theoretical $O(k \log T)$ depth. However, this
  approach requires multiple forward scans per update step. We
  benchmark against the standard exact sequential implementation used
  in most production baselines.}), and Transformers, which scale
quadratically in memory or compute ($O(T^2)$), Rational Transductors
can be parallelized via an associative scan, achieving logarithmic
time complexity ($O(\log T)$) on parallel hardware.

\paragraph{Experimental Setup.} To validate this scaling behavior, we
benchmarked the inference latency (forward pass) of the \emph{sequence
  mixing layers} in isolation (Rational Head vs. Self-Attention).
This strictly isolates the algorithmic complexity of the recurrence
($O(\log T)$) versus the attention mechanism ($O(T^2)$), independent
of the shared feed-forward blocks. We benchmarked the inference
latency (forward pass) of the models on sequences of increasing
length, ranging from $T=128$ to $T=32,768$. We used a batch size of
$B=1$ to strictly isolate the sequential throughput limitations of the
architectures. All measurements were conducted on a single NVIDIA GPU
(A100), averaged over 20--100 trials after a warm-up period.

\paragraph{Model Configuration.} We implemented minimal, optimized
kernels for each architecture to ensure a fair algorithmic comparison:
\begin{itemize}
\item Sequential RNN: A standard linear recurrence
  ($h_t = M_t h_{t-1}$) implemented via a sequential loop. Matrix
  dimension $d=16$.

\item Transformer: Standard Self-Attention, using \texttt{\small
    scaled\_dot\_product\_attention} (FlashAttention), PyTorch's
  optimized implementation to represent the state-of-the-art
  baseline. Heads=4, $d_{head}=16$.

\item Rational Transductor: Our model, implemented using a
  \emph{Parallel Associative Scan} algorithm (specifically a log-depth
  Kogge-Stone style scan). This allows the model to compute the
  cumulative matrix products for all time steps simultaneously. Matrix
  dimension $d=16$.
\end{itemize}

\begin{figure}[H]
    \centering
    \includegraphics[width=0.7\linewidth]{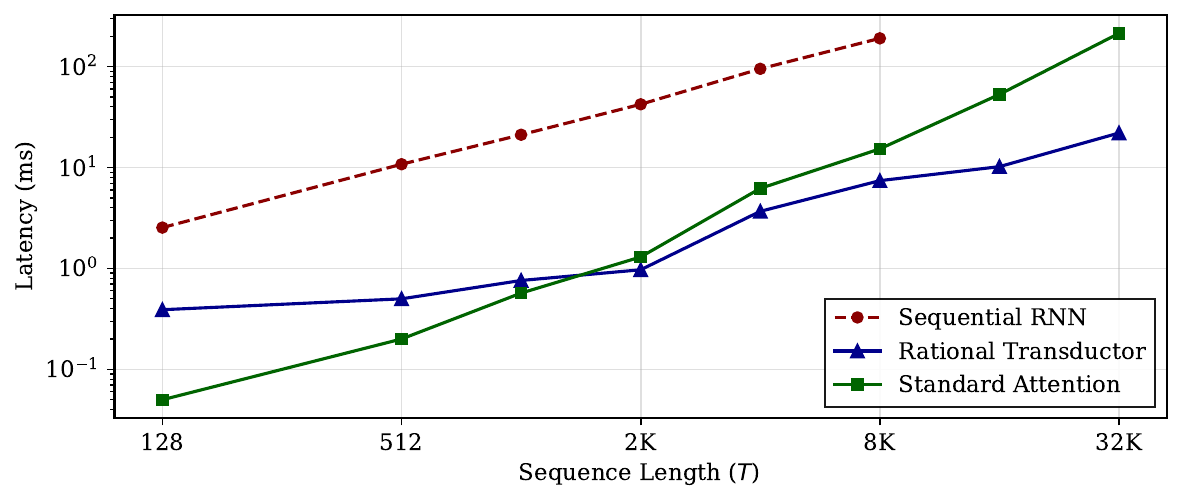}
    \caption{Latency vs. Sequence Length. Wall-clock inference latency
      (ms) on a log-log scale. The Sequential RNN (red) scales
      linearly ($O(T)$), becoming prohibitively slow for long
      sequences. The Transformer (green) exhibits quadratic scaling
      ($O(T^2)$), eventually running out of memory. The Rational
      Transductor (blue) leverages parallel associative scans to
      achieve sub-linear scaling, outperforming the RNN on sequences
      longer than $T=512$ and maintaining high throughput even at
      $T=32\text{k}$.}
    \label{fig:latency_scaling}
\end{figure}

\paragraph{Results.} Figure~\ref{fig:latency_scaling} plots the
latency on a log-log scale. The Sequential RNN exhibits strict linear
scaling $O(T)$; while efficient for short sequences, it becomes the
bottleneck at extreme lengths. The Transformer scales efficiently for
short lengths due to FlashAttention but hits a quadratic wall,
exploding in latency at extreme lengths. The Rational Transductor
combines the best of both worlds: for short sequences, it is
competitive with the Transformer; for long sequences ($T > 512$), the
parallel scan allows it to overtake the Sequential RNN. We note that
while wall-clock scaling on fixed-density GPUs eventually reflects
hardware occupancy limits at very large $T$, the sub-linear crossover
points remain distinct. This empirically confirms the theoretical
$O(\log T)$ parallel complexity established in
Section~\ref{sec:rational_features}.

\subsection{Algorithmic Generalization: Long-Integer Addition}
\label{subsec:exp_addition}

\paragraph{Task Setup.} To test the model's ability to learn discrete,
discontinuous logic (in contrast to the smooth group operations of
counting), we evaluated it on Long-Integer Addition. The task is to add
two $L$-digit numbers digit-by-digit. This requires implementing a
``Full Adder'' state machine, where the carry bit must be:
\begin{itemize}
\item Generated (State $\to 1$) if the sum $>9$.

\item Propagated (State $\to$ State) if the sum $=9$.

\item Killed (State $\to 0$) if the sum $<9$.
\end{itemize}
Models are trained on short numbers ($L \in [10, 40]$) and evaluated
on lengths up to $L=1000$.

\paragraph{Model Configuration.} We compare the Rational Transductor
against a standard Transformer. Both models share a backbone size of 2
layers with $d_{model}=32$ and 4 heads:
\begin{itemize}
\item Transformer: Used standard learned positional encodings
  with a capacity of 5000 positions.

\item Rational Transductor: Following the recipe for discrete logic,
  we used the \emph{Universal Rational Transductor} configuration,
  effectively routing to the Stochastic component ($d_{rat}=4$). The
  transition matrices are parameterized as column-stochastic matrices
  (via softmax) to strictly preserve the $l_1$-norm of the state,
  enforcing a probabilistic automaton structure ideal for switching
  logic. Crucially, we disable positional encodings, forcing the model
  to rely solely on the rational state to track the carry bit.
\end{itemize}

\begin{figure}[H]
    \centering
    \includegraphics[width=0.7\linewidth]{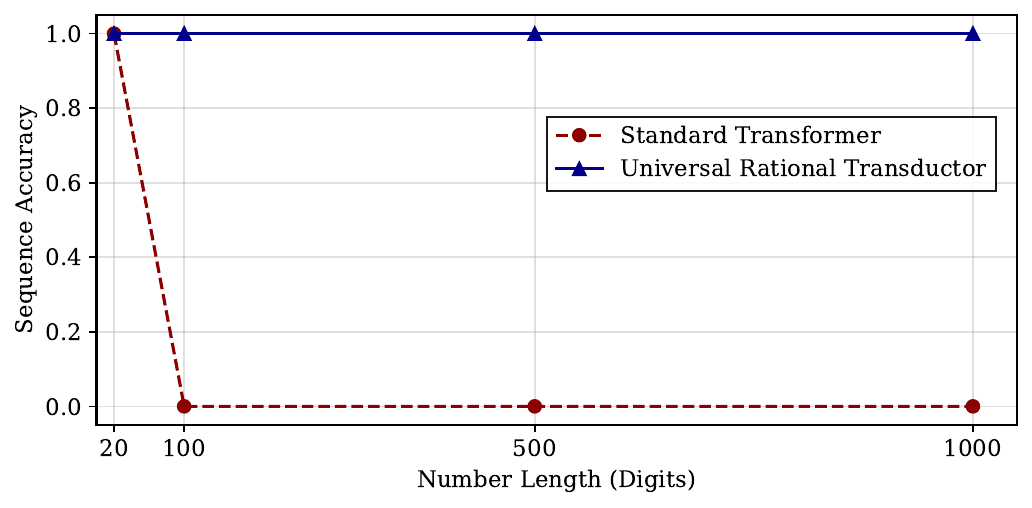}
    \caption{Experiment D: Long-Integer Addition.  Sequence-level
      accuracy (exact match of the entire sum). The Standard
      Transformer (red) fits the training distribution ($L=20$) but
      fails completely on longer sequences ($0\%$ at $L=100$), unable
      to propagate carry bits over long distances. The Universal
      Rational Transductor (blue), leveraging a Stochastic head,
      learns the exact finite state automaton for addition. It
      generalizes \emph{perfectly} to $L = 1000$ digits, demonstrating
      that the architecture can autonomously learn the correct
      switching logic for the task.}
    \label{fig:addition_experiment}
\end{figure}

\paragraph{Results.} Figure~\ref{fig:addition_experiment} shows the
sequence-level accuracy. The Standard Transformer fails to generalize,
dropping to $0\%$ accuracy immediately outside the training
window. Attention mechanisms struggle to maintain the hard sequential
dependency of a carry bit over hundreds of steps when trained only on
short sequences. In contrast, the Universal Rational Transductor
achieves \emph{100\% accuracy} across all lengths up to $L =
1000$. This confirms that by providing a diverse set of dynamic
kernels (here, a Stochastic head), the model can autonomously learn
and execute complex algorithmic rules that require both infinite
memory conservation and discrete state switching.

\subsection{Quantitative Generalization and Precision}
\label{sec:quantitative_exp}

While the previous examples (Parity, Addition) demonstrated the
ability to track discrete states, they did not test the capacity for
\textit{quantitative} accumulation over unbounded domains. As
established in \citep{CortesMohri2000}, weighted automata can
represent functions mapping sequences to numerical values (e.g.,
polynomial evaluation).

We consider the task of \emph{Base-2 Integer Evaluation}. The model
must map a binary string $x \in \{0, 1\}^L$ to its integer value
$f(x) = \sum_{t=1}^L x_t \cdot 2^{L-t}$. This requires the hidden
state to grow exponentially with sequence length
($v_t = 2 v_{t-1} + x_t$), serving as a stress test for the
``Linearity vs. Saturation'' hypothesis.

\paragraph{Experimental Setup.} We trained models on sequences of
length $L=64$. To strictly isolate architectural limitations from
hardware precision limits, all operations were performed in
\emph{Double Precision (Float64)}.  To fit within the dynamic range of
double precision floating point arithmetic, target integer values were
normalized to the unit interval $[0, 1]$ via scaling by $2^{-L}$.
We note that for sequence lengths $L > 53$, the integer values $2^L$
exceed the 53-bit significand precision of IEEE 754 Double Precision
(Float64). The reported "machine precision" MSE in
Figure~\ref{fig:arithmetic_eval} reflects the achieved numerical
precision of the model under double-precision arithmetic.
We compared the Rational Transductor against a Transformer and an
LSTM:
\begin{itemize}
\item Rational Transductor: We used a general Affine WFA
  parameterization (1 layer, $d=12$). Unlike the structured heads in
  Section~\ref{subsec:exp_addition}, here the model learns
  unconstrained transition matrices
  $\bA_0, \bA_1 \in \Rset^{d \times d}$ and bias vectors
  $\bb_0, \bb_1 \in \Rset^d$ from scratch, implementing the
  update $h_t = \bA_{x_t} h_{t-1} + \bb_{x_t}$. Initialization is set
  near identity to facilitate gradient flow.
  While a specialized scaling head could also solve this task, we used
  general matrices to demonstrate that the architecture can discover
  expansive dynamics ($|\lambda| > 1$) without explicit engineering.

\item Transformer: 3 layers, $d_{model}=32$, 4 heads, relative
  positional encoding.

\item LSTM: 1 layer, hidden dimension $d=32$.
\end{itemize}
Despite the Transductor being significantly smaller in parameters
($<1\text{k}$ vs $\approx 12\text{k}$), it is the only model
theoretically capable of unbounded linear growth.

\begin{figure}[H]
  \centering
  \includegraphics[width=0.4\linewidth]{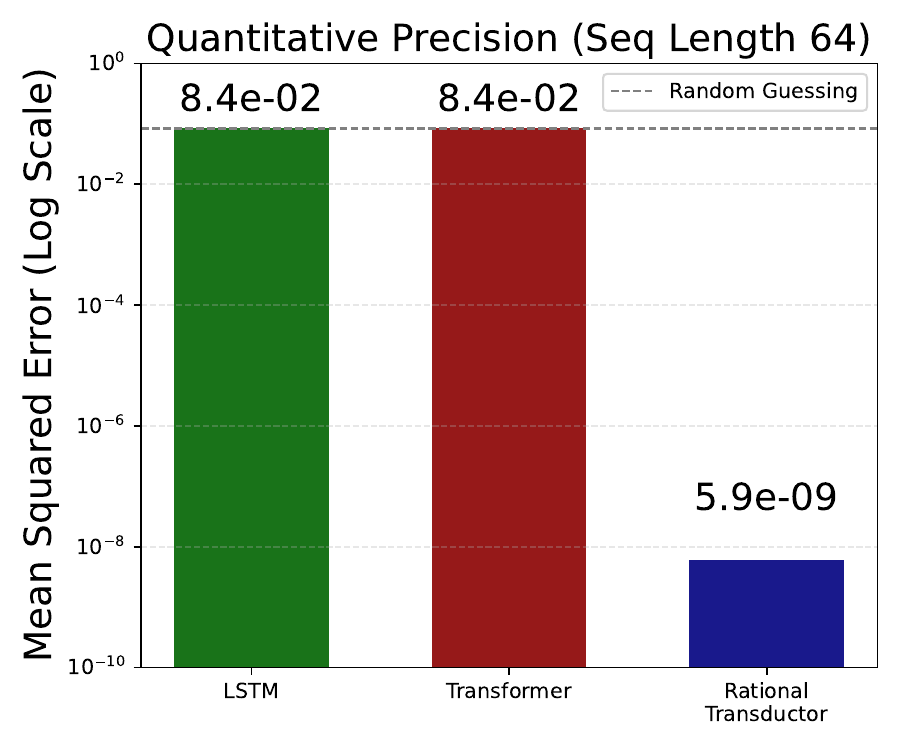}
  \caption{Quantitative Precision (Base-2 Evaluation). We trained
    models to compute the integer value of binary strings of length
    $L=64$ using double precision. Standard architectures (LSTM,
    Transformer) fail completely ($MSE \approx 8.4 \times 10^{-2}$),
    collapsing to the random-guessing baseline. The Rational
    Transductor learns the exact affine recurrence, achieving
    near-perfect precision ($MSE \approx 5.9 \times 10^{-9}$).}
    \label{fig:arithmetic_eval}
\end{figure}

\paragraph{Results.} As shown in Figure \ref{fig:arithmetic_eval},
this task reveals a sharp expressivity gap. Both the LSTM and
Transformer fail completely, converging to the variance of the dataset
($MSE \approx 8.4 \times 10^{-2}$). For the LSTM, the gradient signal
vanishes through 64 layers of saturating non-linearities ($\tanh$);
for the Transformer, the attention mechanism cannot resolve positional
weights spanning 19 orders of magnitude
($2^{64} \approx 1.8 \times 10^{19}$) amidst softmax noise.

In contrast, the Rational Transductor exploits its linear recurrence
to propagate gradients without attenuation, learning the exact Horner
scheme to high numerical precision ($MSE \approx 5.9 \times
10^{-9}$). We attribute this error floor primarily to optimization
convergence limits and accumulated algorithmic round-off noise rather
than the theoretical mantissa limits of the IEEE 754 Float64 standard.

\section{Conclusion}

We introduced the \emph{Rational Transductor}, a hybrid architecture
that bridges the gap between the semantic flexibility of Transformers
and the rigid, state-dependent logic of formal languages. By
augmenting self-attention with a linear, matrix-valued recurrence, we
have showed that it is possible to break the expressivity and
learnability barriers that constrain standard Transformers.

Our theoretical analysis aligns this approach with the Krohn-Rhodes
decomposition, demonstrating that Transductors can structurally
represent the full hierarchy of regular languages—from aperiodic
counters to cyclic groups—mechanisms that standard models in
$\AC^0$ (hard attention) cannot express, and models in
$\TC^0$ (soft attention) fail to learn robustly.

Empirically, this theoretical advantage translates into solved length
generalization: Rational Transductors achieve 100\% accuracy on tasks
like parity and modular addition where standard Transformers fail
catastrophically. As a theoretically tractable instantiation of the
emerging class of Linear RNNs, the Transductor framework offers a
rigorous path toward neuro-symbolic models that combine the best of
connectionist learning and algebraic reasoning.

Furthermore, we introduced the Universal Transductor, which leverages
a mixture of orthogonal and stochastic kernels to autonomously select
the optimal dynamical bias for the task. Our theoretical analysis
shows that this architecture constitutes a neural instantiation of the
fundamental trinities of formal language theory:

\begin{itemize}
\item \textbf{Algebraic Completeness:} Via the Krohn-Rhodes
  decomposition, the Rational Head implements the missing
  \emph{group-theoretic} components (cyclic counters), while the
  Transformer implements the \emph{aperiodic} components (thresholds
  and resets), fulfilling the division of labor required for universal
  regular processing.

\item \textbf{Logical Completeness:} The model bridges the gap between
  the limitations of First-Order Logic ($\FO[<]$)
  (characteristic of hard-attention Transformers) and the full
  expressivity of Weighted MSO, enabling precise symbolic reasoning
  without sacrificing learnability.

\item \textbf{Statistical Generalization:} We proved that the model's
  reliability is guaranteed by a dual mechanism: \emph{contractive
    stability} for fading-memory tasks, and \emph{algebraic exactness}
  (via orthogonal parameterization) for infinite-memory tasks like
  Parity, ensuring robust performance independent of sequence length.
\end{itemize}

Ultimately, Rational Transductors represent a step towards
\emph{bicameral} foundation models, architectures that are as capable
of precise, infinite-horizon sequential reasoning as they are of
fluent semantic generation. By bridging the "Regular Gap" and
capturing $\text{NC}^1$-complete reasoning, this framework offers a
mathematically grounded solution to the sequential logic failures
inherent in standard Transformers. Furthermore, because the
architecture maintains a strictly input-driven recurrence with
$O(L + \log T)$ parallel depth, it is uniquely optimized for
ultra-long-context modeling where quadratic bottlenecks remain
prohibitive. Having established their theoretical completeness and
empirical stability, the natural next step is evaluating transductors
via large-scale pre-training on massive datasets.

\newpage
\section*{Acknowledgments}

I thank Corinna Cortes and Will Merrill for very helpful comments on
earlier drafts of this paper.

\bibliography{rbt}
\bibliographystyle{abbrvnat}

\clearpage

\newpage
\appendix

\section{Theoretical Background: Weighted Automata and
  Rational Power  Series}
\label{app:theoretical_background}

In this appendix, we place the architecture of Rational Transductors
within the broader theoretical framework of Weighted Finite Automata
(WFAs) \citep{mohri2009weighted} and Rational Power Series
\citep{SalomaaSoittola1978,BerstelReutenauer1988,KuichSalomaa1986}.
We define the specific class of series computed by our model and
outline the fundamental theorems that guarantee their expressivity and
learnability.

\subsection{Rational Power Series over a Field}

Let $\Sigma$ be a finite alphabet and $\Sigma^*$ be the free monoid
generated by $\Sigma$. A formal power series $S$ with coefficients in
the field of real numbers $\Rset$ is a mapping
$S\colon \Sigma^* \to \Rset$. The value of $S$ on a sequence
$x \in \Sigma^*$ is denoted by $(S, x)$. The set of all such formal
power series is denoted by $\Rset\llangle \Sigma^* \rrangle$. The
subset of \emph{Rational Power Series}, denoted
$\Rset^{\text{rat}}\llangle \Sigma^* \rrangle$, is the smallest
subalgebra of $\Rset\llangle \Sigma^* \rrangle$ containing all
polynomials (series with finite support) that is closed under the
following rational operations:
\begin{itemize}

\item \textbf{Sum:} $(S + T, x) = (S, x) + (T, x)$;

\item \textbf{Cauchy Product:}
  $(S \cdot T, x) = \sum_{uv=x} (S, u)(T, v)$

\item \textbf{Kleene Star (Closure):} $S^* = \sum_{n=0}^\infty S^n$,
  provided $(S, \e) = 0$ to ensure convergence.

\end{itemize}

\subsection{Linear Representations and WFAs}

A fundamental result in the theory of weighted automata is the
Sch\"utzenberger representation theorem
\citep{Schutzenberger1961}, which establishes that rational
series are exactly those recognizable by finite weighted automata.
\begin{definition}[Linear Representation]
  A linear representation of dimension $d$ over $\Rset$ is a triple
  $(\balpha, \{\sfM_\sigma\}_{\sigma \in \Sigma}, \bbeta)$, where:
\begin{itemize}
\item $\balpha \in \Rset^d$ is the initial weight vector (column).
\item $\sfM_\sigma \in \Rset^{d \times d}$ are the transition matrices
  for each $\sigma \in \Sigma$.
\item $\bbeta \in \Rset^d$ is the final weight vector (column).
\end{itemize}
\end{definition}

This representation computes a series $S$ defined by:
\begin{equation}
  (S, x) = \bbeta^\top \sfM_{x_T} \dots \sfM_{x_1} \balpha.
\end{equation}

\subsection{Fundamental Results}

\paragraph{The Hankel Matrix and Fliess' Theorem.}
A central tool in the analysis of rational series is the
\emph{Hankel matrix} $H_S$, an infinite matrix indexed by pairs of
strings $(u, v) \in \Sigma^* \times \Sigma^*$, where the entry at
$(u, v)$ is $(S, uv)$.
\begin{theorem}[\citep{Fliess1974}]
  A series $S$ is rational if and only if its Hankel matrix $H_S$ has
  finite rank. Furthermore, the rank of $H_S$ is equal to the
  dimension $d_{\min}$ of the minimal linear representation of $S$.
\end{theorem}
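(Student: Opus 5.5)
We prove both directions of Fliess' theorem using the prefix/suffix factorization of the Hankel matrix.

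\textbf{Rational implies finite rank.} By the Sch\"utzenberger representation theorem, a rational $S$ has a linear representation $(\balpha, \{\sfM_\sigma\}, \bbeta)$ of some dimension $d$. Define the forward map $P\colon \Sigma^* \to \Rset^d$ by $P(u) = \sfM_u \balpha$ and the backward map $Q(v) = \sfM_v^\top \bbeta$. Since $\sfM_{uv} = \sfM_v \sfM_u$ under the paper's ordering convention,
\[
(S, uv) = \bbeta^\top \sfM_v \sfM_u \balpha = Q(v)^\top P(u).
\]
Hence $H_S$ factors as the product of an infinite $d$-column matrix and an infinite $d$-row matrix, so $\mathrm{rank}(H_S) \le d$. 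Applying this to a minimal representation gives $\mathrm{rank}(H_S) \le d_{\min}$.

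\textbf{Finite rank implies rational, with dimension $\mathrm{rank}(H_S)$.} Let $r = \mathrm{rank}(H_S)$. Index rows by prefixes $u$ and columns by suffixes $v$, and work in the row space $V \subseteq \Rset^{\Sigma^*}$. Let $f_u$ denote the row $v \mapsto (S, uv)$. For each $\sigma$, the shift $f_u \mapsto f_{u\sigma}$ is well defined and linear on $V$, since $f_{u\sigma}(v) = f_u(\sigma v)$. More generally, the left-shift operator $(\tau_\sigma g)(v) = g(\sigma v)$ acts on all of $\Rset^{\Sigma^*}$, and $V$ is invariant under it because $\tau_\sigma f_u = f_{u\sigma}$ and $V$ is spanned by the $f_u$.

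Choose prefixes $u_1, \dots, u_r$ whose rows $f_{u_1}, \dots, f_{u_r}$ form a basis of $V$. Take $\sfM_\sigma$ to be the matrix of $\tau_\sigma$ restricted to $V$ in this basis, take $\balpha$ to be the coordinates of $f_\varepsilon$, and take $\bbeta$ to be the linear functional $g \mapsto g(\varepsilon)$ expressed in coordinates. Then by induction on $|x|$, the coordinates of $f_x$ equal $\sfM_x \balpha$, so
\[
(S, x) = f_x(\varepsilon) = \bbeta^\top \sfM_x \balpha.
\]
This is a representation of dimension $r$, so $S$ is rational by Sch\"utzenberger and $d_{\min} \le r$. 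Combining with the first direction yields $d_{\min} = \mathrm{rank}(H_S)$.

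The main obstacle is bookkeeping: checking that $\tau_\sigma$ is well defined on $V$ as a linear map, and not merely on the spanning set $\{f_u\}$. This holds because $\tau_\sigma$ is a globally defined linear map on $\Rset^{\Sigma^*}$ that preserves $V$. The other step needing care is matching the paper's product order $\sfM_x = \sfM_{x_T} \cdots \sfM_{x_1}$, which we check in the induction step.
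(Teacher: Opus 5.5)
Your proof is correct. Note that the paper gives no proof of this statement: it appears only as appendix background with a citation to Fliess (1974), so the comparison is with the standard literature argument, which is essentially the one you give.

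The forward direction is the rank factorization $(S,uv)=Q(v)^\top P(u)$, and you correctly use $\sfM_{uv}=\sfM_v\sfM_u$ under the paper's right-to-left convention. The converse is the classical quotient construction. Your rows $f_u$ are the left quotients $u^{-1}S$, $V$ is their span, and $\tau_\sigma$ is $g\mapsto\sigma^{-1}g$. Finite Hankel rank therefore says exactly that the span of all quotients of $S$ is a finite-dimensional subspace of $\Rset^{\Sigma^*}$, automatically stable under every $\sigma^{-1}$. This is the usual stable-subspace characterization of recognizability. You handle the one genuine subtlety, well-definedness on $V$, correctly by defining $\tau_\sigma$ globally. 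The induction $[f_{x\sigma}]=\sfM_\sigma[f_x]$ also matches $\sfM_{x\sigma}=\sfM_\sigma\sfM_x$.

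Two points are used implicitly and deserve a sentence each:
\begin{itemize}
\item You take $\mathrm{rank}(H_S)$ to be the dimension of the row space. For an infinite matrix this coincides with the maximal rank of its finite submatrices, so either definition works.
\item The paper defines ``rational'' by closure under sum, Cauchy product and star, so both of your directions pass through Sch\"utzenberger's theorem. What you actually prove is the recognizable-series version together with $\mathrm{rank}(H_S)=d_{\min}$, which is the real content of Fliess's theorem.
\end{itemize}

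Compare this with the spectral viewpoint the paper alludes to via Balle--Mohri. There one fixes a finite complete basis $(\mathcal{P},\mathcal{S})$, takes any rank factorization $H_{\mathcal{P},\mathcal{S}}=FB$, and reads off the transitions as $F^{+}H_\sigma B^{+}$, up to transposition for the paper's column-vector convention. Your choice of $r$ actual rows as a basis corresponds to the particular factorization in which $B$ consists of those rows and $F$ holds the coordinates of every row in that basis. That is the cleanest route to existence and minimality. Allowing arbitrary factorizations (e.g.\ via SVD) is what makes the construction numerically robust and usable for learning. All of the resulting minimal representations agree up to similarity.
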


\paragraph{Minimization and Learning.}
For any rational series, the minimal linear representation is unique
up to a similarity transformation (see \citep{BalleMohri2015} for a
short proof and illustration). Furthermore, spectral learning
frameworks \citep{balle2012spectral} demonstrate that rational series
can be learned efficiently under specific rank conditions via the
singular value decomposition of the Hankel matrix.

\subsection{Remarks on Terminology and Graph Interpretation}

To clarify the relationship between our linear algebraic definition
and standard automata theory, we provide the following remarks.

\textbf{Equivalence of Dimension and State Count.}
The definition of a WFA via a linear representation of dimension $d$
is mathematically isomorphic to a Weighted Finite Automaton with
exactly $d$ states.
\begin{itemize}
\item The dimension $d$ corresponds to the set of discrete states
  $Q = \{q_1, \dots, q_d\}$.
\item The entry $(\sfM_\sigma)_{ij}$ corresponds to the weight of the
  edge transitioning from state $q_j$ to state $q_i$ upon reading
  symbol $\sigma$.
\item Consequently, Fliess' theorem can be equivalently stated as: the
  rank of the Hankel matrix equals the number of states in the minimal
  WFA recognizing the series.
\end{itemize}

\textbf{The State Vector as a Weight Distribution.}
In the context of deep learning, the vector $h_t \in \Rset^d$ is often
referred to as the ``hidden state.'' In the automata theoretic view,
this vector represents the \textit{distribution of accumulated
  weights} over the $d$ states of the automaton at time $t$.
Specifically, the $i$-th component $h_{t,i}$ is the sum of weights of
all paths in the automaton ending at state $q_i$ given the input
prefix $x_{1:t}$.

\textbf{Deterministic Computation in Vector Space.}  While Weighted
Finite Automata are not generally determinizable in the graph sense
(i.e., transforming into an equivalent WFA with only one non-zero path
per string) \citep{mohri2009weighted}, the linear recurrence
$\bh_t = \sfM_{x_t} \bh_{t-1}$ constitutes a deterministic update in
the vector space $\Rset^d$. This ensures that the Rational Transductor
architecture remains deterministic and efficient to compute, despite
the underlying WFA potentially representing non-deterministic weighted
paths.

Finally, WFAs have been successfully used in a variety of
applications, including speech recognition
\citep*{MohriPereiraRiley2002}. The OpenFST software library
\citep*{AllauzenRileySchalkwykSkutMohri2007} provides a very general
and efficient implementation of the representation and algorithms
related to WFAs.

\newpage
\section{Experimental Details and Hyperparameters}
\label{app:hyperparameters}

\subsection{Hyperparameter Specifications}

To ensure reproducibility, we detail the exact hyperparameters used
for the experiments in Section~\ref{sec:experiments}. All models were
implemented in PyTorch and trained on a single NVIDIA T4 or A100
GPU. Optimization was performed using AdamW
\citep{loshchilov2017decoupled} or Adam \citep{kingma2014adam}.

Table \ref{tab:hyperparams} summarizes the configurations for all four
synthetic tasks. Note that for the \emph{Long-Integer Addition}
task, we used a curriculum learning strategy where the training
sequence length was sampled uniformly from $U[10, 40]$ at each step to
encourage robust generalization.

\begin{table}[h]
\centering
\caption{Hyperparameters for Rational Transductor Experiments. (RT:
  Rational Transductor, TF: Transformer.)}
\label{tab:hyperparams}
\resizebox{\textwidth}{!}{%
\begin{tabular}{lcccc}
\toprule
  \textbf{Config / Task} & \textbf{Modulo Counting} & \textbf{Length Gen.}
  & \textbf{Long Addition} & \textbf{Base-2 Eval} \\
 & (Sec. 7.1) & (Sec. 7.2) & (Sec. 7.4) & (Sec. 7.5) \\
\midrule
\multicolumn{5}{l}{\textit{Model Architecture}} \\
Hidden Dim ($d_{\text{model}}$) & 32 & 32 & 32 & 12 (Rational Transductor) / 32 (Transformer) \\
Rational State Dim ($d_{\text{rat}}$) & 8 & 8 & 4 & 12 \\
Layers & 2 & 2 & 2 & 1 (Rational Transductor) / 3 (Transformer) \\
Heads & 4 & 4 & 4 & 4 \\
  Parameterization & Orthogonal (Cayley) & Orthogonal (Cayley) & Stochastic (Softmax)
                           & Affine (General) \\
\midrule
\multicolumn{5}{l}{\textit{Optimization}} \\
Seq Length ($L_{\text{train}}$) & 50 & 40 & $U[10, 40]$ & 64 \\
Batch Size & 64 & 64 & 64 & 32 \\
Optimizer & AdamW & AdamW & AdamW & Adam \\
  Learning Rate & $5 \times 10^{-3}$ & $5 \times 10^{-3}$ & $5 \times 10^{-3}$
                           & $1 \times 10^{-2}$ \\
Scheduler & None & Cosine Annealing & None & Cosine Annealing \\
Gradient Clip & 1.0 & 1.0 & 1.0 & 1.0 \\
Training Steps & 3,000 Steps & 3,000 Steps & 4,000 Steps & 3,600 Steps (60 Epochs) \\
Loss Function & Cross Entropy & Cross Entropy & Cross Entropy & MSE \\
Precision & Float32 & Float32 & Float32 & Float64 \\
\bottomrule
\end{tabular}
}
\end{table}

\subsection{Statistical Significance and Stability}

\textbf{Stability of Initialization:} The Near-Identity Initialization
described in Section~\ref{subsec:initialization} provides a highly
stable starting point for learning algebraic structures. In our
experiments (specifically Subsection~\ref{subsec:exp_regular_gap} and
Subsection~\ref{subsec:exp_length_gen}), the Rational Transductor
converged to 100\% training accuracy within the first 500-1000 steps
in every trial. The loss curves exhibit a deterministic drop
characteristic of solving a convex-like problem in the lifted state
space, rather than the high-variance grokking often seen in standard
Transformers on these tasks.

\textbf{Variance:} We repeated all synthetic experiments across $N=5$
random seeds to ensure statistical significance.
\begin{itemize}
  
\item \textbf{Rational Transductor:} Achieved 100\% accuracy on the
  training distribution in 5/5 runs for Modulo Counting, Length
  Generalization, and Long-Integer Addition. For the Long-Integer
  Addition task, sequence-level standard deviation was negligible
  ($<0.05\%$). The Base-2 Evaluation task exhibited high stability
  across trials with an MSE standard deviation of
  $< 1.0 \times 10^{-10}$.

\item \textbf{Baseline Transformer:} Consistently failed to generalize
  beyond $L_{\text{train}}$, with accuracy collapsing to the random
  baseline (20\%) in all seeds, exhibiting negligible variance in its
  failure mode.
\end{itemize}
We conclude that the reported performance gap is due to the
fundamental difference in inductive bias (Recurrent vs. Attention),
not initialization luck.

\end{document}